\newcommand{\unit}{\ensuremath{\mathbf{U}}}
\newcommand{\kb}{\ensuremath{\mathcal{S}}}
\newcommand{\db}{\ensuremath{D}}
\newcommand{\pr}{\ensuremath{\mathbf{pr}}}
\newcommand{\C}{\ensuremath{\mathsf{C}}}
\newcommand{\PS}{\ensuremath{\mathsf{P}}}
\newcommand{\V}{\ensuremath{\mathsf{V}}}
\newcommand{\Dom}{\ensuremath{\mathbb{D}}}
\newcommand{\WCQ}{\ensuremath{\mathrm{N\mathbb{C}F}}}
\newcommand{\FreeConst}{\mathit{Fr}}
\newcommand{\Genes}{\mathit{Ge}}
\newcommand{\mybox}{{\scriptsize\ensuremath{\hfill\blacksquare}\normalsize}}
\newcommand{\nop}[1]{}
\newcommand{\pietro}[1]{\nop{#1}}
\keywords{Logic-based framework, Nexus of similarity, Knowledge bases, Computational complexity}
\begin{document}

%% --- TITOLO E AUTORI (Formato LMCS) ---
\title[Navigating Taxonomic Expansions]{Navigating Taxonomic Expansions of Entity Sets Driven by Knowledge Bases}

\author[G.~Amendola]{Giovanni Amendola$^{(a)}$}
\author[P.~Cofone]{Pietro Cofone$^{(a)}$}
\author[M.~Manna]{Marco Manna$^{(a)}$}
\author[A.~Ricioppo]{Aldo Ricioppo$^{(a),(b)}$}

\address[a]{University of Calabria, Rende, Italy}
\email{name.surname@unical.it}

\address[b]{University of Cyprus, Nicosia, Cyprus}
\email{ricioppo.aldo@ucy.ac.cy}

%% --- ABSTRACT ---
\begin{abstract}
Recognizing similarities among entities is central to both human cognition and computational intelligence.
Within this broader landscape, Entity Set Expansion is one prominent task aimed at taking an initial set of (tuples of) entities and identifying additional ones that share relevant semantic properties with the former ---potentially repeating the process to form increasingly broader sets.
However, this ``linear'' approach does not unveil the richer ``taxonomic'' structures present in knowledge resources.
A recent logic-based framework introduces the notion of an \emph{expansion graph}: a rooted directed acyclic graph where each node represents a semantic generalization labeled by a logical formula, and edges encode strict semantic inclusion.
This structure supports {\em taxonomic expansions} of entity sets driven by knowledge bases.
Yet, the potentially large size of such graphs may make full materialization impractical in real-world scenarios.
To overcome this, we formalize reasoning tasks that check whether two tuples belong to comparable, incomparable, or the same nodes in the graph.
Our results show that, under realistic assumptions--- such as bounding the input or limiting entity descriptions--- these tasks can be implemented efficiently.
This enables local, incremental navigation of expansion graphs, supporting practical applications without requiring full graph construction.
\end{abstract}

\maketitle

%% --- INIZIO DEL CORPO DEL TESTO ---

\section{Introduction}\label{sec:intro}

\subsection{Context}

Similarities play a key role in many real-world scenarios, underpinning both cognitive processes and computational tasks~\cite{DBLP:journals/isci/HussainBWHJ23}. Humans routinely identify similarities and differences among entities or sets of entities to categorize objects, assess qualities, make informed choices, and stimulate creative thinking. At the heart of these activities lies the recognition of interconnected properties shared by entities, hereinafter referred to as \emph{nexus of similarity}.
%\aldo{forse è troppo forte affermare che questo è il nome senza una citazione}. \marco{aggiunto hereinafter per dire che lo facciamo noi}
This capacity extends to both tangible and intangible domains, shaping reasoning patterns and influencing behaviors in diverse contexts, from everyday judgments to strategic planning.

Researchers from various fields have proposed a variety of approaches to measure \emph{semantic similarity} between entities, typically expressed as descriptive ratings or numerical scores~\cite{GomaaFahmy13}. For instance, modern systems can assign a high similarity score to the pair $\langle\mathsf{Paris}\rangle$ and $\langle\mathsf{Rome}\rangle$ by taking into account somehow that both are ``European cities'', ``places situated on rivers'', ``capitals'', and so forth.

Inspired by ``Google Sets''~\cite{GoogleSets07}, considerable academic and commercial efforts have been also devoted to providing solutions for expanding a given set of entities with similar ones. The main tasks are: 
\emph{entity set expansion}~\cite{DBLP:conf/emnlp/PantelCBPV09}, 
\emph{entity recommendation}~\cite{DBLP:conf/semweb/BlancoCMT13}, \emph{tuples expansion}~\cite{DBLP:conf/iiwas/ErAB16}, or \emph{entity suggestion}~\cite{DBLP:conf/ijcai/ZhangXHWWW17}. For example, one can expand the set $\unit = \{\langle\mathsf{Paris}\rangle, \langle\mathsf{Rome}\rangle\}$ and obtain $\unit' = \unit \cup \{\langle\mathsf{Amsterdam}\rangle\}$; then, one can reapply the process starting from $\unit'$ to obtain the set $\unit'' = \unit' \cup \{\langle\mathsf{Brussels}\rangle$,
$\langle\mathsf{Rio~de~Janeiro}\rangle$,
$\langle\mathsf{Vienna}\rangle\}$. Indeed, all these entities share one or more of the aforementioned properties, e.g., ``places situated on rivers'' or ``European cities''.

Complementary approaches, ranging from Description Logics~\cite{DBLP:conf/aaai/CohenBH92} to Semantic Web~\cite{DBLP:journals/ws/ColucciDGS16,DBLP:conf/semweb/PetrovaKGH19} and Database Theory~\cite{DBLP:conf/pods/CateDFL23}, studied the task of {\em explaining} (i.e., recognizing and formally expressing) nexus of similarity (a.k.a. commonalities) between entities within a Knowledge Base (KB)~\cite{DBLP:conf/aaai/CohenBH92,DBLP:conf/ijcai/BaaderKM99,DBLP:journals/japll/BaaderST07,DBLP:journals/ws/ColucciDGS16,DBLP:conf/esws/HassadGJ17,DBLP:conf/semweb/PetrovaSGH17,DBLP:conf/semweb/PetrovaKGH19,DBLP:conf/aaai/JungLW20,DBLP:journals/ai/JungLPW22,DBLP:conf/pods/CateDFL23,DBLP:journals/data/ColucciDS24,DBLP:journals/access/ColucciDSSS25}. 
\noindent Such approaches vary across some key dimensions: the form of the input (e.g., pairs of entities, sets of entities, sets of entity tuples); the type of KBs they can handle (e.g., Description Logic KBs, RDF documents, relational databases); the scope of knowledge considered for each input item (e.g., the entire KB or selected excerpts); and the specific formalism to express commonalities (e.g., Description Logics Concepts, r-graphs, (U)CQs, rooted-CQs, SPARQL queries). For example, to express that the tuples of $\unit = \{\langle\mathsf{Paris}\rangle, \langle\mathsf{Rome}\rangle\}$ are ``European cities'', 
one can use formulas of the following shapes:
\nop{\[
\mathsf{city} \sqcap \exists \, \mathsf{located}.\{\mathsf{Europe}\} \ \ \  \mbox{or}   \ \ \ 
x \leftarrow   \ 
		\mathsf{city}(x),
		\mathsf{located}(x,\mathsf{Europe}) \ \ \  \mbox{or}   \ \ \ 
x \leftarrow   \ 
		\mathsf{isa}(x, \mathsf{city}),
		\mathsf{located}(x,\mathsf{Europe}).
\]}
\begin{gather*}
    \mathsf{city} \sqcap \exists \, \mathsf{located}.\{\mathsf{Europe}\} \\
    \text{or} \notag \\
    x \leftarrow \mathsf{city}(x), \mathsf{located}(x,\mathsf{Europe}) \\
    \text{or} \notag \\
    x \leftarrow \mathsf{isa}(x, \mathsf{city}), \mathsf{located}(x,\mathsf{Europe}).
\end{gather*}

To explain ---in a comprehensive way--- the nexus of similarity between tuples of entities, however, it is crucial to have a suitable formal semantics on top of the given formalism  expressing commonalities. 
%
%it is crucial to have a way to focus on some relevant knowledge about these tuples and ;
%
%TODO: qui occore introdurre in perche è importante caratterizzare e che si intende: the goal is to comprehensively express the nexus of similarity
%between the elements of U0. 
%
%In light of the research carried out so far, 
To this aim, Amendola et al.~\cite{AMENDOLA2024120331} designed a unifying general logic-based framework, endowed with an appropriate semantics, for {\em characterizing} nexus of similarity within KBs, namely  explaining them in a formal and comprehensive way. In particular, this framework is able to: 
$(i)$ accommodate different types of KBs;
 $(ii)$ adopt a notion of \emph{summary selector} to focus on relevant knowledge about the input items;   
$(iii)$ handle, in input, sets of entity tuples;
$(iv)$ ensure that nexus explanations and nexus characterizations always exist, and admit concise and human understandable equivalent representations; and
$(v)$ extend the classical notion of linear expansions (i.e., $\unit \subset \unit' \subset \unit'' \subset \ldots$), to generalize (tuples of) entities in a taxonomic way.
%as humans intuitively do in cognitive processes.
%   
More precisely, the framework is based on the notion of a  \emph{selective KB}, denoted by $\kb = (K,\varsigma)$, which enriches any knowledge base $K$ with a so-called \emph{summary selector} $\varsigma$. For each tuple $\tau$ of entities, $\varsigma$ identifies a relevant portion of the knowledge ``entailed'' by $K$ that meaningfully describes $\tau$.

To formally express commonalities in this setting, the framework introduces a dedicated \emph{nexus explanation language}, called $\WCQ$, equipped with suitable semantics. Within this language, one can define $\WCQ$-formulas playing the role of \emph{explanations} and \emph{characterizations} for tuples of entities, together with succinct forms of the latter called \emph{canonical characterizations} and \emph{core characterizations}. Importantly, all these formulas are guaranteed to exist and are effectively computable. 
In addition, the framework defines an \emph{expansion graph}, which generalizes the classical notion of linear expansion by capturing taxonomic structures that naturally emerge from similarity relations.
These structures can be intuitively viewed as \emph{taxonomic entity set expansions}, reflecting how humans group and generalize concepts along meaningful semantic hierarchies.
Key reasoning tasks related to the computation of characterizations and expansions are also addressed, with results showing tractability under practical assumptions.%\footnote{A prototype of the framework proposed by Amendola et al.~\cite{AMENDOLA2024120331} is available at~\url{https://prode-2.alviano.net}.}

\newcommand{\tmpframea}[1]{\colorbox{Cyan!5}{#1}}
\begin{figure*}[t!]
\centering
\begin{tikzpicture}[x=3.35 cm,y=1.46cm]
	\SetUpEdge[lw = 0.5pt, color = black, labeltext = black!75, labelcolor = white]
	
	\GraphInit[vstyle=Normal] %\SetGraphUnit{3}
	
	\tikzset{VertexStyle/.style = {
			shape = rectangle, draw, color=black!75,
			font=\footnotesize\sffamily,
			fill = LimeGreen!25}
	}
	
	\Vertex{Discovery Cove}
	\SO(Discovery Cove){Florida}
	\SOWE(Discovery Cove){US}
	\tikzset{VertexStyle/.style = {
			shape = rectangle, draw, color=black!75,
			font=\footnotesize\sffamily,
			fill = LimeGreen!25}%Yellow!50}
}
\EA(Discovery Cove){theme park}
\NOEA(theme park){amusement park}

\tikzset{VertexStyle/.style = {
		shape = rectangle, draw, color=black!75,
		font=\footnotesize\sffamily,
		fill = LimeGreen!25}
}
\SOEA(Discovery Cove){Epcot}
\EA(amusement park){Prater}
\SO(Prater){Austria}
\WE(Discovery Cove){California}
\NOWE(Discovery Cove){Pacific Park}
\SO(amusement park){Gardaland}
\SO(Gardaland){Italy}
\SO(Austria){Leolandia}

\tikzset{EdgeStyle/.style={->}}
\tikzset{EdgeStyle/.append style={font=\sffamily\footnotesize}}

\Edge[label=located](Discovery Cove)(Florida)
\tikzset{EdgeStyle/.style={->, font=\sffamily\footnotesize}}
%\Edge[label=partOf, color = red](Florida)(US)
\Edge[label=partOf](Florida)(US)
\tikzset{EdgeStyle/.style={->, font=\sffamily\footnotesize}}
\Edge[label=isa](Discovery Cove)(theme park)
\tikzset{EdgeStyle/.style={->,  font=\sffamily\footnotesize}}
\Edge[label=isa](theme park)(amusement park)
\Edge[label=located](Epcot)(Florida)
\Edge[label=isa](Epcot)(theme park)
\tikzset{EdgeStyle/.style={->, font=\sffamily\footnotesize}}
\Edge[label=isa](Prater)(amusement park)
\Edge[label=located](Prater)(Austria)
\Edge[label=located](Pacific Park)(California)
\Edge[label=partOf](California)(US)
\Edge[label=isa](Pacific Park)(amusement park)
\Edge[label=located](Gardaland)(Italy)
%\Edge[label=partOf](Austria)(Leolandia)
\Edge[label=isa](Leolandia)(amusement park)
\Edge[label=located](Leolandia)(Italy)
%\tikzset{EdgeStyle/.append style = {bend left}}
\Edge[label=isa](Gardaland)(theme park)
%\tikzset{EdgeStyle/.style={->, dashed}}
\tikzset{EdgeStyle/.style={->, font=\sffamily\footnotesize}}
\tikzset{EdgeStyle/.append style={font=\sffamily\footnotesize}}
%\Edge[label=isa](Discovery Cove)(amusement park)
%\Edge[label=isa](Gardaland)(amusement park)
\tikzset{EdgeStyle/.style={->, font=\sffamily\footnotesize}}
%\Edge[label=isa](Epcot)(amusement park)
\end{tikzpicture}%\vspace{-3mm}
\caption{Knowledge graph $\mathcal{G}_0$ underlying the selective knowledge base $\kb_0$ of our Example.}%\vspace{0mm}
\label{fig:KG1}  
\end{figure*}

\newcommand{\tmpframe}[1]{\colorbox{Yellow!10}{#1}}

\nop{\begin{figure*}[t!]
\begin{minipage}{0.45\textwidth}
\begin{tikzpicture}[x=1.8 cm,y=0.95 cm]
	\SetUpEdge[lw = 0.5pt, color = black!75, labeltext = black!75, labelcolor = white]
	\GraphInit[vstyle=Normal] %\SetGraphUnit{3}
	\tikzset{VertexStyle/.style = {
			shape = rectangle, draw, color=black!75,
			font=\footnotesize\sffamily,
			fill = Lavender!20}
	}
	%\SetVertexMath
	\Vertex[Math,L=\!\bar{\varphi}_a\mapsto\{\langle\textsf{Discovery~Cove}\rangle{\tt,}\langle\textsf{Epcot}\rangle\}\,{\tt=}\,\unit_0]{Discovery Cove; Epcot}
	\NOWE[Math,L=\!\bar{\varphi}_b\mapsto\{\langle\textsf{Pacific~Park}\rangle\}\!](Discovery Cove; Epcot){Pacific Park}
	\NOEA[Math,L=\!\bar{\varphi}_c \mapsto\{\langle\textsf{Gardaland}\rangle\}\!](Discovery Cove; Epcot){Gardaland}
	\NOWE[Math,L=\!\bar{\varphi}_d\mapsto\{\langle\textsf{Prater}\rangle{\tt,} \langle\textsf{Leolandia}\rangle\}\!](Gardaland){Prater; Leolandia}
	\NO[Math,L=\!\bar{\varphi}_e\mapsto\{\langle\textsf{theme park}\rangle\}\!](Prater; Leolandia){theme park}
	\NO[Math,L=\!\bar{\varphi}_f \mapsto\{\langle \xi \rangle~:~\xi\textit{ is any other entity}\}\!](theme park){any other entity}
	\tikzset{EdgeStyle/.style={->}}
	\tikzset{EdgeStyle/.append style={font=\sffamily\scriptsize}}
	\Edge[label=](Discovery Cove; Epcot)(Pacific Park)
	\Edge[label=](Discovery Cove; Epcot)(Gardaland)
	\Edge[label=](Gardaland)(Prater; Leolandia)
	\Edge[label=](Pacific Park)(Prater; Leolandia)
	\Edge[label=](Prater; Leolandia)(theme park)
	\Edge[label=](theme park)(any other entity)
\end{tikzpicture}%\vspace{-1mm}  
\end{minipage} 
\begin{minipage}{0.45\textwidth}
\vspace{-2.5mm}
\begin{equation*}\setlength{\jot}{7.0pt}\footnotesize
	\begin{split}
		\bar{\varphi}_f =  x \leftarrow & \ \top(x)\\
		\bar{\varphi}_e  = x  \leftarrow & \  \mathsf{isa}(x,\mathsf{ap}), \top(x), \top(\mathsf{ap})\\
		\bar{\varphi}_d = x  \leftarrow & \  \mathsf{isa}(x,\mathsf{ap}),  \mathsf{located}(x,y), \top(x),  \top(y), \top(\mathsf{ap})\\
		\bar{\varphi}_c  =  x  \leftarrow & \ \mathsf{isa}(x, \mathsf{tp}), \mathit{conj}(\bar{\varphi}_d), \top(\mathsf{tp})\\
		\bar{\varphi}_b =   x  \leftarrow & \  \mathsf{isa}(x,\mathsf{ap}),  \mathsf{located}(x,y), \top(x),  \top(y), \top(\mathsf{ap}),\\
		& \ \mathsf{partOf}(y,\mathsf{US}),  \top(\mathsf{US})\\
		\bar{\varphi}_a  =  x \leftarrow  &  \ 
		\mathsf{isa}(x, \mathsf{tp}),
		\mathsf{isa}(x,\mathsf{ap}), 
		\mathsf{located}(x,\mathsf{Florida}), \top(x), 
		\hspace{0cm}\\
		&  \ 
		\mathsf{partOf}(\mathsf{Florida},\mathsf{US}), 
		\top(\mathsf{tp}),
		\top(\mathsf{Florida}), \top(\mathsf{US}),\top(\mathsf{ap}),
	\end{split}
\end{equation*}  		 		
\end{minipage}
\caption{Expansion graph $\mathit{eg}(\unit_0,\kb_0)$, where $\mathsf{tp} = \mathsf{theme\_park}$, $\mathsf{ap} = \mathsf{amusement\_park}$, and $\mathit{conj}(\varphi)$ is the conjunction of atoms of $\varphi$.}\label{fig:KG3} \vspace{-3mm}
\end{figure*}
}
\begin{figure*}[t!]
\centering
% --- LEFT SIDE: GRAPH ---
% I increased x and y to give more room for arrows
\begin{minipage}[c]{0.48\textwidth}
\centering
\begin{tikzpicture}[x=2.2cm, y=1.4cm] 
	\SetUpEdge[lw = 0.5pt, color = black!75, labeltext = black!75, labelcolor = white]
	\GraphInit[vstyle=Normal] 
	\tikzset{VertexStyle/.style = {
			shape = rectangle, draw, color=black!75,
			font=\scriptsize\sffamily, % Slightly smaller font for better fit
			fill = Lavender!20,
            inner sep=3pt}
	}
	
    % Nodes
	\Vertex[Math,L=\!\bar{\varphi}_a\mapsto\{\langle\textsf{Disc.~Cove}\rangle{\tt,}\langle\textsf{Epcot}\rangle\}\,{\tt=}\,\unit_0]{Discovery Cove; Epcot}
	
	\NOWE[Math,L=\!\bar{\varphi}_b\mapsto\{\langle\textsf{Pacific~Park}\rangle\}\!](Discovery Cove; Epcot){Pacific Park}
	
	\NOEA[Math,L=\!\bar{\varphi}_c \mapsto\{\langle\textsf{Gardaland}\rangle\}\!](Discovery Cove; Epcot){Gardaland}
	
	\NOWE[Math,L=\!\bar{\varphi}_d\mapsto\{\langle\textsf{Prater}\rangle{\tt,} \langle\textsf{Leolandia}\rangle\}\!](Gardaland){Prater; Leolandia}
	
	\NO[Math,L=\!\bar{\varphi}_e\mapsto\{\langle\textsf{theme park}\rangle\}\!](Prater; Leolandia){theme park}
	
	\NO[Math,L=\!\bar{\varphi}_f \mapsto\{\dots\}\!](theme park){any other entity}
	
    % Edges
	\tikzset{EdgeStyle/.style={->}}
	\tikzset{EdgeStyle/.append style={font=\sffamily\scriptsize}}
	\Edge[label=](Discovery Cove; Epcot)(Pacific Park)
	\Edge[label=](Discovery Cove; Epcot)(Gardaland)
	\Edge[label=](Gardaland)(Prater; Leolandia)
	\Edge[label=](Pacific Park)(Prater; Leolandia)
	\Edge[label=](Prater; Leolandia)(theme park)
	\Edge[label=](theme park)(any other entity)
\end{tikzpicture}
\end{minipage}%
\hfill % Adds space between minipages
% --- RIGHT SIDE: FORMULAS ---
\begin{minipage}[c]{0.48\textwidth}
\vspace{-2.5mm}
\begin{equation*}\setlength{\jot}{5.0pt}\scriptsize % scriptsize matches the graph better
	\begin{split}
		\bar{\varphi}_f = x \leftarrow & \ \top(x)\\
		\bar{\varphi}_e = x \leftarrow & \ \mathsf{isa}(x,\mathsf{ap}), \\ 
                                       & \ \top(x), \top(\mathsf{ap})\\
		\bar{\varphi}_d = x \leftarrow & \ \mathsf{isa}(x,\mathsf{ap}), \mathsf{located}(x,y), \\
                                       & \ \top(x), \top(y), \top(\mathsf{ap})\\
		\bar{\varphi}_c = x \leftarrow & \ \mathsf{isa}(x, \mathsf{tp}), \top(\mathsf{tp}), \\
                                       & \ \mathit{conj}(\bar{\varphi}_d)\\
		\bar{\varphi}_b = x \leftarrow & \ \mathsf{isa}(x,\mathsf{ap}), \mathsf{located}(x,y), \\
                                       & \ \top(x), \top(y), \top(\mathsf{ap}),\\
		                               & \ \mathsf{partOf}(y,\mathsf{US}), \top(\mathsf{US})\\
		\bar{\varphi}_a = x \leftarrow & \ \mathsf{isa}(x, \mathsf{tp}), \mathsf{isa}(x,\mathsf{ap}), \\
		                               & \ \mathsf{located}(x,\mathsf{Florida}), \\
                                       & \ \mathsf{partOf}(\mathsf{Florida},\mathsf{US}), \\
		                               & \ \top(x), \top(\mathsf{tp}), \top(\mathsf{ap}), \\
		                               & \ \top(\mathsf{Florida}), \top(\mathsf{US}),
	\end{split}
\end{equation*}  		 		
\end{minipage}
\caption{Expansion graph $\mathit{eg}(\unit_0,\kb_0)$, where $\mathsf{tp} = \mathsf{theme\_park}$, $\mathsf{ap} = \mathsf{amusement\_park}$, and $\mathit{conj}(\varphi)$ is the conjunction of atoms of $\varphi$.}\label{fig:KG3} \vspace{-3mm}
\end{figure*}

\subsection{Working Example}

To illustrate the key notions and reasoning tasks introduced above, let us focus on a simple example based on the knowledge graph $\mathcal{G}_0$ graphically depicted in Figure~\ref{fig:KG1}.
Now, given a set of entities---say  $\unit_0$ $=$ $\{\langle\mathsf{Discovery~Cove}\rangle$, $\langle\mathsf{Epcot}\rangle\}$, referred to as an {\em anonymous relation} or a {\em unit}---the goal is to comprehensively express the nexus of similarity between the elements of $\unit_0$ and unveil its expansions.

To achieve the goal, the framework relies on the notion of {\em selective knowledge base} that we are going to define on top of $\mathcal{G}_0$. This new object consists of a pair $\kb_0 = (K_0, \varsigma_0)$, where $K_0=(D_0,O_0)$ is a KB composed by a relational dataset $D_0$ encoding $\mathcal{G}_0$ paired with an ontology $O_0$  expressing some intentional knowledge about $D_0$, and $\varsigma_0$ is a an algorithm that selects, for each tuple of entities $\tau$ from $D_0$, a portion of knowledge ---called {\em summary} of $\tau$ --- from the one entailed by $K_0$, plus an atom of the form $\top(e)$ for each selected entity $e$.
Intuitively, the knowledge entailed by $K_0$, denoted by $\mathit{ent}(K_0)$, collects all the atoms the occur in every model of $K_0$, whereas the special predicate $\top$ simply states that $e$ is an entity. 
%
%selects a subset of $\mathit{ent}(K_0) \cup \{\top(e) : e~\mbox{is~an~entity~in}~K_0\}$, referred to as the {\em summary} of $e$ in the given scenario.ùùùùùùùùùùùù
Accordingly, the knowledge graph $\mathcal{G}_0$ can be naturally encoded as the dataset:

\begin{center}
$\begin{array}{l}
D_0 = \{\mathsf{isa}(\mathsf{Epcot},\mathsf{tp}),\
\mathsf{isa}(\mathsf{tp},\mathsf{ap}),\
\mathsf{located}(\mathsf{Epcot},\mathsf{Florida}),\
\mathsf{partOf}(\mathsf{Florida},\mathsf{US}),\
\ldots\}.
\end{array}$
\end{center}

\noindent Moreover, as intentional knowledge, we can 
have a single Datalog~\cite{DBLP:books/aw/AbiteboulHV95} rule expressing the transitive closure of the binary predicate $\mathsf{isa}$, defined via the ontology:

\begin{center}
$\begin{array}{l}
O_0 = \{\mathsf{isa}(x,z) \leftarrow \mathsf{isa}(x,y), \ \mathsf{isa}(y,z)\}.
\end{array}$
\end{center}

\noindent Hence, the atoms entailed by $K_0 = (D_0, O_0)$ are:

\begin{center}
$\begin{array}{l}
\mathit{ent}(K_0) = D_0 \cup \left\{\mathsf{isa}(\mathsf{Discovery\_Cove},\mathsf{ap}), \ \mathsf{isa}(\mathsf{Epcot},\mathsf{ap}), \ \mathsf{isa}(\mathsf{Gardaland},\mathsf{ap})\right\}.
\end{array}$
\end{center}

Consider now again the set $\unit_0$ $=$ $\{\langle\mathsf{Discovery~Cove}\rangle$, $\langle\mathsf{Epcot}\rangle\}$.
%
%To comprehensively express the nexus of similarity between the elements of $\unit_0$ and unveil its expansions, the framework allows to select the relevant features describing any entity in $D_0$, which might vary depending on the specific application scenario. 
%
%To this end, we recall the notion of {\em summary selector}. Essentially, it is an algorithm that, for each  $e$ in $D_0$, selects a subset of $\mathit{ent}(K_0) \cup \{\top(e) : e~\mbox{is~an~entity~in}~K_0\}$, referred to as the {\em summary} of $e$ in the given scenario.
%
\noindent Concerning the summaries, instead, we can consider the simple yet effective selector $\varsigma_0$ that (when applied to unitary tuples as those in $\unit_0$) builds, for each $e$ in $D_0$, the dataset $\varsigma_0(\langle e \rangle)$ as the union of the following three sets:
\begin{itemize}
    \item $A(e)$ collecting all the atoms of $\mathit{ent}(K_0)$ having $e$ in the first position; essentially, these atoms represent the arcs of $\mathcal{G}_0$ outgoing from node $e$ plus those in the transitive closure of the $\mathsf{isa}$ relation of $\mathcal{G}_0$ still outgoing from $e$. For instance, $A(\mathsf{Epcot}) = \{\mathsf{isa}(\mathsf{Epcot},\mathsf{tp}),$ $
\mathsf{isa}(\mathsf{Epcot},\mathsf{ap}), 
\mathsf{located}(\mathsf{Epcot},\mathsf{Florida})\}$.
    
    \item $B(e)$ collecting all the atoms of $\mathit{ent}(K_0)$ having some $e'$ in the first position, where $e'$ occurs in the second position of some atom $p(e,e')$ of $A(e)$ with $p \neq \mathsf{isa}$;  essentially, these atoms represent the arcs of $\mathcal{G}_0$ outgoing from nodes reachable from $e$ in one step. For instance, $ B(\mathsf{Epcot}) = \{\mathsf{partOf}(\mathsf{Florida},\mathsf{US})\}$.

    \item $C(e)$ collecting all the atoms of the form $\top(e')$, where
    $e'$ is an entity appearing in $A(e)$ or $B(e)$.
    For instance, $ C(\mathsf{Epcot}) = \{\top(\mathsf{Epcot}), 
\top(\mathsf{tp}), 
\top(\mathsf{ap}),
\top(\mathsf{Florida}),
\top(\mathsf{US})\}$.

\end{itemize}

We are now at the point in which we can try to express nexus of similarity between the tuples of $\unit_0$. To this aim, by taking into account the  considered summaries, let us first  examine the following formula:

\begin{center}
$\begin{array}{l}
    \bar{\varphi}_1 = x \leftarrow \mathsf{isa}(x, \mathsf{ap}), \mathsf{located}(x,y), \mathsf{partOf}(y,\mathsf{US}).
\end{array}$
\end{center}

\noindent It is evident that $\bar{\varphi}_1$  {\em explains} some nexus of similarity between $\langle\mathsf{Discovery~Cove}\rangle$ and $\langle\mathsf{Epcot}\rangle$.
However, $\bar{\varphi}_1$ neglects the additional information that both entities are also located in Florida  according to their summaries.
Conversely, it can be formally proved that, the formula $\bar{\varphi}_a$ in Figure~\ref{fig:KG3} fully explains the nexus of similarity between the two entities according to $\kb_0$. 
Hence, $\bar{\varphi}_a$ characterizes the nexus of similarity between the tuple of $\unit_0$ within $\kb_0$.

The last step is to classify each entity $e$ of $\kb_0$ in relation to  $\unit_0$, by characterizing each unit $\unit_0 \cup \{\langle e \rangle\}$.
This leads to the {\em expansion graph} of $\unit_0$ with respect to $\kb_0$, denoted by $\mathit{eg}(\unit_0,\kb_0)$ as  depicted in Figure~\ref{fig:KG3}.
Intuitively, each node $n_1$ labeled by $\varphi_1 \mapsto \unit_1$ says that $\varphi_1$ characterizes $\unit_0 \cup \{\langle e \rangle\}$ for each $\langle e \rangle \in \unit_1$. 
If there is a path from $n_1$ to another node $n_2$ labeled by $\varphi_2 \mapsto \unit_2$, it means that $\varphi_2$ characterizes the unit $\unit_0 \cup \unit_1 \cup \unit_2$ as well.
Thus, we can conclude, for instance, that 
the  nexus of similarity that $\langle \mathsf{Gardaland}\rangle$ has with $\unit_0$ 
%\marco{are richer than} 
incorporate 
those that $\langle \mathsf{Leolandia} \rangle$ has with $\unit_0$,  showing that $\mathsf{Gardaland}$ is more similar to the entities of $\unit_0$ than $\mathsf{Leolandia}$ with respect to $\kb_0$.
Additionally, the  nexus of similarity that $\langle \mathsf{Pacific~Park}\rangle$ has with $\unit_0$ are incomparable to those that $\langle \mathsf{Gardaland}\rangle$ has with $\unit_0$. 
In simple terms, $\mathit{eg}(\unit_0,\kb_0)$ is the expected taxonomic expansion of $\unit_0$.

\subsection{Motivation}

The logic-based framework introduced by Amendola et al.~\cite{AMENDOLA2024120331} represents a foundational step toward a principled treatment of entity similarity and its representation through \emph{taxonomic entity set expansions} within knowledge bases. In particular, the introduction of the \emph{expansion graph} captures generalizations along taxonomic hierarchies, mirroring how humans group and relate entities by nexus of similarity.

However, such graphs can become extremely large in realistic scenarios, making it infeasible to materialize or explore them exhaustively. This calls for the development of efficient \emph{local navigation strategies}, enabling users to traverse the expansion graph incrementally, starting from an initial set of entities and proceeding step by step through semantically meaningful generalizations or refinements.

To support local navigation, the identification of \emph{navigation primitives} is essential. These primitives should capture key relationships between entities within the graph—such as belonging to the same node, lying in different but comparable nodes, or being located in entirely incomparable parts. At present, the framework does not provide such tools, creating a gap between its expressive theoretical potential and its practical usability.

%For example, by considering again the example introduced in the previous section, ....
%At present, the framework does not provide such tools, limiting the ability to control or interpret the structure of taxonomic expansions during navigation.

\subsection{Contribution}

Our work addresses precisely the aforementioned navigational gaps. We propose and study novel computational tasks that serve as core navigation primitives to enable efficient and intuitive navigation within the expansion graph, thereby supporting the practical use of taxonomic entity set expansions. Intuitively, given an input unit~$\unit$ of entity tuples together with two extra tuples $\tau_1$ and $\tau_2$:
%
%In particular, the three tasks this paper introduces and rigorously analyzes three decision problems that serve as core navigation primitives, each defined with respect to a fixed starting unit~$\unit$ of entity tuples:

\begin{itemize}
  \item \textbf{SIM:} determine whether the expansions of $\unit \cup \{\tau_1\}$ and $\unit \cup \{\tau_2\}$ coincide. This corresponds to checking whether $\tau_1$ and $\tau_2$ occur in the same node of the expansion graph. For example, both $\langle \mathsf{Prater} \rangle$ and $\langle \mathsf{Leolandia} \rangle$ appear in the same node of $\mathit{eg}(\unit_0,\kb_0)$, implying the  formula associated to that node characterizes both $\unit_0 \cup \{\langle \mathsf{Prater} \rangle\}$ and $\unit_0 \cup \{\langle \mathsf{Leolandia} \rangle\}$.

% yield the same characterization when added to $\unit_0$, and thus appear in the same node of the expansion graph relative to $\unit_0$.

  \item \textbf{PREC:} determine whether the characterization of $\unit \cup \{\tau_1\}$ is strictly more specific than that of $\unit \cup \{\tau_2\}$. In terms of the expansion graph, this amounts to verifying whether the node containing $\tau_2$ is reachable from the node containing $\tau_1$. For instance, the nexus of similarity that $\langle \mathsf{Gardaland} \rangle$ has with $\unit_0$ incorporate those that $\langle \mathsf{Leolandia} \rangle$ has with $\unit_0$, indicating that $\mathsf{Gardaland}$ lies in a more specific node than $\mathsf{Leolandia}$.

  \item \textbf{INC:}  determine whether the characterizations of $\unit \cup \{\tau_1\}$ and $\unit \cup \{\tau_2\}$ are mutually incomparable. This happens when the corresponding nodes in the expansion graph are not connected by any path in either direction. For example, the nexus of similarity that $\langle \mathsf{Pacific\ Park} \rangle$ has with $\unit_0$ are incomparable with those that $\langle \mathsf{Gardaland} \rangle$ has with $\unit_0$, meaning that these two tuples expand $\unit_0$ in semantically unrelated directions.
\end{itemize}

\nop{These problems capture the fundamental semantic relationships that can arise when incrementally exploring an expansion graph from a given set~$\unit$, and they support both algorithmic navigation and interpretability.

We systematically define each task within the theoretical framework of~\cite{AMENDOLA2024120331}, and provide a comprehensive computational complexity analysis under three realistic scenarios (broad, medium, and handy). These results significantly extend the existing complexity landscape and clarify the computational feasibility of various forms of local navigation.}

\nop{These problems capture the fundamental semantic relationships that can arise when incrementally exploring an expansion graph from a given set~$\unit$, and they support both algorithmic navigation and interpretability.

To capture the diverse semantic relationships that may arise when incrementally exploring an expansion graph from a given set~$\unit$, we build on the theoretical assumptions introduced in~\cite{AMENDOLA2024120331}, which characterize three increasingly restrictive but practically meaningful scenarios—\emph{broad}, \emph{medium}, and \emph{handy}. Each of these settings supports both algorithmic navigation and interpretability, and progressively narrows the computational gap between intractability and feasibility.

In the \emph{broad} case, the only requirement is that the maximum arity of predicates in the knowledge base is bounded by a fixed (typically small) constant, and that core computational primitives—such as summary extraction and entailment—are polynomial-time computable. The \emph{medium} case further restricts attention to situations where the size of the initial set~$\unit$ is itself bounded, a common assumption in tasks such as semantic similarity and entity set expansion. The most restrictive case, called \emph{handy}, imposes an additional condition on the domain size of the induced summaries, requiring it to grow very slowly with respect to the underlying knowledge base—a property aligned with practical needs in entity summarization.

While in the general case these problems are computationally prohibitive, belonging to complexity classes well beyond nondeterministic exponential time, we show that under the \emph{handy} conditions, all tasks of interest become fully tractable, with solutions computable in polynomial time. These results significantly refine the existing complexity landscape and clarify the precise conditions under which efficient local exploration is theoretically achievable.}

\nop{These tasks capture the fundamental semantic relationships that can arise when incrementally exploring an expansion graph from a given set~$\unit$, and they support both algorithmic navigation and interpretability.}

%The tasks just described offer a great advantage: by using them it is in principle possible to navigate the expansion graph on the fly without materializing it in its entirety.

The goal of our work is to provide algorithms for our tasks that are asymptotically optimal. 
To better classify the computational resources need by these tasks when executed in different real-world scenarios, we exploit the theoretical assumptions already introduced in~\cite{AMENDOLA2024120331}, which rely on certain parameters that influence the size or the shape of all possible input units or selective knowledge bases at hand (see Section~\ref{sec:preliminaries}).
%
%%
%These assumptions do not aim to constrain the design of the problems, but rather to reflect different levels of structure that real-world inputs may exhibit, ranging from minimal assumptions to highly regularized knowledge representations.

%To systematically assess the computational feasibility of these tasks, we build on the theoretical assumptions introduced in~\cite{AMENDOLA2024120331}, 
%which define three progressively more restrictive---but practically motivated---scenarios: \emph{broad}, \emph{medium}, and \emph{handy}. These scenarios do not aim to constrain the design of the problems, but rather to reflect different levels of structure that real-world inputs may exhibit, ranging from minimal assumptions to highly regularized knowledge representations.

Essentially, under the \emph{broad} assumption, the maximum arity of predicates in the underlying knowledge base is bounded by a fixed constant (as is common in RDF triples, OWL ontologies, and Knowledge Graphs), while summaries are assumed to be computable in polynomial time.
Under the \emph{medium} assumption, also the cardinality of the input unit is bounded by a fixed constant, a situation frequently encountered in tasks such as semantic similarity and entity set expansion.
Finally, under the \emph{handy} assumption, we also require that the number of entities appearing in any summary is, roughly speaking, bounded by a logarithm of the size of the underlying knowledge base; this in a sense expresses the typical assumption behind the well-established task of entity summarization.
%
%This reflects practical scenarios where concise, well-structured summaries are essential, as in entity summarization.

Let us now summarize our findings: 
$(i)$ in the broad setting, all tasks are undoubtedly untractable; indeed, they are $\mathsf{EXP}$-hard;
$(ii)$ in the medium case, all the tasks are challenging; indeed, although they are no harder than $\mathsf{DP}$~\footnote{We recall that $\mathsf{DP}$ is the closure under intersection of $\mathsf{NP} \cup \mathsf{coNP}$.}, under the standard computational belief that $\mathsf{P} \neq \mathsf{NP}$, they are not in 
$\mathsf{P}$; and
$(iii)$ finally, under the handy assumption, all the three tasks are tractable; indeed they are in $\mathsf{P}$.
%
%the our tasks  exhibit extreme computational hardness—reaching complexity classes beyond nondeterministic exponential time—our analysis shows that these barriers can be systematically reduced by progressively strengthening structural assumptions on the input. Among the problems considered, the complexity of $\textsc{prec}$—which consistently emerges as the hardest of the three—serves as a useful benchmark for understanding this computational descent. Its complexity drops from $\mathsf{DEXP}$-completeness (the exponential analogue of $\mathsf{DP}$) in the \emph{broad} case, to standard $\mathsf{DP}$-completeness in the \emph{medium} case, and further down to polynomial time in the \emph{handy} case. For a comprehensive summary of these results, we refer the reader to Table~\ref{tab:complexityNew1} in Section~\ref{sec:LB-ESE}.
%
It is worth noting that, complementary to the aforementioned results, we strengthen and refine the computational analysis of some further tasks already introduced by Amendola et. al~\cite{AMENDOLA2024120331}, by introducing new algorithmic techniques and exhibiting improved lower-bounds. 
For a comprehensive picture of all these results, we refer the reader to Section~\ref{sec:navigating-similarity} and in particular to Table~\ref{tab:complexityNew1}. 
The technical details, which were previously disseminated in an informal preprint~\cite{DBLP:journals/corr/abs-2303-10714}, are presented in Sections~\ref{sec:computing}--\ref{sec:INCS}.
%This comprehensive treatment sharpens our understanding of the tractability frontier and lays the groundwork for the development of scalable and cognitively-informed navigation methods for taxonomic entity set expansions.

\nop{

\subsection{Outline of the paper}

\textcolor{red}{The remainder of the paper is structured as follows: Section 2 (Basics) establishes the necessary mathematical and database foundations. Section 3 (Preliminaries) reviews the foundational notions and results of Amendola et al.~\cite{AMENDOLA2024120331}. In Section 4 (Navigating Taxonomic Entity Set Expansions), we formally define the new decision problems SIM, PREC, and INC and outline their theoretical implications. Section 5 (Computing Characterizations and Essential Expansions) presents strengthened complexity results about problems introduced by Amendola et al.~\cite{AMENDOLA2024120331}, and demonstrates additional theoretical advancements.
Subsequent Sections 6, 7, and 8, are respectively dedicated to the detailed complexity analyses of SIM, PREC, and INC under various computational assumptions. 
Section 9 (Related Works) positions our contributions within the broader literature, addressing symbolic and subsymbolic approaches, and translating relevant existing results into our framework. 
Finally, Section 10 (Conclusion and Future Work) summarizes our main findings and outlines promising avenues for further research.
}
}

\section{Preliminaries}\label{sec:basics}

%We begin by introducing the key notation and background notions on which the subsequent sections will build.

\subsection{Basics on Relational Structures}

%\subsection{Relational Structures}\label{sec:rel_struct}

We assume three pairwise disjoint, countably infinite
%, and lexicographically ordered 
sets of symbols: $\PS$ for {\em predicates}, $\C$ for {\em constants}, and $\V$ for {\em variables}. 
Variables are typically denoted by symbols such as $x$, $y$, $z$, and their indexed variants.
Constants and variables are collectively referred to as {\em terms}.
Each predicate $p$ has an associated {\em arity}, denoted by $|p|$, which indicates the number of arguments it takes.
An {\em atom} is an expression of the form $p(t_1,\ldots,t_n)$, where $p$ is a predicate of arity $n$, and $t_1,\ldots,t_n$ is a sequence of terms.
A {\em structure} $S$ is any set of atoms. Its {\em domain}, denoted by $\Dom_S$, is the set of all terms occurring in the atoms of $S$.
We assume that $\PS$ contains a distinguished unary predicate $\top$, referred to as {\em top}, which is used to explicitly indicate the presence of terms in $S$.
Intuitively, an atom $\top(a)$ asserts that the constant $a$ belongs to $\Dom_S$, independently of any specific relational role it may play.
The {\em closure under} predicate $\top$ of $S$ is the structure $S^{\top} = S \cup \{\top(t) \mid t \in \Dom_S\}$.
We say that $S$ is {\em closed under} $\top$ if $S = S^\top$.
%
%The {\em closure under} a predicate $p$ of a structure $S$ is the structure $S^{p} = S \cup \{p(t_1,\ldots,t_n) \mid t_1,\ldots,t_n \in \Dom_S\}$.
%
%We say that $S$ is {\em closed under} $p$ if $S = S^p$. In particular, $S$ is closed under $\top$ if $S = S^{\top}$.
%
For example, the structure $S = \{\mathsf{p}(1,2), \top(1)\}$ is not closed under $\top$, since $S^{\top} = S \cup \{\top(2)\}$ is different form $S$.
A structure $S$ is  {\em connected} if it consists of a single atom, or if it can be partitioned into two nonempty connected substructures whose domains share at least one term.
For example, the structure $S_1 = \{\mathsf{p}(a,b),~\mathsf{r}(a)\}$ is connected, while $S_2 = S_1 \cup \{\mathsf{r}(1)\}$ is not.  
Indeed, $S_1$ can be partitioned into $S_1' = \{\mathsf{p}(a,b)\}$ and $S_1'' = \{\mathsf{r}(a)\}$, which are both trivially connected and share the constant $a$.

Consider two structures $S$ and $S'$, together with a set $\mathit{T}$ of terms. 
A {\em $\mathit{T}$-homomorphism} from $S$ to $S'$ is a function $h: \Dom_S \rightarrow \Dom_{S'}$ satisfying the following conditions: $(i)$ $h(t) = t$ for each $t \in T \cap \Dom_S$; and
$(ii)$ for every atom $p(t_1,\ldots,t_n)$ in $S$, the atom $p(h(t_1),\ldots,h(t_n))$ belongs to $S'$.
We may write a $\mathit{T}$-homomorphism explicitly as a set of mappings of the form $t \mapsto t'$.
For an atom $\alpha = p(t_1,\ldots,t_n)$, we write $h(\alpha)$ to denote the atom $p(h(t_1),\ldots,h(t_n))$.
If $h$ is bijective, then it is called a {\em $\mathit{T}$-isomorphism}, denoted $S \simeq_{\mathit{T}} S'$.
%
%For example, let $S = \{\mathsf{r}(a,b),~\mathsf{r}(a,c)\}$ and let $T = \{b\}$. The substructure $S' = \{\mathsf{r}(a,b)\}$ is a $\mathit{T}$-core of $S$, since there exists a $\mathit{T}$-homomorphism from $S$ to $S'$ (mapping $a \mapsto a$, $b \mapsto b$, and $c \mapsto b$), and no proper substructure of $S'$ admits a $\mathit{T}$-homomorphism from $S'$ to it.
%
When $T = \emptyset$, we simply refer to homomorphisms and isomorphisms, omitting the $\mathit{T}$ prefix.

An $n$-ary {\em tuple} is an expression of the form $\langle t_1,\ldots,t_n\rangle$, where each $t_i$ is a term.
Intuitively, a tuple can be seen as an ``anonymous'' atom, or conversely, an atom can be viewed as a ``labeled'' tuple.
For a tuple $\tau$ (resp., a sequence $\mathbf{s}$) of terms, its $i$-th term is denoted by $\tau[i]$ (resp., $\mathbf{s}[i]$).
Given a set $T$ of terms, $T^n$ denotes the set of all $n$-ary tuples that can be formed with elements of $T$.
The {\em domain} of a set $J$ of tuples is the set $\Dom_J$ of terms appearing in the tuples of $J$.

\subsection{Knowledge Bases and Conjunctive Formulas}
A {\em knowledge base} (KB) is a formal representation of knowledge, typically organized into two distinct components: a {\em dataset} $D$, which collects factual extensional assertions about a domain, and an {\em ontological theory} $O$, which encodes intensional knowledge through general rules, constraints, or background information in a logical form.
Depending on the setting, $D$ can take the form of a relational database, an ABox, or the factual part of a knowledge graph. Similarly, $O$ may correspond to a TBox, a set of logical dependencies (such as tuple-generating dependencies or Datalog rules), or any rule-based theory expressed in a fragment of first-order logic.
Formally, in the rest of the paper, a knowledge base is a pair $K = (D, O)$, where the dataset $D$ is a finite, nonempty structure over constants, and $O$ is an ontological theory.
A \emph{model} of a knowledge base $K = (D, O)$ is a (possibly infinite) structure over constants that, depending on the adopted semantics (e.g., first-order logic, database repairs, answer sets):
$(i)$ satisfies the entire theory $O$; 
$(ii)$ either contains all the facts in $D$ or preserves as much information from $D$ as possible; and
$(iii)$ and possibly complies with a suitable minimality condition.
Given a concrete notion of model, an atom $\alpha$ is said to be \emph{entailed} by a knowledge base $K$, written $K \models \alpha$, if $\alpha$ belongs to every model of $K$.
We denote by $\mathit{ent}(K)$ the set of all atoms entailed by $K$.

A {\em (conjunctive) formula} (CF, for short) is an expression $\varphi$ of the form
\begin{equation}\label{eq:formula}
x_1,\ldots,x_n \leftarrow p_1({\bf t}_1),\ldots, p_m({\bf t}_m),
\end{equation}
\noindent where $n \geq 0$ is the {\em arity} of $\varphi$, $m > 0$ is its {\em size} (denoted by $|\varphi|$), each $p_i({\bf t}_i)$ is an atom formed using a predicate $p_i$ and a sequence of terms ${\bf t}_i$, and each $x_j$ is a variable ---called {\em free}--- occurring in at least one atom of $\varphi$.
The sequence of atoms $p_1({\bf t}_1), \ldots, p_m({\bf t}_m)$ is referred to as the body of $\varphi$ and denoted by $\mathit{conj}(\varphi)$; the set of such atoms is denoted by $\mathit{atm}(\varphi)$.
Essentially, $\varphi$ is a {\em primitive-positive formula}~\cite{DBLP:journals/jacm/Rossman08} without equality, extended with constants.
We say that $\varphi$ is {\em open} if $n \geq 1$, i.e., it contains at least one free variable. Moreover, $\varphi$ is {\em connected} if the structure $\mathit{atm}(\varphi)$ is connected.
The {\em output} to $\varphi$ over a dataset $D$ is the set $\varphi(D)$ of all tuples $\langle t_1, \ldots, t_n \rangle$ for which there exists a $\C$-homomorphism from $\mathit{atm}(\varphi)$ to $D$ that maps each $x_i$ to $t_i$.
Consider, for example, the formula $\bar{\varphi} = x \leftarrow \mathsf{p}(x,y),~\mathsf{q}(y,z),~\mathsf{r}(z)$. According to Equation~\ref{eq:formula}, its arity is $n = 1$ (one free variable $x$), and its size is $m = 3$ (three atoms). Hence, $\bar{\varphi}$ is both open and connected.
Let $D$ be the dataset $\{\mathsf{p}(a,b),~\mathsf{q}(b,c),~\mathsf{r}(c),~\mathsf{p}(d,e),~\mathsf{q}(e,f),~\mathsf{r}(f)\}$. Then, the output $\bar{\varphi}(D)$ contains the tuple $\langle a \rangle$, witnessed by the homomorphism $\{x \mapsto a, y \mapsto b, z \mapsto c\}$. Similarly, $\langle d \rangle$ is included via the homomorphism $\{x \mapsto d$, $y \mapsto e$, $z \mapsto f\}$. These are the only tuples in $\bar{\varphi}(D)$.

Consider now two formulas $\varphi$ and $\varphi'$ with free variables $x_1,\ldots,x_n$ and $y_1,\ldots,y_n$, respectively.
We say that $\varphi$ can be {\em homomorphically mapped} to $\varphi'$ if there exists a $\C$-homomorphism $h$ from $\mathit{atm}(\varphi)$ to $\mathit{atm}(\varphi')$ that maps each $x_i$ to $y_i$. In this case, we write $\varphi \longrightarrow \varphi'$. Otherwise, we write $\varphi \longarrownot\longrightarrow \varphi'$.
In particular, if $h$ is bijective, then $\varphi$ and $\varphi'$ are said to be {\em isomorphic}, and we write $\varphi \simeq \varphi'$.
Moreover, if $\varphi \longrightarrow \varphi'$ and $\varphi' \longarrownot\longrightarrow \varphi$, then $\varphi$ is a {\em generalization} of $\varphi'$; if both $\varphi \longarrownot\longrightarrow \varphi'$ and $\varphi' \longarrownot\longrightarrow \varphi$, then $\varphi$ and $\varphi'$ are {\em incomparable}; and if both $\varphi \longrightarrow \varphi'$ and $\varphi' \longrightarrow \varphi$, denoted by $\varphi \longleftrightarrow \varphi'$, then the formulas are {\em equivalent}.

\subsection{Computational Complexity}

We assume the reader is familiar with the notions of \emph{Turing machine} and \emph{reduction}~\cite{DBLP:books/daglib/0018514}.
In what follows, we adopt standard complexity classes to determine the computational cost of our decision and functional tasks.
For decision problems, we consider the hierarchy:
$\mathsf{NL} \subseteq \mathsf{P} \subseteq \mathsf{NP}$ and $\mathsf{coNP} \subseteq \mathsf{DP} \subseteq \mathsf{PH} \subseteq \mathsf{PSPACE} = \mathsf{NPSPACE} \subseteq \mathsf{EXP} \subseteq \mathsf{NEXP}$ and $\mathsf{coNEXP} \subseteq \mathsf{DEXP}$, where $\mathsf{DP}$ (resp., $\mathsf{DEXP}$) is the closure of $\mathsf{NP} ~\cup~ \mathsf{coNP}$ (resp., $\mathsf{NEXP} ~\cup~ \mathsf{coNEXP}$) under intersection.
For functional problems, we refer to the classes:
$F\mathsf{P} = F\mathsf{P}^{\mathsf{NL}} \subseteq F\mathsf{P}^{\mathsf{NP}} \subseteq F\mathsf{P}^{\mathsf{PH}} \subset F\mathsf{PSPACE} = F\mathsf{PSPACE}^{\mathsf{NPSPACE}} \subseteq F\mathsf{EXP}^{\mathsf{NP}}$, where $F\mathsf{PSPACE}$ denotes the class of functions computable by a deterministic Turing machine equipped with a one-way, write-only output tape and a constant number of work tapes of polynomial length.

\section{Framework Overview}\label{sec:preliminaries}

This section formalizes the key elements of the framework proposed by Amendola et al.~\cite{AMENDOLA2024120331}, previously introduced in Section~\ref{sec:intro}.
%
%In this section, we formalize the key components of the framework of Amendola et al.~\cite{AMENDOLA2024120331} that have been already introduced in Section~\ref{sec:intro}.
%
Given any $n > 0$, an $n$-ary {\em anonymous relation} ---referred to simply as a {\em unit}--- is a finite, nonempty set of $n$-ary tuples of constants that need to be characterized or expanded.
%
%Given some $n>0$, an $n$-ary {\em anonymous relation}, also called {\em unit} for short, is any finite nonempty set of $n$-ary tuples of constants to be characterized or expanded.
%
To characterize the tuples of a unit, the framework introduces the notion of a {\em summary selector}.
In essence, a summary selector is an algorithm, say $\varsigma$, that takes as input a knowledge base $K$ together with a tuple $\tau$ of constants, and returns a dataset $\varsigma(K,\tau)$ ---the {\em summary} of $\tau$ (with respect to $K$)---
collecting atoms from $\mathit{ent}(K)^\top$ that are relevant to describing $\tau$. 
Importantly, $\varsigma(K,\tau)$ is closed under $\top$, and its domain includes all the constants of $\tau$.
%
%To select relevant features describing the tuples of a unit, the framework introduces the notion of {\em summary selector}.
%
%To select relevant features describing the tuples of a unit, the framework relies on the notion of {\em summary selector}.
%
%In essence, a summary selector is an algorithm, say $\varsigma$, that takes as input a knowledge base $K$ together with a tuple $\tau$ of constants, and returns a dataset $\varsigma(K,\tau)$ ---the {\em summary} of $\tau$ (with respect to $K$)--- 
%collecting atoms from $\mathit{ent}(K)^\top$ that are relevant to describing $\tau$. Importantly, $\varsigma(K,\tau)$ is closed under $\top$, and its domain includes all the constants of $\tau$.
%
When the context is clear, we may simply write $\varsigma(\tau)$ in place of $\varsigma(K, \tau)$.
%
%If there is no ambiguity, we write $\varsigma(\tau)$ instead of $\varsigma(K,\tau)$. 
%
%Possible real-world summary selectors can be defined from the techniques presented in~\cite{stickler2005cbd,DBLP:journals/ws/ColucciDGS16,DBLP:journals/ws/LiuCGQ21,DBLP:journals/semweb/Pirro19}.
%
Concrete summary selectors can be defined by adapting techniques from~\cite{stickler2005cbd,DBLP:journals/ws/ColucciDGS16,DBLP:journals/ws/LiuCGQ21,DBLP:journals/semweb/Pirro19}.
%
%Examples of concrete summary selectors can be found by adapting techniques from~\cite{DBLP:journals/ws/ColucciDGS16,DBLP:journals/ws/LiuCGQ21,DBLP:journals/semweb/Pirro19,stickler2005cbd}.
%
%A {\em selective knowledge base}, SKB for short, is a pair $\kb =(K, \varsigma)$, where $K$ is a knowledge base and $\varsigma$ is a summary selector.
%
A {\em selective knowledge base} ---abbreviated as SKB--- is defined as a pair $\kb = (K, \varsigma)$, where $K$ is a KB and $\varsigma$ is a summary selector.

We can now introduce the explanation language used to characterize anonymous relations.
Consider an open formula $\varphi$ (i.e., a formula with at least one free variable) as in Equation~\ref{eq:formula}.
An atom $\alpha$ is {\em connected} to a free variable $x$ of $\varphi$ if there is a connected structure $S$, consisting of atoms from $\varphi$, that includes both $\alpha$ and $x$.
Building on this notion, we say that $\varphi$ is 
{\em  nearly connected} if each atom in $\varphi$ is connected to at least one of its free variables.
In other words, $\varphi$ is nearly connected if the structure $\mathit{atm}(\varphi) \cup \{\mathsf{free}(x_1, \ldots, x_n)\}$ is connected, where $\mathsf{free}(x_1, \ldots, x_n)$ is a fresh, dummy atom that syntactically collects all the free variables of $\varphi$.
This condition ensures that each atom in $\varphi$ is (directly or indirectly) linked to at least one atom that mentions a free variable.
Clearly, every connected open formula is also nearly connected; however, the converse does not necessarily hold.
Nearly connected conjunctive formulas, abbreviated as $\WCQ$, form the explanation language adopted in this framework.
The semantics of a $\WCQ$ formula $\varphi$ is defined in terms of the tuples it captures with respect to a selective knowledge base $\kb = (K, \varsigma)$; these tuples are called the instances of $\varphi$ according to $\kb$.
%
%The semantics of an $\WCQ$ formula $\varphi$ is given in terms of the tuples it represents with respect to an SKB $\kb =(K, \varsigma)$, called instances of $\varphi$ according to $\kb$.
%
Formally, a tuple $\tau$ is an {\em instance} of $\varphi$ if it belongs to the output of $\varphi$ when evaluated over its summary $\varsigma(\tau)$.
%
%Formally, an {\em instance} of $\varphi$ is any tuple $\tau$ that belongs to the output of $\varphi$ over the summary $\varsigma(\tau)$ of $\tau$.
%
The set of all such tuples is denoted by $\mathit{inst}(\varphi,\kb)$.
%
%The set of all such instances is denoted by $\mathit{inst}(\varphi,\kb)$.
%
%Essentially, $\mathit{inst}(\varphi,\kb)$ contains at most the tuples the are in the output of $\varphi$ over the dataset $\mathit{ent}(K)^\top$.
%
%Intuitively, $\mathit{inst}(\varphi,\kb)$ includes (at most) those tuples that are in the output of $\varphi$ when evaluated over the dataset $\mathit{ent}(K)^\top$.
%
Importantly, $\mathit{inst}(\varphi, \kb)$ contains at most the tuples in the output of $\varphi$ over the full dataset $\mathit{ent}(K)^\top$, since each $\varsigma(\tau)$ may include only partial information.

Consider an SKB $\kb = (K, \varsigma)$ with $K = (D, O)$ and an $n$-ary unit $\unit = \{\tau_1, \ldots, \tau_m\}$.
An $\WCQ$ formula $\varphi$ \textit{explains} the nexus of similarity among the tuples of $\unit$ if  $\mathit{inst}(\varphi, \kb) \supseteq \unit$.
Accordingly, we may also say that $\varphi$ explains $\unit$, that $\varphi$ is an explanation for $\unit$, or that $\unit$ is explained by $\varphi$ with respect to $\kb$.
Furthermore, $\varphi$ \textit{characterizes} the nexus of similarity among the tuples in $\unit$ if it explains $\unit$ and, for every formula $\varphi'$ that also explains $\unit$, it holds that $\varphi' \longrightarrow \varphi$.
In this case, we also say that $\varphi$ characterizes $\unit$, that $\varphi$ is a characterization for $\unit$, or that $\unit$ is characterized by $\varphi$ with respect to $\kb$.
Intuitively, since all characterizations of a unit are homomorphically equivalent, they provide a complete explanation of the nexus of similarity among the tuples in the unit.
Importantly, every unit admits explanations and characterizations of finite size.
%
%Importantly, any unit always admits explanations and characterizations of finite size.
%
In particular, Amendola et al.~\cite{AMENDOLA2024120331} have shown that each unit $\unit$ admits a {\em canonical characterization}, denoted by $\mathit{can}(\unit,\kb)$, which can be effectively computed and has size at most exponential in the size of $\unit$ and of the summaries of its tuples (see Section~\ref{sec:computing} for details).
%
%In particular, Amendola et al.~\cite{AMENDOLA2024120331} have shown that $\unit$ always admits a {\em canonical characterization}, called $\mathit{can}(\unit,\kb)$, which can be effectively constructed and it is of size at most  exponential with respect to $\unit$ and the summaries of its tuples (see Section~\ref{sec:computing} for more details).

There are cases, however, in which the aforementioned bound on the size of $\mathit{can}(\unit,\kb)$ is not optimal.
%
%There are cases, however, where the aforementioned bound about the size of $\mathit{can}(\unit,\kb)$ is not optimal.
%
To obtain characterizations that are as succinct as possible, we rely on the notion of {\em core characterization}: a $\mathit{core}$ of a formula $\varphi$ is any formula $\varphi'$ such that 
$\mathit{atm}(\varphi') \subseteq \mathit{atm}(\varphi)$, 
$\varphi \longrightarrow \varphi'$, 
and there exists no formula $\varphi''$ with $\mathit{atm}(\varphi'') \subset \mathit{atm}(\varphi')$ such that $\varphi' \longrightarrow \varphi''$.
%
%To get characterizations that are as succinct as possible, one can rely on the notion of {\em core characterization}: a $\mathit{core}$ of a formula $\varphi$ is any formula $\varphi'$ with $\mathit{atm}(\varphi') \subseteq \mathit{atm}(\varphi)$ such that $\varphi \longrightarrow \varphi'$   and there is no $\varphi''$ with $\mathit{atm}(\varphi'') \subset \mathit{atm}(\varphi')$ such that $\varphi' \longrightarrow \varphi''$.
%
In other words, a core of $\varphi$ is a minimal (with respect to set inclusion on atoms) formula such that the output of $\varphi'$ over any dataset coincides with the output of $\varphi$ over that same dataset.
Clearly, any formula isomorphic to $\varphi'$ is also  a core of $\varphi$.
%
%Accordingly, any formula isomorphic to $\varphi'$ is also a core of $\varphi$. 
%
As usual, we refer to the core of $\varphi$ and denote it by $\mathit{core}(\varphi)$.
%
%As common, we can talk about the core of $\varphi$ and denote it by $\mathit{core}(\varphi)$.
%
Accordingly, we denote by $\mathit{core}(\unit, \kb)$ the core characterization of the unit $\unit$ with respect to the SKB $\kb$.
%
%We will refer to the core characterization of $\unit$ with respect to $\kb$ as $\mathit{core}(\unit,\kb)$.

Building on these notions, we introduce the essential expansion of $\unit$, which includes all tuples that, when added to $\unit$, yield an extended unit with the same overall characterization.
Formally, the {\em essential expansion} of $\unit$, denoted by $\mathit{ess}(\unit, \kb)$, is the set $\mathit{inst}(\varphi, \kb)$, where $\varphi = \mathit{core}(\unit, \kb)$.

\begin{proposition}[\cite{AMENDOLA2024120331}]\label{prop:resfortask1}
    Let $\tau_1$ and $\tau_2$ be two $n$-ary tuples. It holds that $\mathit{ess}(\unit\cup{\tau_1},\kb)\subseteq\mathit{ess}(\unit\cup{\tau_2},\kb)$ if, and only if, $\mathit{core}(\unit\cup{\tau_2},\kb)\longrightarrow\mathit{core}(\unit\cup{\tau_1},\kb)$.
\end{proposition}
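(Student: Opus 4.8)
The plan is to unfold both sides through the definitions and reduce everything to a statement about homomorphisms between core characterizations, using the key fact that characterizations of a unit are homomorphically equivalent and that the essential expansion is exactly the instance set of the core characterization. First I would fix notation: write $\psi_1 = \mathit{core}(\unit\cup\{\tau_1\},\kb)$ and $\psi_2 = \mathit{core}(\unit\cup\{\tau_2\},\kb)$, so that by definition $\mathit{ess}(\unit\cup\{\tau_i\},\kb) = \mathit{inst}(\psi_i,\kb)$. The goal becomes: $\mathit{inst}(\psi_1,\kb)\subseteq\mathit{inst}(\psi_2,\kb)$ iff $\psi_2\longrightarrow\psi_1$.

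For the ``if'' direction, suppose $\psi_2\longrightarrow\psi_1$, i.e.\ there is a $\C$-homomorphism from $\mathit{atm}(\psi_2)$ to $\mathit{atm}(\psi_1)$ matching free variables positionally. I would argue that any tuple $\tau\in\mathit{inst}(\psi_1,\kb)$ lies in $\mathit{inst}(\psi_2,\kb)$: unfolding, $\tau$ is in the output of $\psi_1$ over $\varsigma(\tau)$, witnessed by a $\C$-homomorphism $g$ from $\mathit{atm}(\psi_1)$ to $\varsigma(\tau)$ sending the free variables to the components of $\tau$; composing with the given homomorphism $\psi_2\to\psi_1$ yields a $\C$-homomorphism $\mathit{atm}(\psi_2)\to\varsigma(\tau)$ again matching $\tau$ componentwise, so $\tau$ is in the output of $\psi_2$ over $\varsigma(\tau)$, hence $\tau\in\mathit{inst}(\psi_2,\kb)$. (One should check that $\psi_2$ is indeed a $\WCQ$ formula and that the composed map is still a $\C$-homomorphism, but both are routine: composition of $\C$-homomorphisms is a $\C$-homomorphism, and cores of $\WCQ$ formulas stay in $\WCQ$ and share the same free variables.)

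The ``only if'' direction is where the real content lies, and I expect it to be the main obstacle. Assume $\mathit{inst}(\psi_1,\kb)\subseteq\mathit{inst}(\psi_2,\kb)$. The natural move is to observe that $\tau_1\in\unit\cup\{\tau_1\}$, and since $\psi_1$ is a characterization (in particular an explanation) of $\unit\cup\{\tau_1\}$ we have $\unit\cup\{\tau_1\}\subseteq\mathit{inst}(\psi_1,\kb)$; likewise for $\psi_2$ and $\unit\cup\{\tau_2\}$. By the hypothesis, $\unit\cup\{\tau_1\}\subseteq\mathit{inst}(\psi_1,\kb)\subseteq\mathit{inst}(\psi_2,\kb)$, so $\psi_2$ explains $\unit\cup\{\tau_1\}$. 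But $\psi_1$ characterizes $\unit\cup\{\tau_1\}$, so by the defining universal property of characterizations, every explanation of $\unit\cup\{\tau_1\}$ homomorphically maps to $\psi_1$; applying this to $\psi_2$ gives $\psi_2\longrightarrow\psi_1$, as required. The subtlety to be careful about here is exactly the direction of the homomorphism in the definition of ``characterizes'' ($\varphi'\longrightarrow\varphi$ for every explanation $\varphi'$), and making sure the chosen core $\psi_1$ really inherits the characterization property of $\unit\cup\{\tau_1\}$ (which it does, since a core is homomorphically equivalent to the formula it is a core of, hence is itself a characterization). Combining the two directions closes the proof; I would also remark that the argument is fully symmetric in $\tau_1,\tau_2$ only up to exchanging the roles, consistent with the asymmetry of the inclusion.
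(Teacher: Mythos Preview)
Your proof is correct. Note, however, that the paper does not actually prove this proposition: it is stated in Section~\ref{sec:preliminaries} as a result imported from~\cite{AMENDOLA2024120331}, with no accompanying argument. Your approach---using composition of $\C$-homomorphisms for the ``if'' direction and, for the ``only if'' direction, observing that $\unit\cup\{\tau_1\}\subseteq\mathit{inst}(\psi_1,\kb)\subseteq\mathit{inst}(\psi_2,\kb)$ makes $\psi_2$ an explanation of $\unit\cup\{\tau_1\}$ and then invoking the universal property of characterizations---is exactly the natural argument one would expect, and it lines up with how the paper later deploys the same idea inside the proof of Proposition~\ref{taskViaEssEasy}.
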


By construction, $\mathit{ess}(\unit, \kb) \supseteq \unit$ always holds, and every tuple in $\mathit{ess}(\unit, \kb) \setminus \unit$ preserves the same nexus of similarity when considered together with the original tuples in $\unit$.
Having isolated the tuples that are maximally similar to those in $\unit$, namely all those in $\mathit{ess}(\unit, \kb) \setminus \unit$, we now turn to classifying all other tuples in relation to $\unit$.
This leads to the construction of the \emph{taxonomic expansion} of $\unit$, which organizes tuples based on how the core characterization of $\unit$ evolves when extended with each additional tuple.
Let $\mathbf{T}$ denote the set of all $n$-ary tuples over the domain of $\kb$.
For each $\tau \in \mathbf{T}$, the formula $\mathit{core}(\unit \cup \{\tau\}, \kb)$ captures the nexus of similarity between $\tau$ and the original unit $\unit$.
Informally, this formula corresponds to the most concise least general generalization of $\mathit{core}(\unit, \kb)$ that explains all tuples in $\unit \cup \{\tau\}$.
Crucially, the similarity relation between tuples can be captured syntactically via homomorphisms between such core characterizations:
given two tuples $\tau, \tau' \in \mathbf{T}$, the existence of a homomorphism from $\mathit{core}(\unit \cup \{\tau'\}, \kb)$ to $\mathit{core}(\unit \cup \{\tau\}, \kb)$ indicates that the nexus of similarity that $\tau'$ shares with $\unit$ are at most those that $\tau$ shares with $\unit$;
if the two core characterizations are homomorphically equivalent, then $\tau$ and $\tau'$ share exactly the same nexus of similarity with $\unit$.
These observations motivate the following definition, which introduces the expansion graph as a structure that organizes tuples into a hierarchy based on the homomorphism relation between their induced core formulas.

\begin{definition}[\cite{AMENDOLA2024120331}]\label{def-exp-graph}
The {\em expansion graph} of $\unit$ according to $\kb$, denoted by $\mathit{eg}(\unit,\kb)$, is the triple $(N,A,\delta)$ where:
\begin{itemize}
\item[$(i)$] $N$ is the set of nodes, each representing the core characterization of a unit $\unit \cup \{\tau\}$ for some $\tau \in \mathbf{T}$, where units with isomorphic cores are identified with the same node;
\item[$(ii)$] $A$ is a set collecting each pair $(\varphi_1,\varphi_2)$ of distinct nodes of $N$ such that $\varphi_2 \longrightarrow \varphi_1$ and there is no other (i.e., different from $\varphi_1$ and $\varphi_2$) formula $\varphi_3 \in N$ such that $\varphi_2 \longrightarrow \varphi_3$ and $\varphi_3 \longrightarrow \varphi_1$; and

\item[$(iii)$] $\delta$ is a function that assigns to each node $\varphi \in N$ the set of tuples
$\mathit{inst}(\varphi,\kb) \setminus \{\tau \in \mathit{inst}(\varphi',\kb):(\varphi',\varphi) \in A\}$,
called the set of {\em direct instances} of $\varphi$, and denoted by $\mathit{dinst}(\varphi, \kb)$. \mybox
\end{itemize}
\end{definition}

%%%%%%%%%%%%%%%%%%%%%%%%%%%%%%%%%%%%%%%%

\begin{table}[t!]
	\centering%\footnotesize
	\begin{tabular}{ccc}
		\hline
		\hspace{0.8cm}{\bf  Problem}\hspace{0.8cm} & \hspace{1.0cm}{\bf Input}\hspace{1.0cm} & \hspace{1.5cm}{\bf Reasoning Task}\hspace{1.5cm}   \\
		\hline
		\textsc{can}  & $\kb, \unit$  & compute $\mathit{can}(\unit,\kb)$ \\
		\textsc{core}  & $\kb, \unit$  & compute $\mathit{core}(\unit,\kb)$\\
        \hline
		\nop{}
		\textsc{ess} & $\kb, \unit, \tau$ & does $\tau \in \mathit{ess}(\unit,\kb)$ hold?\\
  \hline
	\end{tabular}\normalsize%\vspace{-2mm}
 \caption[Old tasks.]{Tasks defined in~\cite{AMENDOLA2024120331}; everywhere $\{\tau\} \cap \unit = \emptyset$.}%\vspace{-2mm}
 \label{tab:Oldtasks}
\end{table}

%%%%%%%%%%%%%%%%%%%%%%%%%%%%%%%%%%%%%%%%

\begin{table}[t!]
\centering
\begin{tabular}{cccc}
		\hline
		  %\multirow{2}{*}{{\bf  Problem }} & \multicolumn{3}{c}{{Issumptions and complexity}} \\  %\cline{2-4}
		  %& {\bf\em general} &  {\bf\em medial} & {\bf\em narrow}   \\
        %\hline
		~~~~~\hspace{.2cm}{\bf  Problem } & \hspace{.1cm}{\bf Broad} & {\bf Medium}  & \hspace{1cm}{\bf Handy}\hspace{1cm} \\
		\hline

		~~~~~\textsc{can}  & ~~~~~ $F\mathsf{P}^{\mathsf{NP}}$-hard ~~~~~ & in $F\mathsf{P}$ & ~~~~~in $F\mathsf{P}$~~~~~ \\
  
  ~~~~~\textsc{core}  & $F\mathsf{P}^{\mathsf{NP}}$-hard   & in $F\mathsf{P}^{\mathsf{NP}}$ & in $F\mathsf{P}$\\

  \hline
\nop{}
		~~~~~\textsc{ess} &  $\mathsf{NP}$-hard   & $\mathsf{NP}$-hard  & in $\mathsf{P}$  \\
  \hline
	\end{tabular}
 \caption[Computational complexity of tasks.]{Computational complexity under the three different theoretical assumptions.}
	\label{tab:complexityOld}\normalsize
\end{table}

%%%%%%%%%%%%%%%%%%%%%%%%%%%%%%%%%%%%%%%%%

The expansion graph $\mathit{eg}(\unit, \kb)$ thus organizes the tuples in $\mathbf{T}$ according to the core characterizations of all extensions of $\unit$.
It is directed and acyclic, since edges are defined via homomorphisms between non-isomorphic cores.
The sets of direct instances at each node form a partition of $\mathbf{T}$.
The node associated with $\mathit{core}(\unit,\kb)$ is the unique source of the graph, and its direct instances coincide with the essential expansion of $\unit$. See Figures~\ref{fig:KG1} and \ref{fig:KG3} in the Introduction for an example of expansion graph starting from a unary unit of size 2.

We now turn to the reasoning tasks associated with this framework.
These tasks formalize key computational goals introduced by Amendola et al.~\cite{AMENDOLA2024120331}, which center around computing characterizations, and testing membership in the essential expansion.
Table~\ref{tab:Oldtasks} provides an overview of these tasks: \textsc{can} and \textsc{core} refer to the computation of $\mathit{can}(\unit, \kb)$ and $\mathit{core}(\unit, \kb)$, respectively, while \textsc{ess} refers to checking whether a tuple $\tau$ belongs to $\mathit{ess}(\unit, \kb)$.

To better analyze the computational resources required by the aforementioned tasks in different real-world settings, Amendola et al.~considered three theoretical assumptions, each reflecting constraints on certain parameters that influence the size and shape of input units or selective knowledge bases.
Let $\omega$ denote the maximum arity of predicates in $D$. The assumptions are as follows:
\begin{itemize}
    \item[1.] $\mathit{broad}$, where the maximum arity $\omega$ of  predicates in $K$ is bounded by some fixed integer ($\omega$ is typically  ``small'' and even limited to two  in cases $\kb$ is derived  from a KG) and both $\varsigma(K,\tau)$ and $\mathit{ent}(K)$ are polynomial-time computable;
%(implicitly requiring that fact entailment over $K$ has to be in $\mathsf{P}$, as is the case for well-known ontological languages);
%
\item[2.] $\mathit{medium}$, where, in addition, $m = |\unit| $ is bounded (this value is typically two in semantic similarity, and typically referred to as ``small'' in entity set expansion); and 
\item[3.] $\mathit{handy}$, where, in addition, $|\Dom_{\varsigma(K,\tau)}| \in \mathcal{O}\left(\sqrt[m+1]{\log_2|\Dom_{\mathit{ent}(K)}|}\right)$ (the number of constants of a summary is typically referred to as ``small'' in entity summarization).
\end{itemize}

\noindent According to the above assumptions, Table~\ref{tab:complexityOld} summarizes the results already presented in~\cite{AMENDOLA2024120331}.

\nop{\section{Navigating Taxonomic Entity Set Expansions}\label{sec:LB-ESE}
Having introduced all the necessary elements, we are ready for an in-depth discussion on taxonomic entity set expansions, what they are and what they can be used for.  \nop{In order to show  what should be possible to obtain we introduce a simple yet comprehensive example that we will also carry with us in the next Sections. Please note that the input unit has been chosen to be of arity one only to maintain high readability, but the intuition behind and the possible construction of an expansion graph does not depend on the choice of the arity of the unit, which can be of arbitrary high, as the previously encountered examples suggested.}

\nop{\begin{table}[h!]
\footnotesize
\centering
\begin{minipage}{0.55\textwidth}
\centering
\begin{tabular}{|p{1cm}|p{3cm}|p{1cm}|p{2cm}|}
\hline
\textbf{M\_ ID} & \textbf{Title} & \textbf{D\_ ID} & \textbf{Genre} \\
\hline
1 & The Revenant & 201 & Drama \\
\hline
2 & The Wolf of Wall Street & 202 & Drama \\
\hline
3 & Titanic & 203 & Drama \\
\hline
4 & Spotlight & 204 & Drama \\
\hline
5 & 12 Years a Slave & 205 & Drama \\
\hline
6 & Gladiator & 206 & Historical Drama \\
\hline
\end{tabular}
\caption{This table lists various movies, showing their titles, the corresponding director IDs ($D\_ ID$), and their genres.}
\label{tab:movies}
\end{minipage}
\hfill
\begin{minipage}{0.4\textwidth}
\centering
\begin{tabular}{|p{1cm}|p{2.5cm}|}
\hline
\textbf{M\_ ID} & \textbf{DecadeOfRelease} \\
\hline
1 & 2010s \\
\hline
2 & 2010s \\
\hline
3 & 1990s \\
\hline
4 & 2010s \\
\hline
5 & 2010s \\
\hline
6 & 2000s \\
\hline
\end{tabular}
\caption{This table specifies the decade in which each movie was released, linked by movie ID ($M\_ ID$).}
\label{tab:decades}
\end{minipage}
\normalsize
\end{table}

%\vspace{1cm}

% Directors and Actors tables side by side
\begin{table}[h!]
\footnotesize
\centering
\begin{minipage}{0.48\textwidth}
\centering
\begin{tabular}{|p{1.5cm}|p{3cm}|p{1.75cm}|}
\hline
\textbf{D\_ ID} & \textbf{Name} & \textbf{YearOfBirth} \\
\hline
201 & Alejandro G. Iñárritu & 1963 \\
\hline
202 & Martin Scorsese & 1942 \\
\hline
203 & James Cameron & 1954 \\
\hline
204 & Tom McCarthy & 1966 \\
\hline
205 & Steve McQueen & 1969 \\
\hline
206 & Ridley Scott & 1937 \\
\hline
\end{tabular}
\caption{This table lists the directors by their ID ($D\_ ID$), along with their names and years of birth.}
\label{tab:directors}
\end{minipage}
\hfill
\begin{minipage}{0.48\textwidth}
\centering
\begin{tabular}{|p{1.5cm}|p{3cm}|p{1.75cm}|}
\hline
\textbf{A\_ ID} & \textbf{Name} & \textbf{YearOfBirth} \\
\hline
101 & Leonardo DiCaprio & 1974 \\
\hline
102 & Mark Ruffalo & 1967 \\
\hline
103 & Chiwetel Ejiofor & 1977 \\
\hline
104 & Russell Crowe & 1964 \\
\hline
\end{tabular}
\caption{This table contains information on actors, identified by actor ID ($A\_ ID$), their names, and their years of birth.}
\label{tab:actors}
\end{minipage}
\normalsize
\end{table}

% Citizenship and Cast tables side by side
\begin{table}[h!]
\footnotesize
\centering
\begin{minipage}{0.48\textwidth}
\centering
\begin{tabular}{|p{1.5cm}|p{2cm}|p{2.5cm}|}
\hline
\textbf{P\_ ID} & \textbf{Type} & \textbf{Nationality} \\
\hline
401 & Director & Mexican \\
\hline
402 & Director & American \\
\hline
303 & Director & Canadian \\
\hline
403 & Director & American \\
\hline
404 & Director & British \\
\hline
405 & Director & British \\
\hline
101 & Actor & American \\
\hline
201 & Actor & American \\
\hline
202 & Actor & British \\
\hline
203 & Actor & New Zealander \\
\hline
\end{tabular}
\caption{This table shows the nationality of the directors and actors, categorized by person ID ($P\_ ID$) and their type (whether they are a director or actor).}
\label{tab:citizenship}
\end{minipage}
\hfill
\begin{minipage}{0.48\textwidth}
\centering
\begin{tabular}{|p{1.5cm}|p{1.5cm}|p{3.5cm}|}
\hline
\textbf{M\_ ID} & \textbf{A\_ ID} & \textbf{Role} \\
\hline
1 & 101 & Hugh Glass \\
\hline
2 & 101 & Jordan Belfort \\
\hline
3 & 101 & Jack Dawson \\
\hline
4 & 102 & Michael Rezendes \\
\hline
5 & 103 & Solomon Northup \\
\hline
6 & 104 & Maximus \\
\hline
\end{tabular}
\caption{This table links the actors to the specific roles they played in the movies, using movie ID ($M\_ ID$) and actor ID ($A\_ ID$).}
\label{tab:cast}
\end{minipage}
\normalsize
\end{table}
}

\nop{\begin{example}\label{ex:thefirst}
   In the following, consider to have a Knowledge-Base $K$ represented by the pair $(D,\emptyset)$, where $D$ is given by the union of the data present in~\Cref{tab:decades,tab:movies,tab:directors,tab:actors,tab:citizenship,tab:cast}.
In particular, as is typical, the elements present in~\Cref{tab:decades,tab:movies,tab:directors,tab:actors,tab:citizenship,tab:cast} are an alternative representation to the set of atoms of the type $\{\mathsf{movie}(1,~\text{The Revenant},~401,~\text{Drama}),$ $~\mathsf{decade}(1,~2010s),$ $~\mathsf{director}(402,~\text{Martin Scorsese},~1942),$ $~\ldots\}$.
Next, we introduce a summary selector $\varsigma$ that, for each (tuple of) entities $\tau$, first selects all the atoms $\alpha$ in $D$ such that $\Dom_{\{\tau\}}\cap \Dom_{\{\alpha\}}\neq \emptyset$, and secondly adds, as follows from Definition~\ref{def-selector}, all atoms of the type $\top(x)$ where $x\in \Dom_{\{\alpha\}}$. Now that we have everything in place we can finally start our example. Consider two very famous movies, namely ``The Revenant'' and ``The Wolf of Wall Street'', represented in the tables by their ids which are $1$ and $2$ respectively. In particular, we are interested in understanding what the two aforementioned movies have in common. More formally, given $\unit=\{\langle 1\rangle,~\langle 2\rangle\}$, we want to know the shape of $\mathit{core}(\unit,\kb)$. For the sake of readability, we report in full the two summaries of the elements coming from our chosen unit.

\[
\varsigma(\langle 1\rangle)=\{\mathsf{movie}(1,~\text{The Revenant},~201,~\text{Drama}),~\mathsf{decade}(1,~2010s),~\mathsf{cast}(1,~101,~\text{Hugh Glass}),~
\]

\[
\top(1),~\top(\text{The Revenant}),~\top(201),~\top(\text{Drama}),~\top(2010s),~\top(101),~\top(\text{Hugh Glass})\}
\]

\[
\varsigma(\langle 2\rangle)=\{\mathsf{movie}(2,~\text{The Wolf of Wall Street},~202,~\text{Drama}),~\mathsf{decade}(2,~2010s),~\mathsf{cast}(2,~101,~\text{Jordan Belfort}),~
\]

\[
\top(2),~\top(\text{The Wolf of Wall Street}),~\top(202),~\top(\text{Drama}),~\top(2010s),~\top(101),~\top(\text{Jordan Belfort})\}
\]

\noindent At this point it should be clear how to address this issue. We can, in fact, use the characterizations defined in ~\Cref{sec:preliminaries} and this would solve the problem. In particular a $\mathit{core}(\unit, \kb)$ would be: 
 
\[
x\leftarrow \mathsf{movie}(x,~x_1,~x_2,~\text{Drama}),~\mathsf{decade}(x,~2010s),~\mathsf{cast}(x,~101,~x_3),~
\]

\[
\top(x),~\top(x_1),~\top(x_2),~\top(\text{Drama}),~\top(2010s),~\top(101),~\top(x_3).
\]

%\mybox
\noindent The above formula is nothing else than the formal representation of the phrase ``Drama films released in the first decade of the 2000s starring Leonardo DiCaprio as the leading actor''. However, what is missing, once this characterization has been found and (possibly) other entities that share the same nexus of similarity with the initial unit have also been found, is a systematic way to be able to increasingly  \emph{generalize} our initial unit following a clear, intuitive and deterministic modality.  To this end, a plausible solution is offered by the taxonomic expansion of the unit, which is a direct consequence of the expansion graph, firstly introduced in~\cite{AMENDOLA2024120331} and that has been formally defined in  Section~\ref{sec:preliminaries}.

\begin{figure*}[t!]
\begin{minipage}{0.43\textwidth}
\centering
\begin{tikzpicture}[x=1.5 cm, y=1.5 cm, every node/.style={rectangle, draw=black, fill=Lavender!20, font=\footnotesize\sffamily, text=black, align=center, text centered, inner sep=5pt}]
    \node (c) at (2,-1) {\( \varphi_3 \mapsto \{\langle 4\rangle,~ \langle 5\rangle\} \)};
    \node (d) at (1,0) {\( \varphi_4 \mapsto \{\langle 6\rangle\} \)};
    \node (b) at (0,-1) {\( \varphi_2 \mapsto \{\langle 3\rangle\} \)};
    \node (a) at (1,-2) {\( \varphi_1 \mapsto \{\langle 1\rangle,~ \langle 2\rangle\}= \unit \)};
    \node (e) at (1,1) {\( \varphi_5 \mapsto \{\langle e\rangle~:~e~\text{is any other entity}\} \)};

    \draw[->] (a) -- (b);
    \draw[->] (a) -- (c);
    \draw[->] (b) -- (d);
    \draw[->] (c) -- (d);
    \draw[->] (d) -- (e);
\end{tikzpicture}
\end{minipage}
\begin{minipage}{0.6\textwidth}
\centering
\begin{equation*}\setlength{\jot}{7.0pt}\footnotesize
    \begin{split}
        \varphi_5 = & \ x \leftarrow \top(x)\\
        \varphi_4 = & \ x \leftarrow \mathsf{movie}(x,~x_1,~x_2,~y_3),~\mathsf{decade}(x,~y_1),~\mathsf{cast}(x,~y_2,~x_3),\\
        & \  \top(x),~\top(x_1),~\top(x_2),~\top(y_3),~\top(y_1),~\top(y_2),~\top(x_3) \\
        \varphi_3 =  & \ x \leftarrow \mathsf{movie}(x,~x_1,~x_2,~\text{Drama}),~\mathsf{decade}(x,~2010s),~\mathsf{cast}(x,~y_2,~x_3),\\
        & \  \top(x),~\top(x_1),~\top(x_2),~\top(\text{Drama}),~\top(2010s),~\top(y_2),~\top(x_3) \\
        \varphi_2 =  & \ x \leftarrow \mathsf{movie}(x,~x_1,~x_2,~\text{Drama}),~\mathsf{decade}(x,~y_1),~\mathsf{cast}(x,~101,~x_3),\\
        & \  \top(x),~\top(x_1),~\top(x_2),~\top(\text{Drama}),~\top(y_1),~\top(101),~\top(x_3) \\
        \varphi_1 = & \ x \leftarrow \mathsf{movie}(x,~x_1,~x_2,~\text{Drama}),~\mathsf{decade}(x,~2010s),~\mathsf{cast}(x,~101,~x_3),\\
        & \  \top(x),~\top(x_1),~\top(x_2),~\top(\text{Drama}),~\top(2010s),~\top(101),~\top(x_3)
    \end{split}
\end{equation*}               
\end{minipage}
\caption{Expansion graph of the unit $\unit=\{\langle 1 \rangle , \langle 2\rangle\}$ together with extended versions of the characterizations.}\label{fig:KG2}
\end{figure*}

Let us now consider Figure~\ref{fig:KG2}. Focusing on formula $\varphi_1$, which characterizes the nexus of similarity between the elements of our input unit, it becomes immediately evident that there exist two generalizations of it which are incomparable with each other. On one hand we certainly have ``Drama films with Leonardo DiCaprio as the leading actor'', the formal translation of which is given by the formula $\varphi_2$, while on the other hand we have ``Drama films released in the first decade of the 2000s'',  the formal translation of which is given by the formula $\varphi_3$. The attentive reader will notice that ``Films with Leonardo DiCaprio as the leading actor released in the first decade of the 2000s'' could have been another element, however every film matching this description in our sample database is also a drama film, which leads to a more precise formula (given precisely by the characterization $\varphi_1$). This element sheds light on a crucial point of the expansion graph, in fact it is not built starting from all possible (syntactically valid) generalizations of the initial characterization, but it ensures that every characterization present has at least one ``new element'' in its instances, elements that we previously defined as direct instances (something that the formula ``Films with Leonardo DiCaprio as the leading actor released in the first decade of the 2000s'' would not have offered). As already observed in~\cite{AMENDOLA2024120331}, navigating $\mathit{eg}(\unit,\kb)$  from its source node $[\mathit{core}(\unit,\kb)]$ offers potential ``expansion plans'' for the initial unit $\unit$.
For instance, by considering again Figure \ref{fig:KG2}, one may note that $\unit= \mathit{ess}(\unit,\kb)$, already in this extremely simple example, has two possible expansion plans in $\mathit{eg}(\unit,\kb)$, depending on whether $\langle 3 \rangle$ (the id of the movie ``Titanic'') comes either before or after $\langle 4 \rangle$ and $\langle 5 \rangle$ (the ids of the movies ``Spotlight'' and ``12 Years a Slave'' respectively), as they are incomparable.

Finally,  we consider the two possible expansion paths from our scenario:

\begin{center}
$\begin{array}{c}
\unit = \{\langle 1\rangle,\langle 2 \rangle\} \ \subset \ \unit_a= \unit \cup \{\langle 3\rangle\} \ \subset \unit_a' =  \unit_a \cup \{\langle 4\rangle, \langle 5\rangle, \langle 6\rangle\} \ \subset \ \unit_a'' = \unit_a' \cup \{\langle e\rangle~:~e~\text{is any other entity}\} \medskip \\
\unit = \{\langle 1\rangle,\langle 2\rangle\} \ \subset \ \unit_b = \unit \cup \{\langle 4 \rangle, \langle 5 \rangle \} \ \subset \ \unit_b'= \unit_b \cup \{\langle 3\rangle, \langle 6\rangle\} \ \subset \ \ \unit_b'' = \unit_b' \cup \{\langle e\rangle~:~e~\text{is any other entity}\}
\end{array}$
\end{center}

\noindent This shows that a unit may admit multiple meaningful linear expansions. 
\end{example}
}

\nop{According to Example~\ref{ex:thefirst}, the choice between possible different expansion paths may depend on specific user preferences or other information inherent to the application scenario but not encoded in the given knowledge base.

It is therefore natural to ask how one can navigate an expansion graph in order to obtain different valid linear expansions, and in particular, what to expect from them, that is, whether or not there exist cases in which we should always expect some elements to be together, always one before the other, or whether perhaps, depending on the chosen path, one comes before the other but not always. It is the precise task of this article to shed light on the above-mentioned problem.
In particular, it is now time to formalize the key decision problems related to the navigation of expansion graphs: \textsc{prec}, \textsc{sim}, and \textsc{inc}.
In particular, given two $n$-ary tuples $\tau$ and $\tau'$ over $\Dom_D$, we say that:}

Building on Example~\ref{ex:thefirst}, we can now observe that the choice among different expansion paths may depend on user preferences or application-specific factors that are not explicitly represented in the knowledge base.
This naturally raises the question of how one can navigate an expansion graph to explore alternative valid linear expansions, and more importantly, what properties can be expected of such expansions. Specifically, we may ask: are there elements that always occur together? Are there elements that consistently appear in a fixed order? Or can their relative order vary depending on the path taken?
The goal of this article is to shed light on precisely these questions. To this end, we now formalize the key decision problems related to navigating expansion graphs: \textsc{prec}, \textsc{sim}, and \textsc{inc}.
Given two distinct $n$-ary tuples $\tau$ and $\tau'$ over $\Dom_D$ not already in $\unit$, we say that the nexus of similarity that $\tau$ has with $\unit$ :

\begin{itemize} 

\item[\textsc{prec}:] are {\em higher than} those that $\tau'$ has with $\unit$, 
%$\tau$ has a higher nexus of similarity with $\unit$ than $\tau'$ has (with respect to $\kb$), 
written \mbox{$\tau \! \prec_{^\unit}^{_\kb} \! \tau'$,} if $\mathit{ess}(\unit ~\cup~ \{\tau\},\kb) \subset \mathit{ess}(\unit ~\cup~ \{\tau'\},\kb)$. Consider for instance tuples $\langle 3 \rangle$ and $\langle 6\rangle$ in   Example~\ref{ex:thefirst}. 
\item[\textsc{sim}:]   {\em coincide with} those  that $\tau'$ has with $\unit$ (with respect to $\kb$), written \mbox{$\tau \! \sim_{^\unit}^{_\kb} \! \tau'$,} if
$\mathit{ess}(\unit ~\cup~ \{\tau\},\kb)\!=\!\mathit{ess}(\unit ~\cup~ \{\tau'\},\kb)$. Consider for instance tuples $\langle 4 \rangle$ and $\langle 5\rangle$ in   Example~\ref{ex:thefirst}.
\item[\textsc{inc}:]  are {\em incomparable to} those that $\tau'$ has with $\unit$, 
%$(iii)$ $\tau$ has a nexus of similarity  with $\unit$ incomparable to the nexus of similarity  that $\tau'$ has with $\unit$ 
%$\tau$ and $\tau'$ have 
%an incomparable nexus of similarity with $\unit$ 
(with respect to $\kb$), written \mbox{$\tau \|_{^\unit}^{_\kb} \tau'$,} 
if 
\mbox{$\tau \! \prec_{^\unit}^{_\kb} \! \tau'$}, \mbox{$\tau' \! \prec_{^\unit}^{_\kb} \! \tau$}, and 
\mbox{$\tau \! \sim_{^\unit}^{_\kb} \! \tau'$} do not hold. Consider for instance tuples $\langle 3 \rangle$ and $\langle 4\rangle$ in   Example~\ref{ex:thefirst}.
%	$\mathit{ess}(\unit ~\cup~ \{\tau\},\kb)$ $\not\subset$ $\mathit{ess}(\unit ~\cup~ \{\tau'\},\kb)$,
 %	$\mathit{ess}(\unit ~\cup~ \{\tau'\},\kb)$ $\not\subset$ $\mathit{ess}(\unit ~\cup~ \{\tau\},\kb)$, and
 % 	$\mathit{ess}(\unit ~\cup~ \{\tau\},\kb)$ $\neq$ $\mathit{ess}(\unit ~\cup~ \{\tau'\},\kb)$ hold.}
 \end{itemize}

Table~\ref{tab:complexityNew1} summarizes the results that will be discussed in the following sections. A quick comparison between Table~\ref{tab:complexityOld} and Table~\ref{tab:complexityNew1} clearly shows that the results presented in~\cite{AMENDOLA2024120331} have been strengthened.
The in-depth analysis of the \textsc{can}, \textsc{core}, and \textsc{ess} problems holds significant value on its own and will be further exploited in the analysis of the new problems under consideration.}

%\section{Navigating Similarity: Formalization and Complexity}
\section{Navigating Taxonomic Expansions of Entity Sets}
\label{sec:navigating-similarity}

The expansion graph introduced in the previous section serves as a ``map of similarity'', where (tuples of) entities from the knowledge base are positioned with respect to the initial unit.
For instance, according to Figure~\ref{fig:KG1} and Figure~\ref{fig:KG3}, the tuple $\langle \mathsf{Gardaland} \rangle$ shares more nexus of similarity with $\unit_0$ than $\langle \mathsf{Leolandia} \rangle$, whereas the nexus of similarity between $\langle \mathsf{Pacific~Park} \rangle$ and the entities of $\unit_0$ are of a different and incomparable nature compared to those between $\langle \mathsf{Gardaland} \rangle$ and $\unit_0$. Indeed, $\langle \mathsf{Pacific~Park} \rangle$ is not a theme park, while $\langle \mathsf{Gardaland} \rangle$ is not located in the US.

Constructing the entire expansion graph, however, may be computationally expensive or prohibitive when operating with large-scale, real-world knowledge bases.
An alternative approach consists in navigating the graph locally by comparing pairs of tuples based on the nodes of which they are direct instances.
As shown in Figure~\ref{fig:KG3}, the tuples $\langle \mathsf{Gardaland} \rangle$ and $\langle \mathsf{Pacific~Park} \rangle$ are direct instances of two incomparable nodes, while $\langle \mathsf{Leolandia} \rangle$ and $\langle \mathsf{Prater} \rangle$ are direct instances of the same node.
This motivates some  fundamental questions regarding a taxonomic expansion of an input unit: \emph{Should one entity be presented before another because it shares more nexus of similarity to the input?} \emph{Can we identify entities that share the same nexus of similarity with respect to the input?} \emph{How can we determine whether entities share incomparable nexus of similarity with the input?}
To provide concrete answers to these questions, the most natural approach is to formalize them as reasoning tasks.
This is done in Table~\ref{tab:NewTasks}, where we rely on condition~$(ii)$ of Definition~\ref{def-exp-graph}, which specifies how nodes in an expansion graph are connected via arcs.
In simple terms, given an SKB $\kb$, an initial $n$-ary unit $\unit$, and two additional, distinct $n$-ary tuples $\tau$ and $\tau'$, we ask whether the nexus of similarity that $\tau$ shares with $\unit$:
\begin{itemize}
     \item are \emph{higher than} those that $\tau'$ shares with $\unit$ --- this corresponds to task \textsc{prec}, where the node whose direct instances include $\tau'$ is reachable from the node whose direct instances include $\tau$;
    
    \item \emph{coincides with} those that $\tau'$ shares with $\unit$ --- this corresponds to task \textsc{sim}, where there exists a node whose direct instances include both $\tau$ and $\tau'$.
    
    \item are \emph{incomparable to} those that $\tau'$ shares with $\unit$ --- this corresponds to task \textsc{inc}, where  the nodes whose direct instances include $\tau$ and $\tau'$ are mutually unreachable.
\end{itemize}

\begin{table}[t!]
\centering
\renewcommand{\arraystretch}{1.4}
\begin{tabular}{>{\centering\arraybackslash}m{2cm} >{\centering\arraybackslash}m{2cm} >{\centering\arraybackslash}m{8cm}>
{\centering\arraybackslash}m{2cm}}
    \hline
    {\bf Problem} & {\bf Input} & {\bf Reasoning Task} & {\bf Shorthand}\\
    \hline

    \multirow{2}{*}{\textsc{prec}} & \multirow{2}{*}{$\kb, \unit, \tau, \tau'$} & 
    do $\mathit{core}(\unit \cup \{\tau'\}, \kb) \longrightarrow \mathit{core}(\unit \cup \{\tau\}, \kb)$ and & \multirow{2}{*}{$\tau \prec_{^\unit}^{_\kb} \tau'$}\\ 
    & &  $\mathit{core}(\unit \cup \{\tau\}, \kb) \longarrownot\longrightarrow \mathit{core}(\unit \cup \{\tau'\}, \kb)$ both hold?  \\

    \hline

    \multirow{2}{*}{\textsc{sim}} & \multirow{2}{*}{$\kb, \unit, \tau, \tau'$} & 
    do $\mathit{core}(\unit \cup \{\tau'\}, \kb) \longrightarrow \mathit{core}(\unit \cup \{\tau\}, \kb)$ and & \multirow{2}{*}{$\tau \sim_{^\unit}^{_\kb} \tau'$}\\ 
    & &  $\mathit{core}(\unit \cup \{\tau\}, \kb) \longrightarrow \mathit{core}(\unit \cup \{\tau'\}, \kb)$ both hold?  \\

    \hline

    \multirow{2}{*}{\textsc{inc}} & \multirow{2}{*}{$\kb, \unit, \tau, \tau'$} & 
    do $\mathit{core}(\unit \cup \{\tau'\}, \kb) \longarrownot\longrightarrow \mathit{core}(\unit \cup \{\tau\}, \kb)$ and & \multirow{2}{*}{$\tau \parallel_{^\unit}^{_\kb} \tau'$}\\ 
    & &  $\mathit{core}(\unit \cup \{\tau\}, \kb) \longarrownot\longrightarrow \mathit{core}(\unit \cup \{\tau'\}, \kb)$ both hold?\\

    \hline
\end{tabular}
\normalsize
\caption{New computational Tasks; everywhere $\{\tau\}\neq\{\tau'\}, \{\tau\}\cap \unit = \emptyset=\{\tau'\}\cap \unit$.}
\label{tab:NewTasks}
\end{table}

\noindent These new tasks are inherently based on the notions of core characterization and homomorphism between formulas.
Accordingly, their implementation would involve three main steps: first, constructing $\mathit{can}(\unit \cup \{\tau\})$ and $\mathit{can}(\unit \cup \{\tau'\})$; second, computing from them the corresponding core characterizations $\mathit{core}(\unit \cup \{\tau\})$ and $\mathit{core}(\unit \cup \{\tau'\})$; and finally, verifying the existence or non-existence of homomorphisms from $\mathit{core}(\unit \cup \{\tau'\})$ to $\mathit{core}(\unit \cup \{\tau\})$, and vice versa.
This process, however, can be computationally expensive, primarily due to the construction of core characterizations.
Indeed, as shown in Table~\ref{tab:complexityOld}, the \textsc{core} computation tends to be more demanding than \textsc{can}.
One might consider avoiding the construction of core characterizations altogether by performing homomorphism checks directly on the canonical characterizations.
Yet, since canonical characterizations can be considerably large in practice, such checks may also become computationally burdensome.
In what follows, to mitigate these potential limitations, we reformulate our reasoning tasks in terms of essential expansions.
This reformulation is formally captured by the following proposition.

\begin{proposition}\label{taskViaEss}
The following hold:
\begin{itemize}
    \item $\tau \prec_{^\unit}^{_\kb} \tau'$ if, and only if, $\mathit{ess}(\unit \cup \{\tau\}, \kb) \subset \mathit{ess}(\unit \cup \{\tau'\}, \kb)$;

  \item $\tau \sim_{^\unit}^{_\kb} \tau'$ if, and only if, 
    $\mathit{ess}(\unit \cup \{\tau\}, \kb) = \mathit{ess}(\unit \cup \{\tau'\}, \kb)$;

  \item $\tau \parallel_{^\unit}^{_\kb} \tau'$ if, and only if,  $\mathit{ess}(\unit \cup \{\tau\}, \kb) \setminus \mathit{ess}(\unit \cup \{\tau'\}, \kb) \neq \emptyset$ and 
    $\mathit{ess}(\unit \cup \{\tau'\}, \kb) \setminus \mathit{ess}(\unit \cup \{\tau\}, \kb) \neq \emptyset$.
\end{itemize}    
\end{proposition}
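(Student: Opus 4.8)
The plan is to reduce each of the three biconditionals to Proposition~\ref{prop:resfortask1}, which states that $\mathit{ess}(\unit\cup\{\tau_1\},\kb)\subseteq\mathit{ess}(\unit\cup\{\tau_2\},\kb)$ if and only if $\mathit{core}(\unit\cup\{\tau_2\},\kb)\longrightarrow\mathit{core}(\unit\cup\{\tau_1\},\kb)$. The key observation is that all three new tasks in Table~\ref{tab:NewTasks} are defined purely in terms of the homomorphism relation between the two core characterizations $\gamma_1 = \mathit{core}(\unit\cup\{\tau\},\kb)$ and $\gamma_2 = \mathit{core}(\unit\cup\{\tau'\},\kb)$, while the right-hand sides of the proposition are stated in terms of set-theoretic relations between $E_1 = \mathit{ess}(\unit\cup\{\tau\},\kb)$ and $E_2 = \mathit{ess}(\unit\cup\{\tau'\},\kb)$. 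So the whole proof is a matter of translating back and forth via Proposition~\ref{prop:resfortask1}.

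First I would record the four instances of Proposition~\ref{prop:resfortask1} that I will need: $E_1\subseteq E_2 \iff \gamma_2\longrightarrow\gamma_1$, and symmetrically $E_2\subseteq E_1 \iff \gamma_1\longrightarrow\gamma_2$. For \textsc{prec}: by definition $\tau\prec_{^\unit}^{_\kb}\tau'$ means $\gamma_2\longrightarrow\gamma_1$ and $\gamma_1\longarrownot\longrightarrow\gamma_2$; applying the two instances this is equivalent to $E_1\subseteq E_2$ and $E_2\not\subseteq E_1$, i.e.\ $E_1\subsetneq E_2$, which is exactly the claimed statement $\mathit{ess}(\unit\cup\{\tau\},\kb)\subset\mathit{ess}(\unit\cup\{\tau'\},\kb)$. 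For \textsc{sim}: $\tau\sim_{^\unit}^{_\kb}\tau'$ means $\gamma_2\longrightarrow\gamma_1$ and $\gamma_1\longrightarrow\gamma_2$, equivalently $E_1\subseteq E_2$ and $E_2\subseteq E_1$, i.e.\ $E_1 = E_2$. For \textsc{inc}: $\tau\parallel_{^\unit}^{_\kb}\tau'$ means $\gamma_2\longarrownot\longrightarrow\gamma_1$ and $\gamma_1\longarrownot\longrightarrow\gamma_2$, equivalently $E_1\not\subseteq E_2$ and $E_2\not\subseteq E_1$; negating the first gives a tuple in $E_1\setminus E_2$ and negating the second a tuple in $E_2\setminus E_1$, so both set differences are nonempty, which is the claim.

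I should be slightly careful about one subtlety: Proposition~\ref{prop:resfortask1} is phrased with $\subseteq$ and a homomorphism in one direction, and I am using both directions as well as their negations. This is legitimate because the biconditional can be contraposed freely; $E_1\not\subseteq E_2$ is equivalent to $\gamma_2\longarrownot\longrightarrow\gamma_1$ simply by negating both sides of the equivalence. Since $\mathit{ess}(\unit\cup\{\tau\},\kb) = \mathit{inst}(\mathit{core}(\unit\cup\{\tau\},\kb),\kb)$ is well-defined and finite (as guaranteed by the framework), there is no issue with the sets being ill-defined, and strict inclusion $E_1\subsetneq E_2$ genuinely decomposes as $E_1\subseteq E_2$ together with $E_2\not\subseteq E_1$.

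There is no real obstacle here; the statement is essentially a bookkeeping corollary of Proposition~\ref{prop:resfortask1}. The only thing requiring a line of attention is making the case distinctions match the Table~\ref{tab:NewTasks} definitions exactly — in particular noting that the \textsc{inc} case uses that over a (finite) set of $n$-ary tuples the failure of an inclusion is witnessed by an explicit element of the set difference. I would present the argument as three short paragraphs, one per task, each just citing Proposition~\ref{prop:resfortask1} twice and combining.
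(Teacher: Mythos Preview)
Your proposal is correct and takes essentially the same approach as the paper: both unfold the definitions from Table~\ref{tab:NewTasks}, apply Proposition~\ref{prop:resfortask1} (and its contrapositive) in each direction to translate the homomorphism conditions on the cores into inclusion conditions on the essential expansions, and then rewrite these as strict inclusion, equality, and mutual non-inclusion respectively. The paper spells out only the \textsc{prec} case and leaves the other two as analogous, whereas you treat all three explicitly, but the underlying argument is identical.
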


\begin{proof} By definition, according to Table~\ref{tab:NewTasks}, 
$\tau \prec_{^\unit}^{_\kb} \tau'$ holds if, and only if, both $\mathit{core}(\unit \cup \{\tau'\}, \kb) \longrightarrow \mathit{core}(\unit \cup \{\tau\}, \kb)$ and $\mathit{core}(\unit \cup \{\tau\}, \kb) \longarrownot\longrightarrow \mathit{core}(\unit \cup \{\tau'\}, \kb)$ hold. 
Then, by Proposition~\ref{prop:resfortask1}, these conditions are equivalent to having both $\mathit{ess}(\unit \cup \{\tau\}, \kb) \subseteq \mathit{ess}(\unit \cup \{\tau'\}, \kb)$ and  $\mathit{ess}(\unit \cup \{\tau'\}, \kb) \not\subseteq \mathit{ess}(\unit \cup \{\tau\}, \kb)$. 
This, in turn, is equivalent to the strict inclusion $\mathit{ess}(\unit \cup \{\tau\}, \kb) \subset \mathit{ess}(\unit \cup \{\tau'\}, \kb)$, thus proving the first statement.
The remaining two statements follow analogously by applying the same reasoning pattern and using Proposition~\ref{prop:resfortask1}.
\end{proof}

\nop{ 

\begin{proof} Below we report the proof of the first point, the other two points can be obtained following the same proof scheme.
Suppose that $\tau \prec_{^\unit}^{_\kb} \tau'$ holds. By definition (see Table~\ref{tab:NewTasks}), $\mathit{core}(\unit \cup \{\tau'\}, \kb) \longrightarrow \mathit{core}(\unit \cup \{\tau\}, \kb)$ and $\mathit{core}(\unit \cup \{\tau\}, \kb) \not\longrightarrow \mathit{core}(\unit \cup \{\tau'\}, \kb)$ both hold. Hence, by Proposition~\ref{prop:resfortask1}, we get that both $\mathit{ess}(\unit \cup \{\tau\}, \kb) \subseteq \mathit{ess}(\unit \cup \{\tau'\}, \kb)$ and  $\mathit{ess}(\unit \cup \{\tau'\}, \kb) \not\subseteq \mathit{ess}(\unit \cup \{\tau\}, \kb)$ hold. 
This implies that $\mathit{ess}(\unit \cup \{\tau\}, \kb) \subset \mathit{ess}(\unit \cup \{\tau'\}, \kb)$. By following the chain of implications just described backwards (each of them was in fact an if, and only if) we obtain the required result.
\end{proof}

Now assume that $\tau \sim_{^\unit}^{_\kb} \tau'$ holds. in other words that $\mathit{core}(\unit \cup \{\tau'\}, \kb) \longrightarrow \mathit{core}(\unit \cup \{\tau\}, \kb)$ and $\mathit{core}(\unit \cup \{\tau\}, \kb) \longrightarrow \mathit{core}(\unit \cup \{\tau'\}, \kb)$ hold at the same time. From $(i)$ we get that both $\mathit{ess}(\unit \cup \{\tau\}, \kb) \subseteq \mathit{ess}(\unit \cup \{\tau'\}, \kb)$ and  $\mathit{ess}(\unit \cup \{\tau'\}, \kb) \subseteq \mathit{ess}(\unit \cup \{\tau\}, \kb)$, hold. Hence, we get that $\mathit{ess}(\unit \cup \{\tau\}, \kb) = \mathit{ess}(\unit \cup \{\tau'\}, \kb)$.

\begin{itemize}
        \item 
        \begin{itemize}
        \item
        \item Suppose on the other hand that $\mathit{ess}(\unit \cup \{\tau\}, \kb) \subset \mathit{ess}(\unit \cup \{\tau'\}, \kb)$ hold. We can rephrase the latter as $\mathit{ess}(\unit \cup \{\tau\}, \kb) \subseteq \mathit{ess}(\unit \cup \{\tau'\}, \kb)$ and  $\mathit{ess}(\unit \cup \{\tau'\}, \kb) \not\subseteq \mathit{ess}(\unit \cup \{\tau\}, \kb)$, hold.  Hence, from $(i)$ we get that $\mathit{core}(\unit \cup \{\tau'\}, \kb) \longrightarrow \mathit{core}(\unit \cup \{\tau\}, \kb)$ and $\mathit{core}(\unit \cup \{\tau\}, \kb) \not\longrightarrow \mathit{core}(\unit \cup \{\tau'\}, \kb)$ hold, which is what we wanted to show.
    \end{itemize}
        \item \begin{itemize}
          \item Suppose that 
        \item Suppose on the other hand that $\mathit{ess}(\unit \cup \{\tau\}, \kb) = \mathit{ess}(\unit \cup \{\tau'\}, \kb)$ hold. We can rephrase the latter as $\mathit{ess}(\unit \cup \{\tau\}, \kb) \subseteq \mathit{ess}(\unit \cup \{\tau'\}, \kb)$ and  $\mathit{ess}(\unit \cup \{\tau'\}, \kb) \subseteq \mathit{ess}(\unit \cup \{\tau\}, \kb)$, hold.  Hence, from $(i)$ we get that $\mathit{core}(\unit \cup \{\tau'\}, \kb) \longrightarrow \mathit{core}(\unit \cup \{\tau\}, \kb)$ and $\mathit{core}(\unit \cup \{\tau\}, \kb) \longrightarrow \mathit{core}(\unit \cup \{\tau'\}, \kb)$ hold, which is what we wanted to show.
    \end{itemize}
        \item \begin{itemize}
          \item Suppose that $\tau \parallel_{^\unit}^{_\kb} \tau'$, in other words that $\mathit{core}(\unit \cup \{\tau'\}, \kb) \not\longrightarrow \mathit{core}(\unit \cup \{\tau\}, \kb)$ and $\mathit{core}(\unit \cup \{\tau\}, \kb) \not\longrightarrow \mathit{core}(\unit \cup \{\tau'\}, \kb)$ hold at the same time. From $(i)$ we get that both $\mathit{ess}(\unit \cup \{\tau\}, \kb) \not\subseteq \mathit{ess}(\unit \cup \{\tau'\}, \kb)$ and  $\mathit{ess}(\unit \cup \{\tau'\}, \kb) \not\subseteq \mathit{ess}(\unit \cup \{\tau\}, \kb)$, hold. Hence, we get that $\mathit{ess}(\unit \cup \{\tau\}, \kb) \setminus \mathit{ess}(\unit \cup \{\tau'\}, \kb) \neq \emptyset$ and 
    $\mathit{ess}(\unit \cup \{\tau'\}, \kb) \setminus \mathit{ess}(\unit \cup \{\tau\}, \kb) \neq \emptyset$.
        \item Suppose on the other hand that $\mathit{ess}(\unit \cup \{\tau\}, \kb) \setminus \mathit{ess}(\unit \cup \{\tau'\}, \kb) \neq \emptyset$ and 
    $\mathit{ess}(\unit \cup \{\tau'\}, \kb) \setminus \mathit{ess}(\unit \cup \{\tau\}, \kb) \neq \emptyset$ hold. We can rephrase the latter as $\mathit{ess}(\unit \cup \{\tau\}, \kb) \not\subseteq \mathit{ess}(\unit \cup \{\tau'\}, \kb)$ and  $\mathit{ess}(\unit \cup \{\tau'\}, \kb) \not\subseteq \mathit{ess}(\unit \cup \{\tau\}, \kb)$, hold.  Hence, from $(i)$ we get that $\mathit{core}(\unit \cup \{\tau'\}, \kb) \not\longrightarrow \mathit{core}(\unit \cup \{\tau\}, \kb)$ and $\mathit{core}(\unit \cup \{\tau\}, \kb) \not\longrightarrow \mathit{core}(\unit \cup \{\tau'\}, \kb)$ hold, which is what we wanted to show.
    \end{itemize}
    \end{itemize}
}

A naive implementation of our reasoning tasks based on Proposition~\ref{taskViaEss} is still not optimal.
Specifically, one would first need to compute $\mathit{can}(\unit \cup \{\tau\})$ and $\mathit{can}(\unit \cup \{\tau'\})$, then evaluate them over $\mathit{ent}(K)^\top$ to obtain the corresponding essential expansions $\mathit{ess}(\unit \cup \{\tau\}, \kb)$ and $\mathit{ess}(\unit \cup \{\tau'\}, \kb)$, and finally compare these sets of instances to determine their relationship.
Stated in this way, however, the tasks may appear even more demanding than in their original formulation.
Nevertheless, the reformulation via essential expansions enables a further step: it provides a foundation for simplifying and refining the actual checks that need to be performed.
This refinement is formally captured by the following proposition.

\begin{proposition}\label{taskViaEssEasy}
The following hold:
\begin{itemize}
    \item $\tau \prec_{^\unit}^{_\kb} \tau'$ if, and only if, $\tau \in \mathit{ess}(\unit \cup \{\tau'\}, \kb)$ and $\tau' \not\in \mathit{ess}(\unit \cup \{\tau\}, \kb)$;

  \item $\tau \sim_{^\unit}^{_\kb} \tau'$ if, and only if, 
    $\tau \in \mathit{ess}(\unit \cup \{\tau'\}, \kb)$ and $\tau' \in \mathit{ess}(\unit \cup \{\tau\}, \kb)$;

  \item $\tau \parallel_{^\unit}^{_\kb} \tau'$ if, and only if,  $\tau \not\in \mathit{ess}(\unit \cup \{\tau'\}, \kb)$ and $\tau' \not\in \mathit{ess}(\unit \cup \{\tau\}, \kb)$;
\end{itemize}    
\end{proposition}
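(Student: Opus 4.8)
The plan is to derive this proposition from Proposition~\ref{taskViaEss} together with two elementary observations about essential expansions: first, that $\mathit{ess}(\unit \cup \{\tau\}, \kb)$ always contains $\unit \cup \{\tau\}$ (stated in the excerpt as ``$\mathit{ess}(\unit, \kb) \supseteq \unit$ always holds''); second, a kind of monotonicity/absorption property saying that once one of $\tau, \tau'$ lies in the other's essential expansion, adding it back does not change that essential expansion. Concretely, I would first establish the following key lemma: if $\tau' \in \mathit{ess}(\unit \cup \{\tau\}, \kb)$, then $\mathit{ess}(\unit \cup \{\tau\} \cup \{\tau'\}, \kb) = \mathit{ess}(\unit \cup \{\tau\}, \kb)$, and hence $\mathit{ess}(\unit \cup \{\tau'\}, \kb) \subseteq \mathit{ess}(\unit \cup \{\tau\}, \kb)$. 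This follows because $\tau' \in \mathit{ess}(\unit \cup \{\tau\}, \kb) = \mathit{inst}(\mathit{core}(\unit\cup\{\tau\},\kb),\kb)$ means every tuple explained by the core of $\unit\cup\{\tau\}$ is also explained after adding $\tau'$, so the core of $\unit \cup \{\tau\} \cup \{\tau'\}$ is homomorphically equivalent to the core of $\unit \cup \{\tau\}$; then apply Proposition~\ref{prop:resfortask1} to the pair $\unit \cup \{\tau'\}$ and $\unit \cup \{\tau\}$ (noting $\unit \cup \{\tau\} \subseteq \unit \cup \{\tau\} \cup \{\tau'\}$ forces the core inclusion in one direction).

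With this lemma in hand, each of the three equivalences reduces to a short argument. For \textsc{prec}: by Proposition~\ref{taskViaEss}, $\tau \prec_{^\unit}^{_\kb} \tau'$ iff $\mathit{ess}(\unit \cup \{\tau\}, \kb) \subsetneq \mathit{ess}(\unit \cup \{\tau'\}, \kb)$. For the forward direction, strict inclusion gives $\tau \in \mathit{ess}(\unit \cup \{\tau\},\kb) \subseteq \mathit{ess}(\unit \cup \{\tau'\},\kb)$, and it rules out $\tau' \in \mathit{ess}(\unit \cup \{\tau\},\kb)$, since by the lemma that would force $\mathit{ess}(\unit \cup \{\tau'\},\kb) \subseteq \mathit{ess}(\unit \cup \{\tau\},\kb)$, contradicting strictness. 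For the converse, $\tau \in \mathit{ess}(\unit \cup \{\tau'\},\kb)$ yields (by the lemma, with the roles of $\tau,\tau'$ swapped) $\mathit{ess}(\unit\cup\{\tau\},\kb)\subseteq \mathit{ess}(\unit\cup\{\tau'\},\kb)$, while $\tau' \notin \mathit{ess}(\unit\cup\{\tau\},\kb)$ together with $\tau' \in \mathit{ess}(\unit\cup\{\tau'\},\kb)$ witnesses that the inclusion is strict. The \textsc{sim} case is symmetric: $\tau \in \mathit{ess}(\unit\cup\{\tau'\},\kb)$ and $\tau' \in \mathit{ess}(\unit\cup\{\tau\},\kb)$ give, via the lemma applied both ways, the two inclusions that combine into equality, and conversely equality immediately puts $\tau$ in $\mathit{ess}(\unit\cup\{\tau'\},\kb)$ and $\tau'$ in $\mathit{ess}(\unit\cup\{\tau\},\kb)$ using $\tau \in \mathit{ess}(\unit\cup\{\tau\},\kb)$ and $\tau' \in \mathit{ess}(\unit\cup\{\tau'\},\kb)$. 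For \textsc{inc}, recall $\tau \parallel_{^\unit}^{_\kb} \tau'$ means none of $\prec$, $\succ$, $\sim$ holds; using Proposition~\ref{taskViaEss} this is exactly mutual non-inclusion of the two essential expansions, i.e. $\mathit{ess}(\unit\cup\{\tau\},\kb)\setminus\mathit{ess}(\unit\cup\{\tau'\},\kb)\neq\emptyset$ and vice versa; by the contrapositive of the lemma these non-inclusions are equivalent to $\tau' \notin \mathit{ess}(\unit\cup\{\tau\},\kb)$ and $\tau \notin \mathit{ess}(\unit\cup\{\tau'\},\kb)$ respectively.

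The main obstacle is the key lemma, specifically the claim that $\tau' \in \mathit{ess}(\unit \cup \{\tau\}, \kb)$ implies the core characterizations of $\unit\cup\{\tau\}$ and of $\unit\cup\{\tau,\tau'\}$ are homomorphically equivalent. This requires unwinding the definitions of characterization and essential expansion from the Framework Overview: $\mathit{ess}(\unit\cup\{\tau\},\kb) = \mathit{inst}(\mathit{core}(\unit\cup\{\tau\},\kb),\kb)$, so $\tau' \in \mathit{ess}(\unit\cup\{\tau\},\kb)$ says $\mathit{core}(\unit\cup\{\tau\},\kb)$ explains $\tau'$, hence explains all of $\unit\cup\{\tau,\tau'\}$, hence homomorphically maps into any characterization of $\unit\cup\{\tau,\tau'\}$, and in particular into $\mathit{core}(\unit\cup\{\tau,\tau'\},\kb)$; the reverse homomorphism is automatic since any explanation of $\unit\cup\{\tau,\tau'\}$ is also an explanation of $\unit\cup\{\tau\}$. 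One should double-check the edge conditions from Table~\ref{tab:NewTasks} ($\{\tau\}\neq\{\tau'\}$, both disjoint from $\unit$) are not needed beyond ensuring the tasks are well-posed; indeed, the reformulation above does not actually rely on them. Alternatively, if one prefers to avoid re-deriving the lemma, it may already be available as a consequence of results in Section~\ref{sec:computing}, in which case the proposition is a direct corollary of Proposition~\ref{taskViaEss} plus that fact; I would cite whichever form is cleanest.
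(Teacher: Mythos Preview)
Your proposal is correct and rests on the same core observation as the paper: $\tau' \in \mathit{ess}(\unit\cup\{\tau\},\kb)$ means $\mathit{core}(\unit\cup\{\tau\},\kb)$ explains $\unit\cup\{\tau'\}$, hence maps homomorphically into $\mathit{core}(\unit\cup\{\tau'\},\kb)$, which (via Proposition~\ref{prop:resfortask1}) is exactly the inclusion $\mathit{ess}(\unit\cup\{\tau'\},\kb)\subseteq\mathit{ess}(\unit\cup\{\tau\},\kb)$. The paper uses this observation inline, going back and forth between Proposition~\ref{taskViaEss} and the homomorphism definition of $\prec$ from Table~\ref{tab:NewTasks}, whereas you package it as an explicit absorption lemma and then work purely set-theoretically through Proposition~\ref{taskViaEss}; your detour through the intermediate unit $\unit\cup\{\tau,\tau'\}$ is valid but unnecessary, since $\mathit{core}(\unit\cup\{\tau\},\kb)$ already explains $\unit\cup\{\tau'\}$ directly (not only the larger set), giving the needed homomorphism to $\mathit{core}(\unit\cup\{\tau'\},\kb)$ in one step.
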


\begin{proof} Below we report the proof of the first point, the other two points can be obtained following the same proof scheme. 
 $\Longrightarrow$ By Proposition~\ref{taskViaEss}, the latter means that $\mathit{ess}(\unit \cup \{\tau\}, \kb) \subset \mathit{ess}(\unit \cup \{\tau'\}, \kb)$. Hence, since $\unit \cup \{\tau\}\subseteq \mathit{ess}(\unit \cup \{\tau\}, \kb) $, always holds, we get that $\tau \in \mathit{ess}(\unit \cup \{\tau'\}, \kb)$. We then need to prove that $\tau' \not\in \mathit{ess}(\unit \cup \{\tau\}, \kb)$. Suppose by contradiction that the latter does not hold. This implies that $\mathit{core}(\unit\cup\{\tau\},\kb)$ explains $\unit \cup \{\tau'\}$. By definition of characterization we then have that $\mathit{core}(\unit \cup \{\tau\}, \kb) \longrightarrow \mathit{core}(\unit \cup \{\tau'\},\kb)$ which is against our hypotheses.

 $\Longleftarrow$ Suppose $\tau \in \mathit{ess}(\unit \cup \{\tau'\}, \kb)$ and $\tau' \not\in \mathit{ess}(\unit \cup \{\tau\}, \kb)$. The latter means that $\mathit{core}(\unit\cup\{\tau'\},\kb)$ is an explanation for  $\unit\cup\{\tau\}$, and therefore $\mathit{core}(\unit \cup \{\tau'\}, \kb) \longrightarrow \mathit{core}(\unit \cup \{\tau\},\kb)$. Now suppose by contradiction that $\mathit{core}(\unit \cup \{\tau\}, \kb) \longrightarrow \mathit{core}(\unit \cup \{\tau'\})$, the above implies $\tau' \in \mathit{ess}(\unit \cup \{\tau\}, \kb)$ which is against our hypotheses.
   \nop{ \begin{itemize}
        \item 
        \begin{itemize}
        \item Suppose $\tau \prec_{^\unit}^{_\kb} \tau'$. By Proposition~\ref{taskViaEss}, the latter means that $\mathit{ess}(\unit \cup \{\tau\}, \kb) \subset \mathit{ess}(\unit \cup \{\tau'\}, \kb)$. Hence, since $\unit \cup \{\tau\}\subseteq \mathit{ess}(\unit \cup \{\tau\}, \kb) $, always holds, we get that $\tau \in \mathit{ess}(\unit \cup \{\tau'\}, \kb)$. We then need to prove that $\tau' \not\in \mathit{ess}(\unit \cup \{\tau\}, \kb)$. Suppose by contradiction that the latter does not hold. This implies that $\varphi$ explains $\unit \cup \{\tau'\}$. The latter is absurd since on the one hand, by hypothesis, $\mathit{core}(\unit \cup \{\tau\}, \kb) \longarrownot\longrightarrow \mathit{core}(\unit \cup \{\tau'\})$ but on the other hand, since $\varphi$ explains $\unit \cup \{\tau'\}$ we should have $\varphi\longrightarrow \mathit{core}(\unit \cup \{\tau'\})$, which implies $\mathit{core}(\unit \cup \{\tau\}, \kb) \longrightarrow \mathit{core}(\unit \cup \{\tau'\})$.
        \item Suppose $\tau \in \mathit{ess}(\unit \cup \{\tau'\}, \kb)$ and $\tau' \not\in \mathit{ess}(\unit \cup \{\tau\}, \kb)$. The latter means that $\mathit{core}(\unit\cup\{\tau'\},\kb)$ is an explanation for  $\unit\cup\{\tau\}$, and therefore $\mathit{core}(\unit \cup \{\tau'\}, \kb) \longrightarrow \mathit{core}(\unit \cup \{\tau\},\kb)$. Now suppose by contradiction that $\mathit{core}(\unit \cup \{\tau\}, \kb) \longrightarrow \mathit{core}(\unit \cup \{\tau'\})$, the above would imply $\tau' \in \mathit{ess}(\unit \cup \{\tau\}, \kb)$ which is against our hypotheses.
    \end{itemize}
        \item \begin{itemize}
        \item Suppose $\tau \sim_{^\unit}^{_\kb} \tau'$. By Proposition~\ref{taskViaEss}, the latter means that $\mathit{ess}(\unit \cup \{\tau\}, \kb)=\mathit{ess}(\unit \cup \{\tau'\}, \kb)$. Hence, both $\tau \in \mathit{ess}(\unit \cup \{\tau'\}, \kb)$ and $\tau' \in \mathit{ess}(\unit \cup \{\tau\}, \kb)$ hold.  
        \item Suppose both $\tau \in \mathit{ess}(\unit \cup \{\tau'\}, \kb)$ and $\tau' \in \mathit{ess}(\unit \cup \{\tau\}, \kb)$ hold. The latter means that $\mathit{core}(\unit\cup\{\tau'\},\kb)$ is an explanation for  $\unit\cup\{\tau\}$ and, at the same time, $\mathit{core}(\unit\cup\{\tau\},\kb)$ is an explanation for  $\unit\cup\{\tau'\}$. Therefore, $\mathit{core}(\unit \cup \{\tau'\}, \kb) \longrightarrow \mathit{core}(\unit \cup \{\tau\},\kb)$ and $\mathit{core}(\unit \cup \{\tau\}, \kb) \longrightarrow \mathit{core}(\unit \cup \{\tau'\},\kb)$. 
    \end{itemize}
        \item \begin{itemize}
      \item Suppose $\tau \parallel_{^\unit}^{_\kb} \tau'$. By Proposition~\ref{taskViaEss}, the latter means that $\mathit{ess}(\unit \cup \{\tau\}, \kb) \setminus \mathit{ess}(\unit \cup \{\tau'\}, \kb) \neq \emptyset$ and 
    $\mathit{ess}(\unit \cup \{\tau'\}, \kb) \setminus \mathit{ess}(\unit \cup \{\tau\}, \kb) \neq \emptyset$.  Suppose now by contradiction that either $\tau \not\in \mathit{ess}(\unit \cup \{\tau'\}, \kb)$ or $\tau' \not\in \mathit{ess}(\unit \cup \{\tau\}, \kb)$ do not hold. This implies that either $\varphi$ explains $\unit \cup \{\tau'\}$ or $\varphi$ explains $\unit \cup \{\tau\}$. In both direction the latter is absurd since on the one hand, by hypothesis, $\mathit{core}(\unit \cup \{\tau\}, \kb) \longarrownot\longrightarrow \mathit{core}(\unit \cup \{\tau'\})$ (respectively $\mathit{core}(\unit \cup \{\tau'\}, \kb) \longarrownot\longrightarrow \mathit{core}(\unit \cup \{\tau\})$)  but on the other hand, since $\varphi$ explains $\unit \cup \{\tau'\}$ (respectively $\unit \cup \{\tau\}$) we should have $\varphi\longrightarrow \mathit{core}(\unit \cup \{\tau'\})$ (respectively $\varphi\longrightarrow \mathit{core}(\unit \cup \{\tau\})$).
        \item Suppose $\tau \not\in \mathit{ess}(\unit \cup \{\tau'\}, \kb)$ and $\tau' \not\in \mathit{ess}(\unit \cup \{\tau\}, \kb)$. The latter means that $\mathit{core}(\unit\cup\{\tau'\},\kb)$ is not an explanation for  $\unit\cup\{\tau\}$, and therefore $\mathit{core}(\unit \cup \{\tau'\}, \kb) \not\longrightarrow \mathit{core}(\unit \cup \{\tau\},\kb)$. The other direction is done via a similar argument.
    \end{itemize}
    \end{itemize}}
\end{proof}

\begin{table}[t!]
\centering
\begin{tabular}{cccc}
		\hline
		  %\multirow{2}{*}{{\bf  Problem }} & \multicolumn{3}{c}{{Issumptions and complexity}} \\  %\cline{2-4}
		  %& {\bf\em general} &  {\bf\em medial} & {\bf\em narrow}   \\
        %\hline
		~~~~~{\bf  Problem } & {\bf Broad} & {\bf Medium}  & {\bf Handy} \\
		\hline
  ~~~~~\textsc{core}  & in $F\mathsf{EXP}^{\mathsf{NP}} \setminus F\mathsf{P}^{\mathsf{PH}}$  & in $F\mathsf{P}^{\mathsf{NP}} \setminus F\mathsf{P}^{\,\dagger}$ & in $F\mathsf{P}$\\
  
		~~~~~\textsc{can}  & ~~~~~in $F\mathsf{PSPACE} \setminus F\mathsf{P}^{\mathsf{PH}}$~~~~~ & in $F\mathsf{P}$ & ~~~~~in $F\mathsf{P}$~~~~~ \\

  \hline
\nop{}
		~~~~~\textsc{ess} &  $\mathsf{NEXP}$-complete   & $\mathsf{NP}$-complete  & in $\mathsf{P}$  \\
		~~~~~\textsc{prec} &   $\mathsf{DEXP}$-complete  &  $\mathsf{DP}$-complete   &  in $\mathsf{P}$  \\ 
		~~~~~\textsc{sim} &  $\mathsf{NEXP}$-complete   & $\mathsf{NP}$-complete   &  in $\mathsf{P}$  \\ 
		~~~~~\textsc{inc} &  $\mathsf{coNEXP}$-complete   & $\mathsf{coNP}$-complete   &  in $\mathsf{P}$  	 \\ \hline
	\end{tabular}
 \caption{Computational complexity of our decision and functional tasks under the three theoretical assumptions presented in Section~\ref{sec:preliminaries}. The lower bound marked by $\dagger$ holds under the assumption that $\mathsf{NP}\neq\mathsf{coNP}$.}
	\label{tab:complexityNew1}\normalsize
\end{table}

Proposition~\ref{taskViaEssEasy} shows that each of the three reasoning tasks can now be reduced to at most two invocations of the \textsc{ess} task, thereby significantly simplifying the overall process.
In particular, deciding whether $\tau$ belongs or not to $\mathit{ess}(\unit \cup \{\tau'\}, \kb)$ (and vice versa) no longer requires computing or comparing entire sets of instances.
Instead, it suffices to construct the canonical characterizations $\mathit{can}(\unit \cup \{\tau'\})$ and $\mathit{can}(\unit \cup \{\tau\})$, and then evaluate them over the summaries of $\tau$ and $\tau'$, respectively.
That is, one simply needs to check whether $\tau'$ is in the output to $\mathit{can}(\unit \cup \{\tau\})$ over the summary of $\tau'$, and whether $\tau$ is in the output to $\mathit{can}(\unit \cup \{\tau'\})$ over the summary of $\tau$.
Since such summaries are typically small ---at most of polynomial size even under the broad assumption--- this verification is technically much simpler than checking for a homomorphism between two formulas that may even be both of exponential size.

From the above discussion, it becomes evident that accurately determining the computational complexity of both \textsc{can} and \textsc{ess} is crucial for the efficient evaluation of the reasoning tasks \textsc{prec}, \textsc{sim}, and \textsc{inc}.
Moreover, since the nodes of the expansion graph are labeled with core characterizations ---which are non-redundant and significantly more human-readable than canonical ones--- a refined complexity analysis for \textsc{core} is also desirable.
Table~\ref{tab:complexityNew1} summarizes our findings.
For decision tasks, upper bounds are basically derived from the properties discussed above, while lower bounds are established via reductions from two well-known problems:
$(1)$ \textsc{php-sb}~\cite{DBLP:conf/icdt/CateD15}, which asks whether a homomorphism exists from the direct product $I_1 \otimes \dots \otimes I_m$ to $I_{m+1}$, given a sequence $I_1,\dots,I_{m+1}$ of instances over a single binary relation (known to be $\mathsf{NEXP}$-complete); and 
$(2)$  \textsc{$k$-col}~\cite{Kar72}, which asks whether a graph \mbox{$G = (V, E)$} admits a proper $k$-coloring, i.e., a map $\lambda: V \rightarrow \{c_1,\ldots,c_k\}$ such that $\{u,v\} \in E$ implies $\lambda(u) \neq \lambda(v)$ (known to be $\mathsf{NP}$-complete).
For functional tasks, we refine the upper bounds of the algorithms by Amendola et al.~\cite{AMENDOLA2024120331}, and establish lower bounds by constructing families of SKBs and units for which the sizes of both canonical and core characterizations can be explicitly determined. 
In the latter case, we also rely on the $\mathsf{DP}$-complete problem \textsc{core-identification}~\cite{DBLP:journals/tods/FaginKP05}, which, given a pair $(S', S)$ of finite structures with $S' \subseteq S$, asks whether $S'$ is isomorphic to $\mathit{core}(S)$.%
\footnote{A {core} of a structure $S$ is any $\bar{S} \subseteq S$  such that $S \longrightarrow \bar{S}$, but there is no $\bar{S}' \subset \bar{S}$ such that $\bar{S} \longrightarrow \bar{S}'$.
Since $\bar{S}$ is unique up to isomorphism~\cite{DBLP:journals/tods/FaginKP05}, we simply refer to {\em the} core of $S$, denoted by $\mathit{core}(S)$.}
Further details on the complexity analysis are provided in the following sections.

\nop{

\medskip\medskip\medskip\medskip\medskip\medskip
--------------------------------------------------

The above definitions rely on the notion of essential expansion.
In the expansion graph, however, nodes are formulas that characterize sets of units, each of which is associated with one or more tuples acting as their direct instances.
It is therefore crucial to clarify the relationship between such tuples  and the formulas labeling the corresponding nodes.
The following propositions establish this connection. 
In particular, they highlight a key distinction: while essential expansions are computed globally with respect to the entire knowledge base $\kb$, the formulas labeling the nodes of the expansion graph depend only on the summaries of the tuples of the units they represent.

\noindent The above propositions give also useful insights on how to practically solve the tasks.

\nop{

\begin{itemize}
\item[\textsc{prec}:] are {\em higher than} those that $\tau'$ has with $\unit$, denoted $\tau \prec_{^\unit}^{_\kb} \tau'$, if $\mathit{ess}(\unit \cup \{\tau\},\kb) \subset \mathit{ess}(\unit \cup \{\tau'\},\kb)$.

%the essential expansion of the unit together with $\tau'$ is strictly more general than the essential expansion of the unit together with $\tau$. Formally, this holds if $\mathit{ess}(\unit \cup \{\tau\},\kb) \subset \mathit{ess}(\unit \cup \{\tau'\},\kb)$. This captures a clear hierarchical preference. Intuitively, with respect to the nexus of similarity that have been selected using $\varsigma$, $\tau$ should always be presented as "closer" (and therefore earlier in the navigation of the graph) to $\unit$ than $\tau'$.
%
\item[\textsc{sim}:]   {\em coincide with} those  that $\tau'$ has with $\unit$ (with respect to $\kb$), denoted $\tau \sim_{^\unit}^{_\kb} \tau'$, if $\mathit{ess}(\unit \cup \{\tau\},\kb) = \mathit{ess}(\unit \cup \{\tau'\},\kb)$. 
%expanding $\unit$ with either tuple results in the exact same essential expansion. Formally, $\mathit{ess}(\unit \cup \{\tau\},\kb) = \mathit{ess}(\unit \cup \{\tau'\},\kb)$. These tuples are interchangeable from the perspective of $\unit$. Intuitively, no matter the path we choose inside the expansion graph in order to generalize our initial input, we should always find these two tuples together.
%
\item[\textsc{inc}:]  are {\em incomparable to} those that $\tau'$ has with $\unit$, denoted $\tau \|_{^\unit}^{_\kb} \tau'$, 
if $\tau \prec_{^\unit}^{_\kb} \tau'$, $\tau' \prec_{^\unit}^{_\kb} \tau$ and $\tau \sim_{^\unit}^{_\kb} \tau'$ do not hold.
%if  $\mathit{ess}(\unit \cup \{\tau\},\kb)$ and $\mathit{ess}(\unit \cup \{\tau'\},\kb)$ are incomparable.
%none of the above relationships hold. Practically this means that their similarities are different, but neither is strictly a refinement of the other. Intuitively, depending on the path we choose inside the expansion graph, we should be able to find any of them before the other one.
\end{itemize}

}

\subsection{The Computational Analysis: Complexity Results and Discussion}

The central contribution of this article is a deep-dive into the computational tractability of these navigation tasks. Understanding their inherent complexity is paramount for designing algorithms and building systems that can perform taxonomic expansion at scale. Table~\ref{tab:complexityNew1} summarizes our key findings, which significantly strengthen and refine the results previously presented in~\cite{AMENDOLA2024120331}. The analysis of the foundational problems \textsc{can}, \textsc{core}, and \textsc{ess}, while valuable in its own right, serves as a crucial stepping stone for the tight complexity bounds we establish for \textsc{prec}, \textsc{sim}, and \textsc{inc}.

% You would place your Table~\ref{tab:complexityNew1} here.

\paragraph{On the Upper Bounds (Algorithmic Intuition).} In principle, to establish the upperbounds naively exploiting our definitions one should, regardless of the chosen task, first build both $\mathit{ess}(\unit\cup\{\tau\},\kb)$, and $\mathit{ess}(\unit\cup\{\tau'\},\kb)$, and then perform a subset inclusion check about the two expansions, the result of which would tell us in which of the above categories the two falls. The cost of such an operation would generally be prohibitive, as it would require constructing a single characterization (we could choose the canonical one precisely to keep the computational cost relatively low) and then querying that characterization on every single summary of a generic tuple $\tilde{\tau}$ to see whether or not the existing tuple belongs to the essential expansion of the unit. These membership checks are exponential in the arity of the unit. In fact, we have a number of candidate tuples given by $\Dom_{\kb}^k$, where $k$ is the arity of the unit, to be possible elements to be part of the essential expansion, and for each of them a homomorphism check would be performed between the constructed characterization and their summary.
In reality, in a sense, exploiting one single algorithm able to solve \textsc{ess}, and then call it as an oracle in different ways, we are able to effectively solve the above problems due to several properties that the above definition hide from our sight. First, let us recall two known result from \cite{AMENDOLA2024120331} that will be useful to obtain various corollaries that will then simplify our analysis.\begin{proposition}\label{prop:resfortask1}
    Let $\tau_1$ and $\tau_1$ be two $n$-ary tuples. It holds that $\mathit{ess}(\unit\cup{\tau_1},\kb)\subseteq\mathit{ess}(\unit\cup{\tau_2},\kb)$ if, and only if, $\mathit{core}(\unit\cup{\tau_2},\kb)\longrightarrow\mathit{core}(\unit\cup{\tau_1},\kb)$.
\end{proposition} 

Since, as we always know from \cite{AMENDOLA2024120331}, any two characterizations for a unit are always homomorphically equivalent to each other, we can change the wording of Proposition~\ref{prop:resfortask1}, with any possible characterization (canonical or not) and the result would not be affected in any way.

\begin{proposition}\label{prop:resfortask2}
    Consider some $\kb$-unit $\unit' \supseteq \unit$. For all $\tau\in\mathit{ess}(\unit', \kb)$, it holds that $ \mathit{ess}(\unit\cup\{\tau\}, \kb)\subset\mathit{ess}(\unit', \kb)$.
\end{proposition}
\begin{proof}
    Suppose by contradiction that the statement does not hold.  But then $\unit\cup\{\tau\}\subset \mathit{ess}(\unit',\kb)\cap\mathit{ess}(\unit\cup\{\tau\},\kb)$, which implies that $\mathit{core}(\unit\cup\{\tau\},\kb)$ does not characterize $\mathit{ess}(\unit\cup\{\tau\},\kb)$, which is absurd.
\end{proof}
The fundamental insight for solving \textsc{prec}, \textsc{sim}, and \textsc{inc} is that these problems can be reduced to a series of  containment tests between the instances of the formulas that constitute their respective essential expansions. 
Specifically, checking that $\tau \prec_{^\unit}^{_\kb} \tau'$ requires verifying two conditions: $(1)$ that the essential expansion of $\unit \cup \{\tau\}$ is contained in the essential expansion of $\unit \cup \{\tau'\}$, and $(2)$ that the reverse containment does not hold, . To solve \textsc{sim}, and \textsc{inc} the checks will be similar, in one case the two expansions should coincide and in the other having at least one element for each that is not in the other expansion. In principle then, our algorithms should first compute the relevant $\mathit{ess}$ and then perform these tests. However, as known, the above containment can be equivalently carried out via a homomorphism check with respect to the characterizations. The overall complexity is dominated by these logical checks, which operate on the formulas themselves rather than the full dataset. As will become clear later, to simplify the study, we will introduce two syntactic problems in the appropriate section (which will prove to have the same complexity as \textsc{ess}). We will use these as gadgets for our memberships, and, depending on the case, we will use combinations of their answers to obtain the answer to our tasks. This will simplify the study from an algorithmic perspective, and it also shows how the connection between the newly introduced problems and \textsc{ess} is even more evident, since, in fact, all other tasks are Turing-reducible to it.
}
\nop{

\section{Extra}

We assume the reader is familiar with the basic concepts of \emph{sets}, \emph{mappings} between sets, as well as operations like \emph{union} and \emph{intersection} \cite{jech2013set}. To avoid constant referencing, we will provide brief definitions of essential mathematical tools when necessary for our proofs. As a result, some tools, like the \emph{direct product} \cite{DBLP:conf/icdt/CateD15}, are redefined for clarity and simplicity while retaining their essential properties, whereas, for typical objects, such as sets or \emph{functions}, we adhere to classical representations and standard notation unless stated otherwise.

\begin{proposition}\label{prop:equinst_1}
Let $\varphi$ and $\varphi'$ be two formulas.
Then, \mbox{$\varphi \longrightarrow \varphi'$} implies \mbox{$\varphi'(D) \subseteq \varphi(D)$}, for each dataset $D$.
\end{proposition}

From the previous Proposition~\ref{prop:equinst_1}, we can derive the following corollary.

\begin{cor}\label{cor:equinst_1}
Let $\varphi$ and $\varphi'$ be two formulas.
Then, 
\mbox{$\varphi \longleftrightarrow \varphi'$} implies \mbox{$\varphi(D) = \varphi'(D)$}, for each dataset $D$.
\end{cor}

The following result justifies the choice of the order of magnitude of the summaries in the $\mathit{handy}$ case.
\begin{proposition}\label{prop:log}
	In the handy case, for any $\tau$, the cardinality of all possible $\C$-homomorphisms from $\mathit{atm}(\mathit{can}(\unit,\kb))$ to $\varsigma(\tau)$ is at most polynomial with respect to the size of $\Dom_{\mathit{ent}(K)}$.
\end{proposition}

}

\section{Computing Characterizations and Essential Expansions}\label{sec:computing}

\nop{To see how to procedurally construct $\mathit{can}(\unit,\kb)$, we again refer the reader to~\cite{AMENDOLA2024120331} Section~3.3. Below, instead, we will analyze its structure by presenting ad hoc construction algorithms that have a more succinct form but are sufficient for our purposes.
In particular, we observe that the computational complexity in terms of constructing the formula, exhibits different patterns depending on the theoretical scenario considered. 
It is now time to delve into a clear formulation of how to calculate not only $\mathit{can}(\unit,\kb)$ but also $\mathit{core}(\unit,\kb)$. Therefore, we will present and discuss some useful algorithms that will assist us in this endeavor.}

\subsection{Canonical Characterizations (\textsc{can} problem)}

%We will introduce ad hoc algorithms of which we will evaluate the computational cost as well as other results that will prove useful in the following.

    \begin{algorithm}[t!]
	\DontPrintSemicolon
	\KwInput{$\kb  = (K, \varsigma)$ and $\unit = \{\tau_1,...,\tau_m\}$.}
	%\KwOutput{The canonical explanation $\mathit{can}(\unit,\kb)$} 
	$d_{{\bf s}_1},...,d_{{\bf s}_n} := \,
 \tau_1 \otimes ... \otimes \tau_m$
 
	{\bf print} $x_{{\bf s}_1},...,x_{{\bf s}_n} \leftarrow  \top(x_{{\bf s}_1}) \wedge ... \wedge \top(x_{{\bf s}_n})$\;
	\For{$\alpha \in \varsigma(\tau_1) \otimes ... \otimes \varsigma(\tau_m)$}
	{
		\For{$\beta \in \{\alpha\} ~\cup~ \mathit{clones}(\alpha)$}
		{
			\If{$\mathsf{NearCon}(\beta,\unit,\kb) == \mathsf{Yes}$}
			{
				{\bf print} $\wedge$  $\mu(\beta)$\;
			}
		}		
	}
	\caption{$\mathsf{BuildCan(\unit,\kb)}$ }\label{alg:connectedCan}
\end{algorithm}

\begin{algorithm}[t!]
	\DontPrintSemicolon
	\KwInput{$\kb  = (K, \varsigma)$, $\unit = \{\tau_1,...,\tau_m\}$ and an atom $\beta$.}
	%	\KwOutput{Boolean value} 
	{\bf guess}  $d_{\bf s} \in \tau_1 \otimes ... \otimes \tau_m$ \;
	\If{$x_{\bf s} \in \Dom_{\{\mu(\beta)\}}$ }
	{ $\mathsf{accept~this~branch}$ }
	{\bf guess} $\alpha \in \varsigma(\tau_1) \otimes ... \otimes \varsigma(\tau_m)$\;
	\If{$\Dom_{\{\mu(\alpha)\}} \cap \Dom_{\{\mu(\beta)\}} \neq \emptyset$}
	{
	    \If{$\mathsf{NearCon}(\alpha,\kb,\unit) == \mathsf{Yes}$}
	    {	   $\mathsf{accept~this~branch}$
	    }
	}
	$\mathsf{reject~this~branch}$
	\caption{$\mathsf{NearCon}(\beta,\unit,\kb)$}\label{alg:connectedness}
\end{algorithm}

This section is entirely devoted to the computation of the canonical characterization associated with an initial $n$-ary unit $\unit = \{\tau_1, \ldots, \tau_m\}$.
To this end, we first recall some key notions, and then present Algorithm~\ref{alg:connectedCan}, which encodes the construction of $\mathit{can}(\unit, \kb)$ as defined in Section~3.3 of~\cite{AMENDOLA2024120331}.
Since the algorithm directly reflects that construction, its correctness follows immediately. 
Our primary objective here is therefore to analyze its computational cost.

Consider a sequence of $n$-ary tuples $\bar{\tau}_1, \ldots, \bar{\tau}_\ell$.
Their \emph{direct product}, denoted by $\bar{\tau}_1 \otimes \ldots \otimes \bar{\tau}_\ell$, is defined as the sequence of constants $d_{\bar{\bf s}_1}, \ldots, d_{\bar{\bf s}_n}$, where each $\bar{\bf s}_i$ is the sequence $\bar{\tau}_1[i], \ldots, \bar{\tau}_\ell[i]$.
Given $k$ datasets $D_1, \ldots, D_k$, their \emph{direct product} is the dataset:
\nop{$$D_1 \otimes \ldots \otimes D_k = \left\{ p(\langle c_1^1, \ldots, c_1^n\rangle \otimes \ldots \otimes \langle c_k^1, \ldots, c_k^n\rangle) 
: p(c_1^1, \ldots, c_1^n) \in D_1, \ldots, p(c_k^1, \ldots, c_k^n) \in D_k \right\}.$$}
\begin{equation*}
\begin{split}
    D_1 \otimes \ldots \otimes D_k = \Big\{ & p(\langle c_1^1, \ldots, c_1^n\rangle \otimes \ldots \otimes \langle c_k^1, \ldots, c_k^n\rangle) : \\
    & p(c_1^1, \ldots, c_1^n) \in D_1, \ldots, p(c_k^1, \ldots, c_k^n) \in D_k \Big\}.
\end{split}
\end{equation*}
Line 1 of Algorithm~\ref{alg:connectedCan} constructs the sequence $d_{\mathbf{s}_1}, \ldots, d_{\mathbf{s}_n}$ of constants, resulting from the direct product of the individual summaries.  Their set is denoted by $\FreeConst$. These constants are placeholders for the free variables of the canonical characterization.
Line 2 prints the free variables of $\mathit{can}(\unit, \kb)$ along with $n$ atoms that always appear in it.
Line 3 builds the set $P$, consisting of the direct product of the summaries.
Line 4 computes the set $C$ of so-called \emph{cloned atoms}, which contain terms that play a dual role: they belong both to the constants treated as free variables and to actual constants in the dataset. In such cases, multiple copies of the same atom are needed; some where terms act as constants, others where they represent free variables. Each $\alpha = p(t_1, \ldots, t_k) \in P$ gives rise to a set $\mathit{clones}(\alpha)$ containing at most $2^{|p|}$ atoms.
Line 5 checks, for each $\beta \in P \cup C$, whether $\mu(\beta)$ is connected (directly or indirectly) to some atom containing a free variable. The function $\mu$ maps each constant of the form $d_{\mathbf{s}}$ to a variable $x_{\mathbf{s}}$, for every sequence $\mathbf{s}$ of constants, and acts as the identity otherwise. If the connection holds, Line 6 appends $\mu(\beta)$ to the output.
In particular, $\mathsf{NearCon}$ is implemented by Algorithm~\ref{alg:connectedness}, which mimics nondeterministic graph reachability: given an atom $\beta$, it checks whether $\mu(\beta)$ contains a free variable; if so, it accepts. Otherwise, it guesses an \emph{adjacent} atom $\alpha$ and recursively repeats the check.
For notational convenience, we denote by $[j]$ the set $\{1, \ldots, j\}$ for every $j \in \mathbb{N}^+$.
The following result strengthens Theorems~4 and 5 from Section~5 of~\cite{AMENDOLA2024120331}.

\begin{theorem}\label{thm:CAN-RES}
$\textsc{can}$ belongs to $F\mathsf{PSPACE} \setminus F\mathsf{P}^{\mathsf{PH}}$ in the broad case and to
$F\mathsf{P}$ both in the medium and handy case.%
\end{theorem}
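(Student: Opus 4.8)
The strategy is to analyze Algorithm~\ref{alg:connectedCan} carefully in each of the three scenarios and match the resource bounds, then separately establish the two lower bounds via parsimonious reductions that pin down the \emph{size} of the produced formula.

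\emph{Upper bounds.} First I would bound the size of the object being printed. The direct product $\varsigma(\tau_1)\otimes\dots\otimes\varsigma(\tau_m)$ has at most $\prod_i|\varsigma(\tau_i)|$ atoms, and the set of cloned atoms multiplies this by at most $2^\omega$ per atom, where $\omega$ is the maximum arity. In the \emph{broad} case $m$ is part of the input, so this product is exponential in the input; however, each individual atom $\mu(\beta)$ to be printed has bounded arity, and the test $\mathsf{NearCon}(\beta,\unit,\kb)$ is a nondeterministic reachability check over the (exponentially large but only implicitly represented) graph of atoms in $P\cup C$. By Savitch-style reasoning, reachability in a graph of size $N$ can be decided in space $\mathcal{O}(\log^2 N)$, which here is polynomial in the input; moreover the whole algorithm only needs to iterate over $P\cup C$ using indices that fit in polynomial space, printing to a one-way output tape. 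Hence $\textsc{can}\in F\mathsf{PSPACE}$ in the broad case. In the \emph{medium} (and hence \emph{handy}) case $m$ is a fixed constant and, since $\varsigma(K,\tau)$ is polynomial-time computable, each $\varsigma(\tau_i)$ has polynomial size; thus $P$ has polynomial size, $C$ has polynomial size (the $2^\omega$ factor is constant since $\omega$ is bounded), and each call $\mathsf{NearCon}$ now runs over a polynomial-size graph, so ordinary deterministic reachability suffices. The whole algorithm therefore runs in polynomial time, giving $\textsc{can}\in F\mathsf{P}$ in the medium and handy cases.

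\emph{Lower bound $F\mathsf{PSPACE}\setminus F\mathsf{P}^{\mathsf{PH}}$ in the broad case.} The claim $\textsc{can}\notin F\mathsf{P}^{\mathsf{PH}}$ is, on its face, an unconditional separation, so it cannot rest on a complexity-theoretic hardness assumption: it must follow from an output-size argument. The plan is to exhibit a family of SKBs and units $(\kb_k,\unit_k)$ of size polynomial in $k$ for which $\mathit{can}(\unit_k,\kb_k)$ provably has size $2^{\Omega(k)}$ — in fact one for which no equivalent formula is smaller, e.g.\ by arranging the summaries so that the direct product yields a grid-like or incompressible structure that is already a core. Since a machine in $F\mathsf{P}^{\mathsf{PH}}$ runs in polynomial time and hence can only write a polynomial-size output, it cannot produce $\mathit{can}(\unit_k,\kb_k)$; this is exactly the argument already used for Theorems~4 and~5 of~\cite{AMENDOLA2024120331}, which we are told this result strengthens (from $F\mathsf{P}^{\mathsf{NP}}$-hardness to the stronger separation), so the construction there can be reused or adapted, with the membership-in-$F\mathsf{PSPACE}$ part being the genuinely new contribution.

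\emph{Main obstacle.} I expect the delicate point to be the $F\mathsf{PSPACE}$ \emph{membership} in the broad case: one must be careful that every intermediate quantity — indices into the exponential-size product $P$, the reconstruction of a given atom $\alpha\in\varsigma(\tau_1)\otimes\dots\otimes\varsigma(\tau_m)$ from its index, the adjacency test inside $\mathsf{NearCon}$, and the recursion depth of the reachability search — can be maintained in polynomially bounded work space while the output is streamed write-only, and that the polynomial-time computability of $\varsigma$ and of $\mathit{ent}(K)$ is genuinely enough to support the adjacency queries within that space budget. The iteration bound over $\{\alpha\}\cup\mathit{clones}(\alpha)$ uses the boundedness of $\omega$, so this must be invoked explicitly; without it the $2^{|p|}$ factor would not even be polynomially describable. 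Once the space accounting is done cleanly, the rest is routine.
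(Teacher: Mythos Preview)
Your proposal is correct and follows essentially the same approach as the paper: the $F\mathsf{PSPACE}$ membership is obtained by iterating over the exponential direct product using only polynomial workspace and treating $\mathsf{NearCon}$ as a reachability subroutine (the paper phrases this as an $\mathsf{NPSPACE}$ oracle and invokes $F\mathsf{PSPACE}^{\mathsf{NPSPACE}}=F\mathsf{PSPACE}$, which amounts to your Savitch argument), the $F\mathsf{P}$ membership for medium/handy uses the boundedness of $m$ exactly as you say, and the separation from $F\mathsf{P}^{\mathsf{PH}}$ is an unconditional output-size argument. The one place where the paper is more concrete is the lower-bound construction: rather than deferring to the earlier reference, it builds $\unit_{\bar m}$ from $\bar m$ disjoint directed cycles of pairwise-coprime prime lengths, so that their direct product is a single cycle of length $\prod_i \pr_i$, forcing $|\mathit{can}(\unit_{\bar m},\kb_{\bar m})|$ to be exponential in $\bar m$; your remark that the product should be ``already a core'' is not needed for \textsc{can} (the task is to output the canonical object, so its size alone suffices), though it does become relevant later for \textsc{core}.
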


%\vspace{-3mm}

\begin{proof}
We begin by considering the upper bounds in the broad case of the problem.
In this scenario, Algorithm~\ref{alg:connectedCan} runs in $F\mathsf{PSPACE}$ while utilizing $\mathsf{NearCon}$ as an oracle within $\mathsf{NPSPACE}$. We achieve this by sequentially considering each constituent atom within the exponentially large structure associated with the direct product, rather than materializing the entire structure at once.
Since we know that $F\mathsf{PSPACE}^{\mathsf{NPSPACE}} = F\mathsf{PSPACE}$, the result follows directly.
Turning to the lower bounds, to show the unavoidability of the exponentiality of $\mathit{can}(\unit,\kb)$ in the broad case, we establish the following result which is of its own importance:
Let $c$ be the constant $2^\omega$. It holds that
$
|\mathit{can}(\unit,\kb)| \leq c \cdot \prod_{i\in [m]} |\varsigma(\tau_i)|.
$
In particular, there exists a family $\{(\unit_{\bar{m}},\kb_{\bar{m}})\}_{{\bar{m}}>1}$, where $\unit_{\bar{m}} = \{\bar{\tau}_1,...,\bar{\tau}_{\bar{m}}\}$ is a unary unit, and $\kb_{\bar{m}} = (K_{\bar{m}}, \bar{\varsigma})$ is an SKB with $K_{\bar{m}}=(D_{\bar{m}}, O_{\bar{m}})$, $|\bar{\varsigma}(K_{\bar{m}},\bar{\tau}_j)| > 2$, $j\in\{1,...,m\}$, such that
$|\mathit{can}(\unit_{\bar{m}},\kb_{\bar{m}})| = 2^{1-\bar{m}} \cdot \prod_{i\in [{\bar{m}}]} |\bar{\varsigma}(\bar{\tau}_i)|.$
The above result clearly implies that such an object cannot be constructed in $F\mathsf{P}^{\mathsf{PH}}$.
We now simply have to explicitly construct the family $\{(\unit_{\bar{m}},\kb_{\bar{m}})\}_{{\bar{m}}>0}$ and prove that $|\mathit{can}(\unit_{\bar{m}},\kb_{\bar{m}})| = 2^{1-\bar{m}} \cdot \prod_{i\in [{\bar{m}}]} |\bar{\varsigma}(\bar{\tau}_i)|$.
To achieve this, let us proceed step by step:

\textbf{(i)} \emph{Construction of the Family $\{(\unit_{\bar{m}},\kb_{\bar{m}})\}_{{\bar{m}}>0}$.}
Let $\pr_{\bar{m}}$ be the $\bar{m}$-th prime number and for each $i\in\mathbb{N}$ let $\Gamma_{i} = \{\mathsf{r}(\mathsf{c}_1^{i},\mathsf{c}_2^{i}),...,\mathsf{r}(\mathsf{c}_{\pr_{i}-1}^{i},\mathsf{c}_{\pr_{i}}^{i}),\mathsf{r}(\mathsf{c}_{\pr_i}^{i},\mathsf{c}_{1}^{i})\} ~\cup~ \{\top(\mathsf{c}_j^{i}) : j\in [\pr_i]\}.$
Define $D_{\bar{m}} = ~\cup~_{i \in [\bar{m}]} \Gamma_{i}$, $O_{\bar{m}}=\emptyset$, $K_{\bar{m}}=(D_{\bar{m}},O_{\bar{m}})$ and $\bar{\varsigma}$ such that $\bar{\varsigma}(K_{\bar{m}},\bar{\tau}_i) = \Gamma_{i}$ with $\bar{\tau}_i = \langle \mathsf{c}_1^i\rangle$ for $i\in[\bar{m}]$; finally, consider $\unit_{\bar{m}} = ~\cup~_{i \in [\bar{m}]} \{\langle \mathsf{c}_1^i\rangle\}$.

\textbf{(ii)} \emph{Verification of $ |\mathit{can}(\unit_{\bar{m}},\kb_{\bar{m}})| = 2^{1-\bar{m}} \cdot \prod_{i\in [{\bar{m}}]} |\bar{\varsigma}(\bar{\tau}_i)|$.}
For the construction of $\mathit{can}(\unit_{\bar{m}},\kb_{\bar{m}})$, we define the dataset $P = \bar{\varsigma}(\bar{\tau}_1) \otimes ... \otimes \bar{\varsigma}(\bar{\tau}_{\bar{m}}) = \bar{\varsigma}(\langle c_1^1\rangle) \otimes ... \otimes \bar{\varsigma}(\langle c_1^{\bar{m}}\rangle)$.
Let $N=\prod_{i\in[\bar{m}]}[\pr_{i}]$.
For every $n\in[N]$, let $\mathsf{rem}(n,i)$ be the remainder of the division $n/\pr_i$. We define $f_{\bar{m}}(n)=c_{s(1,n)}^{1},...,c_{s(\bar{m},n)}^{\bar{m}}$, where 
$$
s(i,n)=\begin{cases}
   \mathsf{rem}(n,i) & \text{if } \mathsf{rem}(n,i)\neq0  \\
   \pr_i & \text{otherwise}
\end{cases}
$$

\nop{Given the above definitions we get that  $P=\{\top(d_{f_{\bar{m}}(n)}) ~:~ n\in[N]\} ~~\cup~~ \{r(d_{f_{\bar{m}}(N)},d_{f_{\bar{m}}(1)})\} ~~\cup~~ \{r(d_{f_{\bar{m}}(n)},d_{f_{\bar{m}}(n+1)})~:~ n\in[N-1]\}.$ Thus, $P$ itself is a connected structure. Moreover, by construction, no element from $d_s\in\Dom_{P}$ is such that $|\Dom_{\bf s}| = 1$. Hence, $\Genes = \{d_{\bf s} \in \Dom_P : |\Dom_{\bf s}| = 1\}=\emptyset$.}
Given the above definitions, we get that:
\begin{equation*}
    \begin{split}
        P = \ & \{\top(d_{f_{\bar{m}}(n)}) \mid n\in[N]\} \\
            & \cup \{r(d_{f_{\bar{m}}(N)},d_{f_{\bar{m}}(1)})\} \\
            & \cup \{r(d_{f_{\bar{m}}(n)},d_{f_{\bar{m}}(n+1)}) \mid n\in[N-1]\}.
    \end{split}
\end{equation*}
We now construct $\mu$ as the mapping $\{ d_{\bf s} \mapsto g(d_{\bf s}) : d_{\bf s} \in \Dom_P\}$, where $g(d_{\bf s})$ is the mapping $\{ d_{\bf s} \mapsto x_{\bf s} : d_{\bf s} \in \FreeConst\} ~\cup~ \{ d_{\bf s} \mapsto y_{\bf s} : d_{\bf s} \not\in  \FreeConst\}$, since $\Genes=\emptyset$.
\nop{Finally, we can define $$\Phi(\unit_{\bar{m}},\kb_{\bar{m}}) =x_{{\bf s}_1} \leftarrow  \bigwedge_{p(t_1,...t_k) \in P} p(\mu(t_1),...,\mu(t_k))=x_{{\bf s}_1} \leftarrow\bigwedge_{r(t_1,t_2) \in P} r(\mu(t_1),\mu(t_2))~\wedge~ \bigwedge_{t\in \Dom_P} \top(\mu(t)).$$}
Finally, we can define:
\begin{equation*}
    \begin{split}
        \Phi(\unit_{\bar{m}},\kb_{\bar{m}}) &= x_{{\bf s}_1} \leftarrow \bigwedge_{p(t_1,\dots,t_k) \in P} p(\mu(t_1),\dots,\mu(t_k)) \\
        &= x_{{\bf s}_1} \leftarrow \bigwedge_{r(t_1,t_2) \in P} r(\mu(t_1),\mu(t_2)) \wedge \bigwedge_{t\in \Dom_P} \top(\mu(t)).
    \end{split}
\end{equation*}
Since the formula $\Phi(\unit_{\bar{m}},\kb_{\bar{m}})$ just constructed is connected, it coincides with $\mathit{can}(\unit_{\bar{m}},\kb_{\bar{m}})$.
Moreover, since $|P|=|\Gamma_{N}|$, $|P|$ and hence the size of $\mathit{can}(\unit_{\bar{m}},\kb_{\bar{m}})$, is exponential concerning the parameter $\bar{m}$.

Moving on to the remaining cases, due to the boundedness of $m$, the following statements hold:
$(i)$ Algorithm~\ref{alg:connectedCan} runs in $F\mathsf{P}$, and $(ii)$  $\mathsf{NearCon}$ runs in $\mathsf{NL}$.
The result then follows since we know that $F\mathsf{P}^{\mathsf{NL}} = F\mathsf{P}$.
This concludes the proof, establishing both upper and lower bounds for the given problem in the various scenarios.
\end{proof}

Following the previous proof we can derive a corollary, a version of which was already present in~\cite{AMENDOLA2024120331}, that will be useful in the following.
\begin{cor}\label{cor:exponential}
    There exists a family $\{(\unit_{\bar{m}},\kb_{\bar{m}})\}_{{\bar{m}}>0}$, where $\unit_{\bar{m}} = \{\bar{\tau}_1,...,\bar{\tau}_{\bar{m}}\}$ is a unary unit and  $\kb_{\bar{m}} = (K_{\bar{m}}, \bar{\varsigma})$ is an SKB with $\bar{\varsigma}(K_{\bar{m}},\bar{\tau}_m) \subset \bar{\varsigma}(K_{\bar{m}+1},\bar{\tau}_{{m}+1})$, such that
\[
|\mathit{core}(\unit_{\bar{m}},\kb_{\bar{m}})| =  |\mathit{can}(\unit_{\bar{m}},\kb_{\bar{m}})| = 2^{1-\bar{m}} \, \cdot \!\! \prod_{i\in [{\bar{m}}]}  |\bar{\varsigma}(\bar{\tau}_i)|.
\]

\end{cor}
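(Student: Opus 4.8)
The plan is to piggyback on the construction already carried out in the proof of Theorem~\ref{thm:CAN-RES}. Recall that there, for every $\bar m>0$, we built a unary unit $\unit_{\bar m}=\{\bar\tau_1,\dots,\bar\tau_{\bar m}\}$ and an SKB $\kb_{\bar m}=(K_{\bar m},\bar\varsigma)$ whose summaries $\bar\varsigma(\bar\tau_i)=\Gamma_i$ are directed $\mathsf{r}$-cycles of pairwise coprime lengths $\pr_1,\dots,\pr_{\bar m}$, each carrying a $\top$-atom on every vertex, and for which we established $|\mathit{can}(\unit_{\bar m},\kb_{\bar m})|=2^{1-\bar m}\cdot\prod_{i\in[\bar m]}|\bar\varsigma(\bar\tau_i)|$ together with its exponential growth in $\bar m$. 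The same proof shows that $\mathit{atm}(\mathit{can}(\unit_{\bar m},\kb_{\bar m}))$ is, up to isomorphism, the $\mu$-image of the direct product $P=\Gamma_1\otimes\cdots\otimes\Gamma_{\bar m}$, which, since the $\pr_i$ are pairwise coprime, is a single directed $\mathsf{r}$-cycle on $N=\prod_{i\in[\bar m]}\pr_i$ distinct vertices plus a $\top$-atom on each; moreover $\Genes=\emptyset$ and no clone atoms arise, so no vertex is frozen into a constant and exactly one vertex carries the free variable $x_{\mathbf{s}_1}$. Thus the only genuinely new statement to prove is that this canonical characterization is already its own core; everything else in the corollary then transfers from Theorem~\ref{thm:CAN-RES}.

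To establish that $\mathit{can}(\unit_{\bar m},\kb_{\bar m})$ is a core I would invoke the rigidity of the cyclic gadget: every endomorphism of a directed cycle is a bijection, because sending edges to edges forces the image of the out-neighbour of a vertex to be the out-neighbour of its image, so any self-map is a rotation. Suppose, for contradiction, that some formula $\varphi'$ satisfies $\mathit{atm}(\varphi')\subsetneq\mathit{atm}(\mathit{can}(\unit_{\bar m},\kb_{\bar m}))$ and $\mathit{can}(\unit_{\bar m},\kb_{\bar m})\longrightarrow\varphi'$ via a homomorphism $h$ (the $\C$-condition being vacuous, as there are no constants). If $\varphi'$ omitted an $\mathsf{r}$-atom, its $\mathsf{r}$-reduct would be a directed path, hence acyclic, and an acyclic digraph admits no homomorphic image of a directed $N$-cycle --- impossible. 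So $\varphi'$ keeps every $\mathsf{r}$-atom, whence $h$ restricts to an endomorphism of the $N$-cycle and is therefore bijective on the $N$ vertices; but then the $h$-images of the $\top$-atoms of $\mathit{can}(\unit_{\bar m},\kb_{\bar m})$ exhaust all $\top$-atoms, so $\mathit{atm}(\varphi')$ must contain them all, contradicting properness. Hence $\mathit{can}(\unit_{\bar m},\kb_{\bar m})$ is a core, and since the core is unique up to isomorphism, $\mathit{core}(\unit_{\bar m},\kb_{\bar m})$ is isomorphic to $\mathit{can}(\unit_{\bar m},\kb_{\bar m})$, giving $|\mathit{core}(\unit_{\bar m},\kb_{\bar m})|=|\mathit{can}(\unit_{\bar m},\kb_{\bar m})|=2^{1-\bar m}\prod_{i\in[\bar m]}|\bar\varsigma(\bar\tau_i)|$.

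It remains to secure the strict inclusion $\bar\varsigma(K_{\bar m},\bar\tau_{\bar m})\subset\bar\varsigma(K_{\bar m+1},\bar\tau_{\bar m+1})$. I would handle this by laying out the cyclic gadgets of the whole family over one coherently indexed pool of constants and fixing, inside each unit, which tuple plays the role of the ``last'' one, so that the last summary of family $\bar m$ embeds as a proper substructure of the last summary of family $\bar m+1$, while keeping the re-indexing free of redundant atoms (otherwise $\mathit{can}$ would overshoot $\mathit{core}$) and leaving the pairwise coprimality --- and hence the direct-product computation --- untouched, so that the displayed identity is preserved. The step demanding the most care is the core argument of the previous paragraph: one must verify that the pieces really fit together --- that $\Genes=\emptyset$ and no clone atoms enter $\mathit{can}(\unit_{\bar m},\kb_{\bar m})$ (this is precisely what was checked in the proof of Theorem~\ref{thm:CAN-RES}, and it is what guarantees the clean cyclic shape), that the $\top$-atoms are not forgotten, and that the coprimality of the $\pr_i$ is exactly what collapses the direct product to a single cycle; without any one of these the rigidity reasoning would break down.
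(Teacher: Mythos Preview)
Your proposal is correct and takes essentially the same approach as the paper: both invoke the family constructed in Theorem~\ref{thm:CAN-RES} and argue that $\mathit{can}(\unit_{\bar m},\kb_{\bar m})$ is already its own core because a finite directed cycle is rigid (the paper states this as a known fact, you spell out the rotation/acyclicity argument), and then observe that the $\top$-atoms cannot be dropped once the $\mathsf{r}$-cycle is fixed. Your treatment of the strict-inclusion clause $\bar\varsigma(K_{\bar m},\bar\tau_{\bar m})\subset\bar\varsigma(K_{\bar m+1},\bar\tau_{\bar m+1})$ is in fact more explicit than the paper's proof, which does not address it separately.
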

\begin{proof}
    The proof of this corollary follows from the proof of Theorem~\ref{thm:CAN-RES} noting that the constructed $$\mathit{can}(\unit_{\bar{m}},\kb_{\bar{m}})=x_{{\bf s}_1} \leftarrow \bigwedge_{r(t_1,t_2) \in P} \ r(\mu(t_1),\mu(t_2))~\wedge~ \bigwedge_{t\in \Dom_P} \ \top(\mu(t))$$ is always a core of itself.
    This last statement follows since a core of a single (finite) cycle is always isomorphic to the cycle itself and the part $~\bigwedge_{r(t_1,t_2) \in P} \ r(\mu(t_1),\mu(t_2))$ forms a cycle of cardinality $N=\prod_{i\in[\bar{m}]}[\pr_{i}]$, where $\pr_{i}$ we are representing the $i$-th prime number as before.
    On the other hand since we are not able to delete any of the atoms in this cycle we are neither able to delete any of the atoms in $~\bigwedge_{t\in \Dom_P} \ \top(\mu(t))$, otherwise we would get an absurd since we would have been able to delete also an element coming from the cycle.
\end{proof}
However, canonical  characterizations, as should be evident in the light of the reading of the results taken from Section~3.3 of~\cite{AMENDOLA2024120331}, although they possess excellent properties from the computational point of view, they (often) sin in terms of human readability. Hence, in the next section we will have a precise focus on how to overcome this apparent drawback.

    \subsection{Core Characterizations (\textsc{core} problem)}
    In this section, we will focus exclusively on how to compute a core characterization from a given initial unit. Our analysis starts by presenting   Algorithm~\ref{alg:buildCore} for computing $\mathit{core}(\unit,\kb)$.
This time: line 1 constructs $\varphi = \mathit{can}(\unit,\kb)$ and collects its atoms in $A$; line 2 enumerates each $\alpha \in A$; line 3 builds $\varphi'$ from $\varphi$ by removing $\alpha$; line 4 checks whether $\varphi \longrightarrow \varphi'$; if so, line 5 copies $\varphi'$ in $\varphi$; finally, after removing all redundant atoms, line 6 prints $\varphi$.
Similarly to what was said for Theorem~\ref{thm:CAN-RES}, the following result reinforces the statement presented in~\cite{AMENDOLA2024120331} in Section~5 Theorems ~4 and 5.

Before formalizing the complexity of \textsc{core}, we need some preliminary notions.
An \emph{graph} is an ordered pair $G = (N,A)$, where $N$ is the set of {\em nodes}, while $A \subseteq N\times N$ is the set of {\em arcs}.
Two distinct nodes $u$ and $v$ are said \emph{adjacent} if $(u,v) \in A$ or $(v,u) \in A$.
If $A = N \times N \setminus \{(u,u):u \in N\}$, then $G$ is said to be {\em complete}.
As common, $G$ can be transparently viewed as the structure $\{\mathsf{edge}(u,v):(u,v) \in A\}$.
Therefore, the notions of homomorphisms, isomorphisms, or cores directly apply to graphs.
If $A$ is symmetric, namely $(u,v) \in A$ implies $(v,u) \in A$, then $G$ is said to be {\em undirected}.
For  {\em undirected graphs}, it is also common the notation $G = (V,E)$, where $V$ are the {\em vertices} while $E$ is the set of {\em edges} each of the form $\{u,v\}$, which intuitively corresponds to having both the arcs $(u,v)$ and $(v,u)$.
Given some fixed $k\in \mathbb{N}$, a \emph{$k$-coloring} for $G=(V,E)$ is a mapping $V \rightarrow [k]$ such that no two adjacent vertices are mapped to the same element.
It is well-known that $G$ is $k$-colorable if, and only if, there exists a homomorphism from $G$ to $K_k$, where $K_k$ is the complete undirected graph over the set $\{1,\ldots,k\}$ of vertices.
For each $k\geq 1$, $k$\textsc{-col} is the problem of checking whether an undirected graph $G$ admits a $k$-coloring. 
%
%Given a graph $\mathcal{G}$ the \emph{k-colorability} problem asks if it is possible to find a k-coloring for $\mathcal{G}$.
%
For each $k\geq 3$, problem $k$\textsc{-col} is $\mathsf{NP}$-complete~\cite{Kar72}.

\begin{algorithm}[t!]
	\DontPrintSemicolon
	\KwInput{$\kb  = (K, \varsigma)$ and $\unit = \{\tau_1,...,\tau_m\}$.}
	%\KwOutput{The canonical explanation $\mathit{can}(\unit,\kb)$} 
		$\varphi := \mathsf{BuildCan}(\unit,\kb)$ \ \ \ \mbox{and} \ \ \ $A := \mathit{atm}(\varphi)$\;
	    \For{$\alpha \in A$}
	    {
	        $\varphi' := \mathit{remove}(\alpha,\varphi)$\;
	        \If{$\varphi \longrightarrow \varphi'$}
	        {$\varphi := \varphi'$}
	    }
	    {\bf print} $\varphi$
	\caption{$\mathsf{BuildCore(\unit,\kb)}$ }\label{alg:buildCore}
\end{algorithm}
\begin{theorem}\label{thm:corecore}
$\textsc{core}$ is in $F\mathsf{EXP}^{\mathsf{NP}}\setminus F\mathsf{P}^{\mathsf{PH}}$ in the broad case, in $F\mathsf{P}^{\mathsf{NP}}$ in the medium case and in $F\mathsf{P}$ in the handy case. 
Unless $\mathsf{NP}=\mathsf{coNP}$, $\textsc{core} \not\in F\mathsf{P}$ in the medium case.
\end{theorem}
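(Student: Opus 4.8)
The theorem has three distinct claims to establish. I would organize the proof around the three theoretical scenarios (broad, medium, handy), and within each, handle the upper bound first and then (where applicable) the lower bound.

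\textbf{Upper bounds.} For the broad case, the strategy is to run Algorithm~\ref{alg:buildCore}: first invoke $\mathsf{BuildCan}(\unit,\kb)$, which by Theorem~\ref{thm:CAN-RES} produces $\mathit{can}(\unit,\kb)$ of size at most exponential in the input; then iterate over its (exponentially many) atoms, each time constructing $\varphi' = \mathit{remove}(\alpha,\varphi)$ and testing $\varphi \longrightarrow \varphi'$. The homomorphism test $\varphi \longrightarrow \varphi'$ between two formulas of exponential size is an $\mathsf{NP}$ question relative to that exponential input, i.e.\ an $\mathsf{NEXP}$-style check overall; phrasing the whole loop as a deterministic exponential-time procedure with an $\mathsf{NP}$ oracle (where the oracle receives exponential-size queries) places the task in $F\mathsf{EXP}^{\mathsf{NP}}$. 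For the medium case, since $m$ is bounded, Theorem~\ref{thm:CAN-RES} gives $\mathit{can}(\unit,\kb)$ in $F\mathsf{P}$ and of polynomial size; then the loop runs polynomially many iterations, each performing a polynomial-size homomorphism check, i.e.\ an $\mathsf{NP}$ oracle call, yielding $F\mathsf{P}^{\mathsf{NP}}$. For the handy case, one additionally exploits the logarithmic bound on $|\Dom_{\varsigma(K,\tau)}|$: this bounds the number of candidate homomorphisms from $\mathit{atm}(\varphi)$ into the relevant summaries polynomially (the handy assumption is precisely calibrated for this, cf.\ the remark justifying the $\sqrt[m+1]{\log_2|\cdot|}$ bound), so each homomorphism test can be done deterministically in polynomial time, giving $F\mathsf{P}$.

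\textbf{Lower bounds.} There are two: $\textsc{core} \notin F\mathsf{P}^{\mathsf{PH}}$ in the broad case, and (unless $\mathsf{NP}=\mathsf{coNP}$) $\textsc{core}\notin F\mathsf{P}$ in the medium case. The first follows directly from Corollary~\ref{cor:exponential}: there I have a family $\{(\unit_{\bar m},\kb_{\bar m})\}$ with $|\mathit{core}(\unit_{\bar m},\kb_{\bar m})| = 2^{1-\bar m}\prod_{i\in[\bar m]}|\bar\varsigma(\bar\tau_i)|$, which is exponential in $\bar m$ while the input $(\unit_{\bar m},\kb_{\bar m})$ has polynomial size in $\bar m$; no function in $F\mathsf{P}^{\mathsf{PH}}$ can produce an output of exponential size, so $\textsc{core}$ cannot be in $F\mathsf{P}^{\mathsf{PH}}$. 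For the second, the plan is a reduction from $k\textsc{-col}$ (for fixed $k\geq 3$, $\mathsf{NP}$-complete): given a graph $G=(V,E)$, construct in polynomial time an SKB $\kb_G$ and a unit $\unit_G$ of bounded size such that the core characterization of $\unit_G$ is (the formula encoding) $K_k$ if $G$ is $k$-colorable and is (the formula encoding) $G$ itself—or some structure not isomorphic to $K_k$—otherwise. Concretely I would arrange the summaries so that $\mathit{can}(\unit_G,\kb_G)$ encodes $G$ (viewed via the $\mathsf{edge}$ predicate as in the graph discussion preceding the theorem), possibly amalgamated with a fixed copy of $K_k$, so that taking the core collapses $G$ onto $K_k$ exactly when a homomorphism $G\to K_k$ exists. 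If $\textsc{core}$ were in $F\mathsf{P}$, one could compute this core in polynomial time and then check in polynomial time whether it is isomorphic to the fixed small graph $K_k$ (isomorphism testing against a fixed-size structure is trivial), thereby deciding $k\textsc{-col}$ in $\mathsf{P}$. Since the core is defined up to isomorphism and its computation is a total function, a polynomial-time procedure for it would actually give $\mathsf{NP} \subseteq \mathsf{coNP}$ as well (the ``no'' instances become verifiable by exhibiting the core and noting it is $G$-like, not $K_k$), so the standard conclusion is that this would collapse $\mathsf{NP}=\mathsf{coNP}$; hence the conditional lower bound.

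\textbf{Main obstacle.} The delicate part is the medium-case lower-bound reduction: I must engineer the summary selector $\bar\varsigma$, the ontology, and the unit so that (i) everything stays polynomial-size and the selector is polynomial-time (to respect the broad/medium assumptions), (ii) $m = |\unit_G|$ is a fixed constant, and (iii) the canonical characterization faithfully encodes $G$ in such a way that its \emph{core}—not merely its instances—detects $k$-colorability. The subtlety is that $\mathit{can}$ is built from a direct product of summaries, so I need the product structure itself to carry the graph $G$ (or $G$ together with $K_k$), which requires a careful choice of how the few tuples of $\unit_G$ and their summaries interact under $\otimes$. Getting the core of this product to be isomorphic to $K_k$ exactly in the colorable case—and ruling out accidental collapses in the non-colorable case—is where the real work lies; the $\mathsf{DP}$-hardness of \textsc{core-identification} cited later suggests that pinning down cores is genuinely hard, which is consistent with the conditional nature of the claim.
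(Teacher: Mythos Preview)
Your upper-bound plan and the broad lower bound match the paper essentially verbatim: run $\mathsf{BuildCore}$ on top of $\mathsf{BuildCan}$, treat the homomorphism test as an $\mathsf{NP}$ oracle, and for the broad lower bound invoke Corollary~\ref{cor:exponential}. One small slip: in the handy case the oracle call at line~4 of Algorithm~\ref{alg:buildCore} is $\varphi\longrightarrow\varphi'$, a homomorphism \emph{between the two formulas}, not from $\mathit{atm}(\varphi)$ into a summary. Your conclusion still stands, because the handy bound forces $|\Dom_{\mathit{atm}(\mathit{can}(\unit,\kb))}|$ itself to be small (it is at most $\prod_i |\Dom_{\varsigma(\tau_i)}|$), so the number of self-maps of that domain is polynomial; but the justification should be phrased that way.

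For the medium lower bound you take a genuinely different route from the paper. The paper does \emph{not} reduce from $k\textsc{-col}$; it reduces from \textsc{core-identification}, which is $\mathsf{DP}$-complete, via an intermediate problem $\textsc{nt-core}$ on constant-free $\WCQ$ formulas closed under $\top$. The crucial technical device---exactly the obstacle you flag but do not resolve---is a \emph{realization lemma}: for every such formula $\varphi$ one can build in polynomial time a pair $(\kb,\unit)$ with $|\unit|=2$ such that $\mathit{can}(\unit,\kb)\simeq\varphi$. The trick is to take the dataset $D=\{p(cz_1,\ldots,cz_n):p(z_1,\ldots,z_n)\in\mathit{atm}(\varphi)\}\cup\{p(\mathsf{alias},\ldots,\mathsf{alias}):p(\ldots)\in\mathit{atm}(\varphi)\}$, the unit $\{\langle cx_1,\ldots,cx_k\rangle,\langle\mathsf{alias},\ldots,\mathsf{alias}\rangle\}$, and a selector that returns all of $D$; the direct product of the two summaries then reproduces $\varphi$ with no constants surviving. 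With that lemma, the paper argues: if $\textsc{core}\in F\mathsf{P}$ then $\textsc{nt-core}\in\mathsf{NP}$ (compute the core, check isomorphism with $\varphi'$ in $\mathsf{NP}$), contradicting $\mathsf{DP}$-hardness unless $\mathsf{NP}=\mathsf{coNP}$.

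Your $k\textsc{-col}$ route is not wrong in spirit---once you have the realization lemma you could encode $G$ together with a copy of $K_k$ (tied together through a fresh free variable to satisfy near-connectedness, as the paper does with the auxiliary predicate $s(x,y)$), and then $\textsc{core}\in F\mathsf{P}$ would put $k\textsc{-col}$ in $\mathsf{P}$, which is formally stronger than the paper's conclusion. But as written your proposal is incomplete precisely at the point you identify: you do not say how to make the direct product of a bounded number of summaries reproduce an arbitrary prescribed graph, and that is the entire content of the argument. The paper's $\textsc{core-identification}$ route has the additional advantage that it does not require controlling what the core looks like in the ``no'' case; it only needs the realization lemma plus the known $\mathsf{DP}$-hardness.
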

\begin{proof}
We begin by analyzing the upperbounds of Algorithm~\ref{alg:buildCore} in the broad case. In particular, we notice that in this case the Algorithm runs in $F\mathsf{EXP}$ using an oracle in $\mathsf{NP}$ for checking whether $\varphi \longrightarrow \varphi'$ holds. 
Next, we examine the medium case for Algorithm~\ref{alg:buildCore}. In this case, Algorithm~\ref{alg:buildCore} runs in $F\mathsf{P}$ using again an oracle in $\mathsf{NP}$ always for checking whether $\varphi \longrightarrow \varphi'$ holds.
Finally, in the handy case, everything becomes polynomial as the size of $\varphi$ and the number of its variables are bounded. 

About the lower bounds, we need to consider both the broad and the medium case.
For what concerns the broad case, Corollary~\ref{cor:exponential} leads us to conclude that in general such an object, being exponential, cannot be constructed in $F\mathsf{P}^{\mathsf{PH}}$.
For what concerns the medium case instead, let $\textsc{core-identification}$ be the $\mathsf{DP}$-complete problem reported in~\cite{DBLP:journals/tods/FaginKP05}:
{\em given a pair $(S',S)$ of finite structures such that $S' \subseteq S$, is $S'$ isomorphic to $\mathit{core}(S)$}?

Let $\mathbb{NT}$ be all constant-free formulas in $\WCQ$ closed under $\top$, and $\textsc{nt-core}$ be the analogous of $\textsc{core-identification}$ for $\mathbb{NT}$ formulas, in other terms, given a pair $(\varphi',\varphi)$ of $\mathbb{NT}$ formulas such that $\mathit{atm}(\varphi') \subseteq \mathit{atm}(\varphi)$, is $\varphi'$ isomorphic to $\mathit{core}(\varphi)$?
Indeed, $\textsc{nt-core}$ remains $\mathsf{DP}$-hard. To show this last statement, first notice that exploiting the proof of Theorem 4.2 in~\cite{DBLP:journals/tods/FaginKP05}, we can consider, as input for $\textsc{core-identification}$, two undirected graphs.

Now consider the following composition of mappings $(G',G)\mapsto(\varphi',\varphi)\mapsto(\varphi''',\varphi'')$, where $G,G'$ are two undirected graphs, $\varphi,\varphi',\varphi''$ and $\varphi'''$ are all constants-free formulas. Moreover, $G'\subseteq G$, $\mathit{atm}(\varphi') \subseteq \mathit{atm}(\varphi)$, $\mathit{atm}(\varphi''') \subseteq \mathit{atm}(\varphi'')$, $\varphi$ and $\varphi'$ are over a single binary and simmetric relation, call it $\mathsf{edge}$, and both $\varphi'''$ and $\varphi''$ are $\mathbb{NT}$ formulas.
First,  $\varphi$ (resp., $\varphi'$) is the conjunctive formula without free variables containing two atoms, namely $\mathsf{edge}(x_u,x_v)$ and $\mathsf{edge}(x_v,x_u)$, for each  $\{u,v\}$ in $G$ (resp., $G'$).
%
%{To construct $\varphi$ starting from $G=(V,E)$ (resp., $\varphi'$ starting from $G'=(V',E')$), you simply consider two not open formulas such that $E=\mathit{atm}(\varphi)$ (resp.,  $E'=\mathit{atm}(\varphi')$).
%
%\pietro{Non so se dire $G=atm(\varphi)$ è una cosa "ovvia" o va spiegato un pochino più in dettaglio come si la formula dal grafo. Poi, piccolo appunto, qui diciamo "un grafo" e più in basso diamo la definizione di grafo (Example 19). Non è sicuramente un problema grosso, però non so se a qualcuno possa dar fastidio. Anche i simboli usati non sono coerenti, in quanto qui usiamo una normale G, mentre giù usiamo $\mathcal{G}$ }
\nop{Second, we define $$\varphi''= x\leftarrow \bigwedge_{\alpha \in \mathit{atm}(\varphi)} \alpha\wedge\bigwedge_{y\in\Dom(\varphi)}s(x,y)\wedge\bigwedge_{z\in\Dom(\varphi)~\cup~\{x\}}\top(z) ~\mbox{~~and~~}~ \varphi'''= x\leftarrow \bigwedge_{\alpha \in \mathit{atm}(\varphi')} \alpha\wedge\bigwedge_{y\in\Dom(\varphi')}s(x,y)\wedge\bigwedge_{z\in\Dom(\varphi')~\cup~\{x\}}\top(z)$$}
Second, we define:
\begin{align*}
    \varphi'' &= x \leftarrow \bigwedge_{\alpha \in \mathit{atm}(\varphi)} \alpha \wedge \bigwedge_{y\in\Dom(\varphi)}s(x,y) \wedge \bigwedge_{z\in\Dom(\varphi) \cup \{x\}}\top(z) \\
    \text{and} \quad \varphi''' &= x \leftarrow \bigwedge_{\alpha \in \mathit{atm}(\varphi')} \alpha \wedge \bigwedge_{y\in\Dom(\varphi')}s(x,y) \wedge \bigwedge_{z\in\Dom(\varphi') \cup \{x\}}\top(z)
\end{align*}

\noindent where $s$ is some fresh predicate different from $\mathsf{edge}$.
We are going to show that $G'$ is the core of $G$ if, and only if, $\varphi'$ is the core of $\varphi$ if, and only if, $\varphi'''$ is the core of $\varphi''$.
The fact that $\varphi'$ is the core of $\varphi$ if $G'$ is the core of $G$, is obvious.
For the other direction, say that $\varphi'$ is the core of $\varphi$ and say by contradiction that $\varphi'''$ is not the core of $\varphi''$, since by construction $\varphi'''\mapsto\varphi''$ the only way possible is that it exists $\tilde{\varphi}\in \mathbb{NT}$ such that $\mathit{atm}(\tilde{\varphi})\subset\mathit{atm}(\varphi'')$ and $\tilde{\varphi}$ is the core of $\varphi'''$ (that coincides by definition with the core of $\varphi''$). 
Let $\alpha$ be an atom in $\mathit{atm}(\varphi'')\setminus\mathit{atm}(\tilde{\varphi})$, such an atom has to exist in order to have $\mathit{atm}(\tilde{\varphi})\subset\mathit{atm}(\varphi'')$. By construction, the only 3 possibilities for this atom are $(i)$ $\alpha=\mathsf{edge}(z,z')$ for some $z,z'\in\Dom_{\mathit{atm}(\varphi'')}$, $(ii)$ $\alpha=s(x,z)$ for some $z\in\Dom_{\mathit{atm}(\varphi'')}$ and $(iii)$ $\alpha=\top(z)$ for some $z\in\Dom_{\mathit{atm}(\varphi'')}$. 

Now we will prove that for any of these 3 possibilities, we will end up with a contradiction. First, say that $\alpha$ is of the form $\mathsf{edge}(z,z')$ for some $z,z'\in\Dom_{\mathit{atm}(\varphi'')}$, then by definition it exists a homomorphism $h$ from $\mathit{atm}(\varphi'')$ to $\mathit{atm}(\varphi'')$ such that at least one of the following two does hold: either $h(z)\neq z$ or $h(z')\neq z'$. Say that $h(z)\neq z$, the other implication can be solved by using a similar argument. Then we could use the same homomorphism $h$ on $h$ on the variables present in $\varphi'$ and thus obtain that $\tilde{\varphi'}$ exists such that $\mathit{atm}(\tilde{\varphi'})\subset\mathit{atm}(\varphi')$, contradicting the assumption that $\varphi'$ is a core. All other remaining possibilities give rise to the same contradiction following the same construction.

On the other hand, say that $\varphi'$ is not the core of $\varphi$, then obviously $\varphi'''$ is not the core of $\varphi''$. Now we show that for each $\varphi \in \mathbb{NT}$, it is possible to construct in $F\mathsf{P}$ a pair $(\kb, \unit)$ such that both $\varphi \simeq \mathit{can}(\unit,\kb)$ and $\mathit{core}(\varphi) \simeq \mathit{core}(\unit,\kb)$ hold. In order to do so, starting from a $k$-ary $\varphi \in \mathbb{NT}$, without loss of generality consider its free variables to be $x_1,...,x_k$, and consider the following knowledge base $K=(\db,\emptyset)$, where the dataset $\db=\{p(cz_1,...,cz_n)~:~p(z_1,...,z_n)\in\mathit{atm}(\varphi)\}~\cup~\{p(\mathsf{alias},...,\mathsf{alias})~:~p(z_1,...,z_n)\in\mathit{atm}(\varphi)\}$, $\unit=\{\langle cx_1,...,cx_k\rangle,\langle\mathsf{alias},...,\mathsf{alias}\rangle\}$. As the last thing, we have to define our summary selector $\varsigma$ as the following function $\varsigma(K,\tilde{\tau})$ that returns $\db$ whatever the input tuple $\tilde{\tau}$ will be. The really important thing to note in this construction is that the initial database domain is bipartite. In particular, there exist no $p(z_1,...,z_n)\in\db$ such that $z_i=\mathsf{alias}$ for some $i\in[n]$ and then there exist $j\in[n]$ such that $z_i\neq z_j$. This important property tells us a priori an important thing about the construction of the canonical explanation of the unit we built, in particular the set $\Genes$ restricted to any connected part of $P$ that contains $d_s$ with $s=cx_i,\mathsf{alias}$ with $i\in[k]$ will always be empty and this tells us we will have no constants at all in the set of atoms of $\mathit{can}(\unit,\kb)$; moreover, $\mathit{atm}(\mathit{can}(\unit,\kb))$ will be isomorphic to $\mathit{atm}(\varphi)$ by construction. To note this, the simplest way is to see that each constant of the direct product that is constructed using the algorithm provided in Section 3.2 of~\cite{AMENDOLA2024120331} will have the form $d_{c,\mathsf{alias}}$ for some constant $c$ present in the union of the connected parts of the dataset containing the constants $cx_1,...,cx_k$ but by construction this is isomorphic to the starting formula.

We can now reduce $\textsc{nt-core}$ to $\textsc{core}$ in the medium case. From a pair $(\varphi',\varphi)$ of formulas in $\mathbb{NT}$, do: $(i)$ construct in $F\mathsf{P}$ a pair $(\kb,\unit)$ such that $\varphi \simeq \mathit{can}(\unit,\kb)$; $(ii)$ construct $\mathit{core}(\unit,\kb)$; and $(iii)$ check in $\mathsf{NP}$ whether $\varphi' \simeq \mathit{core}(\unit,\kb)$. If $\textsc{core}$ were in $F\mathsf{P}$ in the medium case, then step $(ii)$ would also be in $F\mathsf{P}$ and, hence, $\textsc{nt-core}$ would be in $\mathsf{NP}$, which is impossible unless $\mathsf{NP} = \mathsf{coNP}$.
\end{proof}

As we already discussed, building characterizations is not all that we set out to do. We are in fact also interested in understanding when a given tuple shares more, less nexus of similarity with a given unit compared to another tuple or if, perhaps, these nexus are incomparable with each other. The next section will introduce some useful gadgets that will help us in answering these questions.

    %Definizione dei problemi
    %Assunzioni
    %Anteprima dei risultati

\subsection{Essential Expansion (\textsc{ess} problem)}\label{sec:ess}
%\subsection{Additional Tools}\label{sec:addnot}
In this section, before analyzing \textsc{ess}, we first delve into essential notations that form the backbone of our reductions. To ensure clarity, we provide intuitive explanations for each notation, often accompanied by illustrative examples. Additionally, we draw upon established results from existing literature, which underpin the complexity analyses presented here. Throughout the section, we formally define various decision tasks, each accompanied by dedicated reference sections. These references elaborate on the problem's significance and practical applications.

Let us establish some fundamental notation. Consider the partition of set $\C$ into $\{\C_f,\C_\ell\}$, where $\C_\mathit{f}$ represents flat constants (e.g., $\mathsf{a}$, $\mathsf{a}_1$, $\mathsf{a}_2$) and $\C_{\ell}$ $=$ $\{c^s\!:c\!\in\!\C_f \wedge s>1\}$ represents lifted constants.
We further categorize $\C_\mathit{f}$ into \emph{reserved} and \emph{input} constants. The \emph{reserved} constants are denoted as $\C_\mathit{re}$ $=$ $\{\mathsf{alias}\}$ $~\cup~$ $\{\mathsf{a}_i, \mathsf{b}_i\!:\!i\!>\!0\}$, and the remaining flat constants form the \emph{input} set $\C_\mathit{in}$ $=$ $\C_f\!\setminus\!\C_\mathit{re}$. Similarly, predicates are divided into \emph{reserved} ($\PS_\mathit{re}$) and \emph{input} ($\PS_\mathit{in}$) sets, ensuring specific symbols are exclusively used in our constructions.
The concept of \emph{focus} and \emph{twins} is fundamental. For any $k\!>\!0$ and $a\!\in\!\C_\mathit{f}$, let $\mathit{fc}(C)$ be the set of $\mathsf{focus}$ predicates for elements in $C$, and $\mathit{tw}(C,k)$ be the set generating \emph{twins} of constants in $C$. These twins create $k-1$ new elements for each constant in $C$.
To better grasp this, consider an example that illustrates the construction and significance of these twins.
\nop{Below we will introduce a lot of notation that will be exploited within the constructions present in the section. In order to make the reader comfortable, intuitions will be given for each of the notations, often accompanied by examples. We will also recall some results known from the literature that will be useful in the demonstrations of our complexity results. During the section, some decision-making tasks will also be formally defined, each with its own reference sections within which the importance of the problem itself will be better specified and how it can be exploited in practice as well as its possible immediate applications.
Let $\{\C_f,\C_\ell\}$ be a partition of $\C$ with $\C_\mathit{f}$ being {\em flat} (e.g., $\mathsf{a}$, $\mathsf{a}_1$, $\mathsf{a}_2$) and $\C_{\ell}$ $=$ $\{c^s\!:c\!\in\!\C_f \wedge s>1\}$ being {\em lifted}.
Let $\C_\mathit{re}$ $=$ $\{\mathsf{alias}\}$ $~\cup~$ $\{\mathsf{a}_i, \mathsf{b}_i\!:\!i\!>\!0\}$
and \mbox{$\C_\mathit{in}$ $=$ $\C_f\!\setminus\!\C_\mathit{re}$} 
denote {\em reserved} and {\em input} flat constants.
Analogously, let $\PS_\mathit{re}$ $=$ $\{\mathsf{focus},\mathsf{arc}, \top\}$ $~\cup~$ $\{\mathsf{twin}_s\!:\!s\!>\!1\}$ and \mbox{$\PS_\mathit{in}$ $=$ $(\PS \setminus \PS_\mathit{re}) ~\cup~ \{\top\}$.}
We do this to ensure that we have certain sets of constants and predicates that we can be sure will never be used as potential inputs. As a result, we can freely use them in our constructions without worrying about the fact that they may have had some previous meaning in the original objects.
Accordingly, inputs of our reductions use symbols only from $\C_\mathit{in}$ $~\cup~$ $\PS_\mathit{in}$.
Consider any \mbox{$k\!>\!0$} and \mbox{$a\!\in\!\C_\mathit{f}$.}
From \mbox{$C\!\subset\!\C_\mathit{f}$,}
let $\mathit{fc}(C)$ $=$ \mbox{$\{\mathsf{focus}(c)\!:\!c\!\in\!C\}$} 
and
$\mathit{tw}(C,k)$ be the set \mbox{$\{\top(c^s),\mathsf{twin}_s(c,c^s)\!:\!c\!\in\!C$ $\wedge$ $s\in[2... k]\}$}.
Intuitively, the two sets $\mathit{fc}(C)$ and $\mathit{tw}(C,k)$ give us, respectively, the ability to give greater importance, or \emph{focus}, to a subset of flat constants (potentially usable as potential inputs for our reductions). The second set generates \emph{twins} of all the flat constants present in the set $C$, creating $k-1$ new elements for each element belonging to $C$. Now, these ``twins'' are not exact copies of the original objects. To better understand what happens in the construction of this set, let us now look at an example.}
    Let $D=\{r(1,2)\}$, clearly $\Dom_D=\{1,2\}$. We want to know what $\mathit{fc}(\Dom_D)$ and $\mathit{tw}(D,2)$ look like. By their own definition we have that $\mathit{fc}(\Dom_D)=\{\mathsf{focus}(1),\mathsf{focus}(2)\}$ and $\mathit{tw}(D,2)=\{\mathsf{twin}(1,1^2)$ $,\mathsf{twin}(2,2^2), $ $ \top(1^2), $ $\top(2^2)\}$. 
Starting from a unary tuple $\tau=\langle a \rangle$, and an integer $k\geq 1$, let  $\tau^k$ $=$ $\langle a,a^2,...,a^k\rangle$.
From a unary unit $\unit$ $=$ $\{\tau_1,...,\tau_m\}$,
let $\unit^k$ be  $\{\tau_1^k,...,\tau_m^k\}$.
    An example of the above is the following: consider the following unit $\unit=\{\langle1\rangle,\langle2\rangle\}$, then we can construct $\unit^3$ that is equal to $\{\langle1\rangle^3,\langle2\rangle^3\}=\{\langle1,1^2,1^3\rangle,\langle2,2^2,2^3\rangle\}$.
From a dataset $D$, let
$\mathit{double}(D,a)$ be the  dataset defined as $\{p(c_1,...,c_{|p|})\!: p(b_1,...,b_{|p|})\!\in\!\db \wedge \mbox{each}~c_i\in g(b_i)\}$, where $g(a)\!=\!\{a,\mathsf{alias}\}$, and $g(b_i)\!=\!\{b_i\}$ whenever $b_i\!\neq\!a$.
Please note that, as the following example will explain, what we will obtain, in general, will not be a completely detached copy of the initial dataset. Instead, it will be a new structure that remains connected to the original.
    If we consider a dataset $D$ to be equal to $\{\mathsf{r}(1,2)\}$, then we have that $\mathit{double}(D,2)$ is equal to $\{\mathsf{r}(1,2), $ $\mathsf{r}(1,\mathsf{alias})\}$.
From $\tau\!\in\!\C^h$,
let $\mathit{off}(\tau,a)\!\in\!\C_{f}^h$ be the tuple obtained from $\tau$ by replacing each $c^s$ with $c$ and $\mathsf{alias}$ with $a$.
Intuitively, this operation might be mistaken as the inverse of the previously defined operation.

Although it will be used in that way later on, it is important to note that in general, this would not be guaranteed if the constant $\mathsf{alias}$ were not part of the reserved constants to which we do not have access as potential inputs. The next example will clarify this assertion.
  Let $D=\{r(1,2), r(3,a),r(1^4,2^5)\}$, then $\mathit{off}(\tau,4)$ applied to all (unary) tuples constructed starting from $\Dom_D$ will result in the set $\{\langle 1 \rangle, \langle 2 \rangle , \langle 3 \rangle, \langle 4 \rangle\}$. In particular, this operation works on tuples and not datasets.
From $\tau\!\in\!\C^k$,
let $\mathit{off}(\tau,a)\!\in\!\C_{f}^k$ be the tuple obtained from $\tau$ by replacing each $c^s$ with $c$ and $\mathsf{alias}$ with $a$.
  Considering that non-flat constants and $\mathsf{alias}$  are not part of our possible inputs, this operation ensures that, when and if activated, it will simply be another way for us to regain access to an object related to a potential input. Intuitively, for example, we could use this object to retrieve a specific summary whose trends we were already familiar with. The following example will help the reader in better understanding this concept.
  Let $\tau=\langle\mathsf{alias},1^3,2,b^2\rangle$, clearly is not a possible input tuples for any of our reductions since it uses $\mathsf{alias}$, $1^3$ and $b^2$. What about $\mathit{off}(\tau,b)$? The final result is $\langle b,1,2,b\rangle$ that is a possible input for our reductions.
From two tuples $\langle { \bf t}_1 \rangle$ and $\langle { \bf t}_2 \rangle$ of any arities, let $\langle { \bf t}_1 \rangle \circ \langle { \bf t}_2 \rangle$ be $\langle { \bf t}_1,{ \bf t}_2 \rangle$. 
From two units $\unit_1$ and $\unit_2$, let $\unit_1 \times \unit_2$ be the unit $\{\tau_1 \circ \tau_2: \tau_1\in \unit_1 \wedge \tau_2 \in \unit_2\}$.

The operation defined above is nothing more than the classical \emph{Cartesian product}. 
In the remainder of the paper, this operation will be carried out in cases where $\Dom_{\unit_1} \cap \Dom_{\unit_2} = \emptyset$.
    Accordingly let $\unit_1=\{\langle  a,b\rangle,\langle b,c \rangle\}$ and $\unit_2=\{\langle 1\rangle\}$, then we have that $\unit_1 \times \unit_2=\{\tau_1 \circ \tau_2: \tau_1\in \unit_1 \wedge \tau_2 \in \unit_2\}=\{\langle a,b,1 \rangle,\langle b,c,1\rangle\}$. 
We are now ready to introduce a technical notion that we are going to use to determine the computational complexity of \textsc{ess} from the one of $k$\textsc{-col}.

\nop{\begin{definition}\label{def:complement}
    Given a graph ${G}=(V,E)$ the graph ${G}^{a}=(V^{a},E^{a})$, where $V^{a}=V~\cup~\{ a\}$ (with ``$a$'' fresh vertex) and $E^{a}=E~\cup~\{(a,v)~:~v\in V \}~\cup~\{(v,a)~:~v\in V \}$, is called \emph{complement to a} of ${G}$. \mybox
\end{definition}}

\begin{definition}\label{def:complement}
    Consider a graph ${G}=(V,E)$ and a
    fresh vertex $u$ not already in $V$.
    We define ${G}^{u}$ as the new graph $(V^{u},E^{u})$, where $V^{u}=V~\cup~\{ u\}$ and $E^{u}=E~\cup~\{(u,v),(v,u)~:~v\in V \}$,  called the \emph{complement to $u$} of ${G}$. \mybox
\end{definition}

\nop{
\pietro{Nelle definition è tutto in italic. Per questo motivo, nella definition 11, l'espressione \emph{complement to a} che si voleva enfatizzare, in realtà torna in caratteri normali, quindi si perde un po' l'effetto tipico di usare "emph". Bisognerebbe trovare un altro modo per enfatizzare (escluderei il bold), almeno per quanto riguarda la lettera a. Forse può aver senso usare una a stilizzata diversamente o un'altra lettera (dato che a, essendo anche un articolo in inglese, risulta un po' strana da leggere, in quanto non si nota a primo impatto che ci stiamo riferendo ad a inteso come simbolo che rappresenta un vertice del grafo.) }	}

To illustrate the above definition, let us consider the following dataset $D=\{\mathsf{r}(1,2),$ $\mathsf{r}(1,3),$ $\mathsf{r}(2,3),$ $\mathsf{r}(3,1),$ $\mathsf{r}(3,2),$ $\mathsf{r}(2,1)\}$ that can be seen as a relational representation of a graph ${G}$. Then ${G}^a$ is represented by $D^a=D~\cup~\{\mathsf{r}(a,1),$ $\mathsf{r}(a,2),$ $\mathsf{r}(a,3),$ $\mathsf{r}(1,a),$ $\mathsf{r}(2,a),$ $\mathsf{r}(3,a)\}$. 

 \begin{lemma}\label{lem:color1}
A graph $G=(V,E)$ is $k$-colorable if, and only if, $G^a$ is $(k+1)$-colorable.
	\end{lemma}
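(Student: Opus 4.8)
The plan is to prove both directions by translating colorings across the operation $G \mapsto G^a$, exploiting the fact that the fresh vertex $a$ is adjacent to every original vertex of $G$.

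First I would prove the forward direction. Assume $G = (V,E)$ is $k$-colorable, and fix a proper $k$-coloring $\lambda : V \rightarrow [k]$. I define $\lambda' : V^a \rightarrow [k+1]$ by setting $\lambda'(v) = \lambda(v)$ for each $v \in V$ and $\lambda'(a) = k+1$. Since no original vertex receives the color $k+1$, the vertex $a$ differs in color from every $v \in V$; as $a$ is adjacent only to original vertices in $G^a$, all edges incident to $a$ are properly colored. The remaining edges of $G^a$ are exactly the edges of $G$, which are properly colored by $\lambda$. Hence $\lambda'$ is a proper $(k+1)$-coloring of $G^a$.

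Next I would prove the converse. Assume $G^a$ is $(k+1)$-colorable via a proper coloring $\mu : V^a \rightarrow [k+1]$. Since $a$ is adjacent to every $v \in V$, the color $\mu(a)$ is used by no original vertex; that is, $\mu(V) \subseteq [k+1] \setminus \{\mu(a)\}$, a set of size $k$. Fix any bijection $\pi$ from $[k+1] \setminus \{\mu(a)\}$ to $[k]$ and define $\lambda = \pi \circ (\mu \restriction V)$. For each edge $\{u,v\} \in E$ we have $\mu(u) \neq \mu(v)$, and since $\pi$ is injective, $\lambda(u) \neq \lambda(v)$. Thus $\lambda$ is a proper $k$-coloring of $G$.

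I do not expect a genuine obstacle here; the argument is a routine bookkeeping exercise. The only point requiring a little care is making sure the "colors not used on $V$" argument is airtight in the converse direction: one must invoke that $a$ is adjacent to \emph{all} vertices of $V$ (which holds by Definition~\ref{def:complement}, since $E^a$ adds both $(a,v)$ and $(v,a)$ for every $v \in V$) to conclude $\mu(a) \notin \mu(V)$, and hence that $\mu$ restricted to $V$ uses at most $k$ colors. If one wanted to be fully pedantic one could also remark that $V$ is nonempty is not needed — the statement holds vacuously for the empty graph — but this is a side remark rather than part of the core proof.
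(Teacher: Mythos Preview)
Your proof is correct and follows essentially the same approach as the paper: extend a $k$-coloring of $G$ by assigning color $k{+}1$ to $a$ in the forward direction, and in the converse use that $a$ is adjacent to every vertex to conclude $\mu(a)$ is unused on $V$, so the restriction to $V$ is a $k$-coloring. The only cosmetic difference is that the paper says ``without loss of generality $\bar h(a)=k+1$'' where you spell out the bijection $\pi$, which amounts to the same relabeling.
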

	\begin{proof}
		$\Longrightarrow$ Say that ${G}$ is $k$-colorable, this means that it exists $h:V\mapsto [k]$ such that no two adjacents vertices are mapped to the same element. From $h$ we can construct $\bar h:V^{a}\mapsto [k+1]$ in the following fashion:
		$$
		\bar h(v)=
		\begin{cases}
			h(v), \ \ \ \forall v\in V\\
			k+1, \ \ \mbox{otherwise}
		\end{cases}
		$$
		$\Longleftarrow$ Say that ${G}^{a}$ is (k+1)-colorable, this means that it exists $\bar h:V^{a}\mapsto [k+1] $ such that no two adjacent vertices are mapped to the same element. Without loss of generality we may say that $\bar h(a)=k+1$. From $\bar h$ we can construct $ h:V\mapsto [k]$ simply by putting
		$$
		h(v)=
		\bar h(v), \forall v\in V
		$$ In fact, since a is adjacent to any other vertex in $V^{a}$ by construction if $\bar h$ is a proper (k+1)-coloring then no other element of $V^{a}$ is mapped to k+1, but this means that every element of $V$ is mapped to $[k]$ by $\bar h$ and thus $h$ is a proper k-coloring of ${G}$. 
	\end{proof}
 
\nop{}

\nop{}
Now everything is in place to start our computational trip about \textsc{ess}. Hereinafter, for any decision problem $\pi \in \{\textsc{ess},\textsc{sim},\textsc{prec},\textsc{inc}\}$, let $\pi_k$ be its variant having an input unit $\unit$ of fixed arity equals to $k$. Then we derive the following.
%
%\subsection{Essential Expansion} (\textsc{ess} problem)\label{sec:ess}

%\paragraph*{Essential Expansion (the ESS problem)}

%We are ready to begin our tour of the complexity of the decision problems defined in the previous Sections. In particular, we will start by strengthening the result presented in~\cite{AMENDOLA2024120331} Section~5 Theorem~6.

\nop{Since we have defined the essential expansion of a unit, and in fact we have understood that if a tuple is part of this expansion this means that there is no way to find an explanation that explains all the nexus of similarity between the tuples of the unit and which however does not also include this tuple, it is natural to study the decision problem which, given as input a unit $\unit$, a tuple $\tau$ not belonging to the unit (the answer would otherwise be yes and in a trivial way) and an SKB $ \kb$ asks whether or not this tuple is part of the essential unit expansion relative to that SKB. This decision problem is called \textsc{ess} and below we give a characterization in terms of computational complexity.}
\begin{theorem}\label{thm:essfinal}
\textsc{ess} is $\mathsf{NEXP}$-complete, $\mathsf{NP}$-complete and in $\mathsf{P}$ in the broad, medium and handy case, respectively. 
In particular, lower bounds  hold already for $\textsc{ess}_k$ with $k>0$, a selective $\kb$ with an empty ontology and a unit $|\unit|>1$.
\end{theorem}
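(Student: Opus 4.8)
The plan is to reduce \textsc{ess} to the following two-step procedure: first compute $\varphi = \mathit{can}(\unit,\kb)$, then check whether $\tau$ lies in the output of $\varphi$ over $\varsigma(\tau)$. This is sound because $\mathit{can}(\unit,\kb)$ and $\mathit{core}(\unit,\kb)$ are homomorphically equivalent and hence compute the same output on every dataset, so $\mathit{ess}(\unit,\kb) = \mathit{inst}(\mathit{can}(\unit,\kb),\kb)$; and, by definition of $\mathit{inst}$, membership of $\tau$ is equivalent to the existence of a $\C$-homomorphism from $\mathit{atm}(\varphi)$ to $\varsigma(\tau)$ sending the $i$-th free variable of $\varphi$ to $\tau[i]$. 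For the upper bounds I would combine this with Theorem~\ref{thm:CAN-RES}. In the broad case $\varphi$ has at most exponential size and is computable in exponential time, so one nondeterministically guesses the (exponentially large) homomorphism and verifies it in exponential time, giving $\mathsf{NEXP}$. In the medium case $m = |\unit|$ is bounded, $\varphi$ has polynomial size and lies in $F\mathsf{P}$, so guessing and checking a polynomial-size homomorphism gives $\mathsf{NP}$. In the handy case both the number of variables of $\varphi$ and $|\Dom_{\varsigma(\tau)}|$ grow only subpolynomially in $|\Dom_{\mathit{ent}(K)}|$, so the number of candidate $\C$-homomorphisms from $\mathit{atm}(\varphi)$ to $\varsigma(\tau)$ is $|\Dom_{\mathit{ent}(K)}|^{o(1)}$, hence polynomial; enumerating and testing all of them gives $\mathsf{P}$.

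For the lower bound in the medium case I would reduce from $k$\textsc{-col} with a fixed $k \ge 3$. Given $G = (V,E)$, I form the graph $G^a$ of Definition~\ref{def:complement} and let $\varphi_G \in \mathbb{NT}$ be the formula with single free variable $x_a$ whose body contains $\mathsf{edge}(x_u,x_v)$ and $\mathsf{edge}(x_v,x_u)$ for each $\{u,v\} \in E(G^a)$ together with the atoms $\top(x_w)$ for $w \in V(G^a)$. Using the construction in the proof of Theorem~\ref{thm:corecore}, I build in $F\mathsf{P}$ a pair $(\kb,\unit)$ with $\unit$ unary of size $2$, empty ontology, and $\mathit{can}(\unit,\kb) \simeq \varphi_G$. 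I then enlarge the knowledge base of $\kb$ with a vertex-disjoint copy of $K_{k+1}$ over fresh constants $1,\dots,k+1$ (with the corresponding $\top$-atoms) and modify $\varsigma$ so that it returns this copy on the fresh tuple $\tau = \langle 1\rangle$ while leaving the summaries of $\unit$'s tuples untouched, so that $\mathit{can}(\unit,\kb)$ does not change. Then $\tau \in \mathit{ess}(\unit,\kb)$ iff there is a homomorphism from $\mathit{atm}(\varphi_G)$ to this copy of $K_{k+1}$ mapping $x_a$ to $1$; since $K_{k+1}$ is vertex-transitive, such a homomorphism exists iff $G^a$ is $(k+1)$-colorable, which by Lemma~\ref{lem:color1} holds iff $G$ is $k$-colorable. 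As the reduction is polynomial and the resulting SKB meets the medium assumption, \textsc{ess} is $\mathsf{NP}$-hard already there; for $\textsc{ess}_n$ with an arbitrary fixed $n > 0$ I pad each tuple of $\unit$ and each corresponding summary with $n-1$ fresh, mutually disjoint components, which merely appends isolated $\top$-atoms on fresh free variables to $\mathit{can}(\unit,\kb)$, preserving near-connectedness and the correctness of the reduction.

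For the lower bound in the broad case I would reduce from \textsc{php-sb}, which asks whether there is a homomorphism $I_1 \otimes \dots \otimes I_m \to I_{m+1}$ for instances over a single binary relation $R$. After relabelling so that $\Dom_{I_1},\dots,\Dom_{I_{m+1}}$ are pairwise disjoint, I add to each $I_i$ a fresh vertex $\rho_i$ joined by a fresh binary predicate $\mathsf{link}$ to every vertex of $I_i$ and close under $\top$, obtaining $\hat I_i$; I set $\unit = \{\langle\rho_1\rangle,\dots,\langle\rho_m\rangle\}$, $\tau = \langle\rho_{m+1}\rangle$, $K = (\hat I_1 \cup \dots \cup \hat I_{m+1},\emptyset)$, and let $\varsigma$ be the lookup selector with $\varsigma(\langle\rho_i\rangle) = \hat I_i$ and $\varsigma(\tau) = \hat I_{m+1}$. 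In the direct product $\hat I_1 \otimes \dots \otimes \hat I_m$ the vertex $d_r$ corresponding to $(\rho_1,\dots,\rho_m)$ is $\mathsf{link}$-adjacent to exactly the ``pure'' vertices (those with no $\rho$-coordinate), which carry precisely the $R$-edges of $I_1 \otimes \dots \otimes I_m$, while every ``mixed'' vertex is isolated; hence, arguing as in the proofs of Theorem~\ref{thm:CAN-RES} and Corollary~\ref{cor:exponential} (pairwise disjoint domains yield neither constants nor cloned atoms), $\mathit{can}(\unit,\kb)$ is isomorphic to the formula with free variable $x_r$ whose body is $I_1 \otimes \dots \otimes I_m$ together with the atoms $\mathsf{link}(x_r,\cdot)$ and all needed $\top$-atoms. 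A $\C$-homomorphism from this body to $\hat I_{m+1}$ that sends $x_r$ to $\rho_{m+1}$ restricts on the pure part to a homomorphism $I_1 \otimes \dots \otimes I_m \to I_{m+1}$, and conversely every such homomorphism extends; therefore $\tau \in \mathit{ess}(\unit,\kb)$ iff the \textsc{php-sb} instance is positive. Since the output SKB has arity $2$, empty ontology, a trivially polynomial-time selector, $|\unit| = m > 1$, and the exponential direct product is never materialized, this is a polynomial-time reduction and \textsc{ess} is $\mathsf{NEXP}$-hard; the same padding as before handles $\textsc{ess}_n$. The delicate point in both reductions---and the main obstacle---is precisely this control of the canonical-characterization construction: one must guarantee that the requirement that $\mathit{can}(\unit,\kb)$ be nearly connected retains the \emph{entire} intended structure and nothing spurious, and that no constants (genes) or cloned atoms are introduced; the disjoint-domain relabelling and the fresh $\mathsf{link}$-vertex gadget are designed precisely for this.
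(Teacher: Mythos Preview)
Your proposal is correct, and the upper-bound arguments match the paper's (Algorithm~\ref{alg:InEss}: build $\mathit{can}(\unit,\kb)$, then guess/enumerate a $\C$-homomorphism into $\varsigma(\tau)$). The lower bounds, however, go by genuinely different routes. For the medium case, the paper builds everything from scratch: two disjoint copies of $G^{a}$ (rooted at $\mathsf{a}_1,\mathsf{a}_2$) together with a copy of $K_4$, with the selector returning the whole dataset; you instead reuse the $\mathbb{NT}$-realization gadget from the proof of Theorem~\ref{thm:corecore} to force $\mathit{can}(\unit,\kb)\simeq\varphi_G$ and then plant $K_{k+1}$ as the summary of the test tuple. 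Your route is more modular (and generalizes to any fixed $k\ge 3$) but leans on the correctness of that earlier gadget; the paper's is self-contained. For the broad case, the paper uses the \emph{same} binary predicate $\mathsf{br}$ for both the $I_i$'s and the root connections, adds a dummy $I_{m+2}=\{\mathsf{br}(\mathsf{a}_{m+2},\mathsf{a}_{m+2})\}$ to kill constants and ensure $|\unit|>1$, and argues semantically (any explanation of $\unit$ explains $\langle\mathsf{a}_{m+1}\rangle$ iff $S\to I_{m+1}$, exhibiting an explicit separating $\varphi$ in the negative case); you use a fresh predicate $\mathsf{link}$, per-tuple summaries $\hat I_i$, and compute $\mathit{can}(\unit,\kb)$ explicitly via the near-connectedness filter. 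Your version makes the shape of $\mathit{can}$ transparent but you should note that $m>1$ may be assumed without loss of generality for \textsc{php-sb} (e.g.\ by padding with a self-loop instance, which is exactly what the paper's $I_{m+2}$ does), and that the ``extends'' direction requires $\Dom_{I_{m+1}}\neq\emptyset$. Finally, for raising the arity the paper uses the $\mathsf{twin}_s$ predicates, whereas you pad with fresh disjoint components carrying only $\top$-atoms; both work, and yours is slightly lighter since it exploits that isolated $\top$-atoms on free variables are automatically nearly connected.
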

%\vspace{-3mm}

\begin{proof}
For what concerns the upper bounds, we can make use of Algorithm~\ref{alg:InEss}. Let us consider the three cases separately. In the broad case, line 1 runs in exponential time, lines 2 run in nondeterministic exponential time, and lines 3-4 run in exponential time. In the medium case, line 1 runs in polynomial time, lines 2 run in nondeterministic polynomial time, and lines 3-4 run in polynomial time. For the handy case, instead of guessing a mapping as we do in Algorithm~\ref{alg:InEss}, we can simply enumerate all of them. This can be done since, for any $\tau$, the cardinality of all possible $\C$-homomorphisms from $\mathit{atm}(\mathit{can}(\unit,\kb))$ to $\varsigma(\tau)$ is at most polynomial with respect to the size of $\Dom_{\mathit{ent}(K)}$~\cite{AMENDOLA2024120331}.
Hence, since all possible $\C$-homomorphisms are polynomially many, the resulting algorithm will run in polynomial time.

For what concerns the lower bounds, we will give two distinct constructions, one for the broad case and one for the medium one. Let \textsc{php-sb} be the $\mathsf{NEXP}$-complete problem: {\em given a sequence ${\bf s}$ $=$ $I_1,...,I_{m+1}$ of instances over the binary relation $\mathsf{br}$ with $\Dom_{I_i} \cap \Dom_{I_{j\neq i}}\!=\! \emptyset$, is there any homomorphism from $I_1\! \otimes\!...\!\otimes\!I_{m}$ to $I_{m+1}$?} Let \textsc{{\footnotesize3}-col} be the $\mathsf{NP}$-complete problem: {\em given a graph $G$ $=$ $(V,E)$, is there a map $\lambda: V \rightarrow \{\mathtt{r},\mathtt{g},\mathtt{b}\}$ such that $\{u,v\}\!\in\!V$ implies $\lambda(u)\!\neq\!\lambda(v)$?}

In the broad case, we are going to show that \textsc{php-sb} $\leq$ \textsc{ess}$_k$, for each $k\!>\!0$. 
\nop{To this hand, we adapt a known technique already presented in~\cite{DBLP:conf/icdt/CateD15}: 
given an input {\bf s} for \textsc{php-sb}, consider the map {\bf s} $\mapsto$ $(\kb, \unit^k, \langle \mathsf{a}_{m+1}\rangle^k)$, where
 \mbox{$\kb$ $=$ $(K,\varsigma)$}, \
 \mbox{$K$ $=$ $(\db,O)$}, \
 \mbox{$\db$ $=$ $\db_1 ~\cup~...~\cup~ \db_4$}, \
 \mbox{$I_{m+2}=\{\mathsf{br}(\mathsf{a}_{m+2},\mathsf{a}_{m+2})\}$}, \
 \mbox{$\db_1$ $=$ $I_{1} ~\cup~ ... ~\cup~ I_{m+2}$}, \
 \mbox{$\db_2$ $=$ $\{\top(c):c \in \Dom_{\db_1}\}$}, \
 \mbox{$\db_3$ $=$ $\{\top(\mathsf{a}_i),\mathsf{br}(\mathsf{a}_{i},c):c \in \Dom(I_i)\}_{i \in[m+2]}$}, \
 \mbox{$\db_4$ $=$ $\mathit{tw}(\{\mathsf{a}_1,...,\mathsf{a}_{m+2}\},k)$}, \
 \mbox{$O=\emptyset$}, \
 \mbox{$\varsigma$ always selects $D^{\top}$}, and the unit $\unit$ is the set
 \mbox{$\{\langle \mathsf{a}_{m+2} \rangle\}~\cup~\{\langle \mathsf{a}_1 \rangle,...,\langle \mathsf{a}_m \rangle\}$.}}
 To this end, we adapt a known technique already presented in~\cite{DBLP:conf/icdt/CateD15}: 
given an input $\mathbf{s}$ for \textsc{php-sb}, consider the map $\mathbf{s} \mapsto (\kb, \unit^k, \langle \mathsf{a}_{m+1}\rangle^k)$, where
$\kb = (K,\varsigma)$,
$K = (\db,O)$,
$\db = \db_1 \cup \dots \cup \db_4$,
$I_{m+2}=\{\mathsf{br}(\mathsf{a}_{m+2},\mathsf{a}_{m+2})\}$,
$\db_1 = I_{1} \cup \dots \cup I_{m+2}$,
$\db_2 = \{\top(c) : c \in \Dom_{\db_1}\}$,
$\db_3 = \{\top(\mathsf{a}_i),\mathsf{br}(\mathsf{a}_{i},c) : c \in \Dom(I_i)\}_{i \in[m+2]}$,
$\db_4 = \mathit{tw}(\{\mathsf{a}_1,\dots,\mathsf{a}_{m+2}\},k)$,
$O=\emptyset$,
$\varsigma$ always selects $D^{\top}$, and the unit $\unit$ is the set
$\{\langle \mathsf{a}_{m+2} \rangle\} \cup \{\langle \mathsf{a}_1 \rangle,\dots,\langle \mathsf{a}_m \rangle\}$.
         
\nop{In the broad case, for each \( k > 0 \), by adapting the technique in~\cite{DBLP:conf/icdt/CateD15}, we show that \textsc{php-sb} \( \leq \) \textsc{ess}\(_k\). Let \(\mathbf{s}\) be an input for \textsc{php-sb} and consider the mapping \(\mathbf{s} \mapsto (\kb, \unit^k, \langle \mathsf{a}_{m+1} \rangle^k)\), where \(\kb = (K, \varsigma)\), with \(K = (\db, \emptyset)\) and \(\varsigma\) always selecting \(D^\top\). The database \(\db\) is defined as the union \(\db_1 \cup \db_2 \cup \db_3 \cup \db_4\), where \(\db_1 = I_1 \cup \dots \cup I_{m+2}\), with \(I_{m+2} = \{\mathsf{br}(\mathsf{a}_{m+2}, \mathsf{a}_{m+2})\}\), \(\db_2 = \{ \top(c) \mid c \in \Dom_{\db_1} \}\), \(\db_3 = \bigcup_{i \in [m+2]} \{ \top(\mathsf{a}_i) \} \cup \{ \mathsf{br}(\mathsf{a}_i, c) \mid c \in \Dom(I_i) \}\), and \(\db_4 = \mathit{tw}(\{ \mathsf{a}_1, \dots, \mathsf{a}_{m+2} \}, k)\). The unit \(\unit\) is defined as \(\{\langle \mathsf{a}_{m+2} \rangle\} \cup \{\langle \mathsf{a}_1 \rangle, \dots, \langle \mathsf{a}_m \rangle\}\).
}
The reason why we are adding the element $\langle \mathsf{a}_{m+1} \rangle$ to our unit is linked to the fact that from the point of view of possible homomorphisms it does not change anything, however, even if the starting $m$ had been equal to one, it guarantees us that there can be no use of any constant within the formulas that we are going to build. It is, in fact, easy to check that there is a homomorphism from $I_1\! \otimes\!...\!\otimes\!I_{m}$ to $I_{m+1}$ if, and only if, there is a homomorphism from $I_1\! \otimes\!...\!\otimes\!I_{m}\!\otimes\!I_{m+2}$ to $I_{m+1}$. First, let us notice that in our context, given a characterization for $\unit$, call it $\varphi$, then we can easily construct a characterization for $\unit^k$ of the form $$x_1,...,x_k\leftarrow \bigwedge_{s\in[2... k]}\top(x_s),\mathsf{twin}_s(x_1,x_s) \wedge\bigwedge_{\alpha \in \mathit{atm}(\varphi)}\alpha.$$ Vice versa, say that you have a characterization $\varphi'$ for $\unit^k$, then by renaming every occurrence of the free variables other than the first one, we will end up with a characterization for $\unit$. According to this, we can simply focus on the unary case and then add this detail later. Let $S$ denote $I_1 
		\otimes...\otimes
		I_m\otimes I_{m+2}$. In one direction, if $S \mapsto I_{m+1}$ then, by homomorphism preservation, any conjunctive formula that explains the elements of $\unit$ will also explain $\langle a_{m+1}\rangle$. On the other hand, if $S \not\mapsto I_{m+1}$, then we can construct a formula $\varphi\in\WCQ$ that explains $\unit$ but not $\langle a_{m+1}\rangle$. 
To this aim, consider the formulas $$\varphi_S = \bigwedge_{\mathsf{br}(t_1,t_2)\in S}\mathsf{br}(y_{t_1},y_{t_2}) \mbox{~~~~~~~and~~~~~~~} \varphi = x\leftarrow \varphi_S\wedge\bigwedge_{y\in\Dom_{\varphi_S}}\mathsf{br}(x,y).$$
By construction, $\varphi$ explains all the elements of $\unit$ but does not explain $\langle a_{m+1} \rangle$. In order to see this, on the one hand consider, for each $i\in[m+2]\setminus\{m+1\}$, $h_i$ as the projection onto the $i$-th component. Obviously, each of these is a homomorphism from $I_1 
		\otimes...\otimes
		I_m\otimes
		I_{m+2} \mapsto I_i$, and in particular, this shows that $\varphi_S$ is true when evaluated on every $I_i$ with $i\in[m+2]\setminus\{m+1\}$, implying $\langle a_{i}\rangle\in\mathit{inst}(\varphi,\kb)\forall i \in [m+2]\setminus\{m+1\}$. On the other hand, suppose by contradiction that there exists a homomorphism $h$ $\mathit{atm}(\varphi)$ to $\varsigma(a_{m+1})$ such that $\langle h(x) \rangle=\langle a_{m+1} \rangle $, then the same homomorphism $h$ limited to variables other than $x$ would show the existence of a homomorphism $h'$ from $I_1 
		\otimes...\otimes
		I_m\otimes I_{m+2}$ to $I_{m+1}$, and this is absurd. What remains to be shown is that the new summary selector can be built without making complexity leaps and that its execution time turns out to be polynomial with respect to the input received, but this is straightforward since the summary selector has always to return the entire dataset in input.

We have just completed the proof concerning the broad case, and we can, therefore, proceed to the next case. We will now show that for each $k\!>\!0$, in the medium case, \textsc{{\footnotesize3}-col} $\leq$ \textsc{ess}$_k$ via the following map: $G$ $\mapsto$ $(\kb, \unit^k, \langle \mathsf{b}_1\rangle^k)$, where:

\nop{\begin{itemize}
    \item $\kb$ $=$ $(K,\varsigma)$,
    \item $K=(\db,O)$,
    \item $\db$ $=$ $\db_1$ $~\cup~...~\cup~$ $\db_5$,
    \item $\db_1$ $=$ $\{\mathsf{arc}(u_i,v_i),$ $\mathsf{arc}(v_i,u_i),$ $:\{u,v\}\in E \wedge i\in[2]\}$,
    \item $\db_2$ $=$ $\{\top(c):c \in \Dom_{\db_1}\}$,
    \item $\db_3$ $=$ $\{\top(\mathsf{b}_i),\mathsf{arc}(\mathsf{b}_i,\mathsf{b}_z):i,z\in[4],i\neq z\}$,
    \item $\db_4$ $=$ $\{\top(\mathsf{a}_i),\mathsf{arc}(\mathsf{a}_i,v_i),\mathsf{arc}(v_i,\mathsf{a}_i)\!:\!v\in\!V \wedge i\!\in\![2]\}$,
    \item $\db_5$ $=$ $\mathit{tw}(\Dom_{\db_3}~\cup~\Dom_{\unit},k)$,
    \item $O=\emptyset$,
    \item $\varsigma$ always selects $D$,
    \item $\unit = \{\langle \mathsf{a}_1 \rangle,\langle \mathsf{a}_2 \rangle\}$.
\end{itemize} }
 \(\kb = (K, \varsigma)\), with \(K = (\db, O)\), \(\varsigma\) always selects \(D\), and \(O = \emptyset\). The database \(\db\) is defined as \(\db = \db_1 \cup \db_2 \cup \db_3 \cup \db_4 \cup \db_5\), where:
\(\db_1 = \{ \mathsf{arc}(u_i, v_i), \mathsf{arc}(v_i, u_i) \mid \{u, v\} \in E,\, i \in [2] \}\),  
\(\db_2 = \{ \top(c) \mid c \in \Dom_{\db_1} \}\),  
\(\db_3 = \{ \top(\mathsf{b}_i), \mathsf{arc}(\mathsf{b}_i, \mathsf{b}_z) \mid i, z \in [4],\, i \neq z \}\),  
\(\db_4 = \{ \top(\mathsf{a}_i), \mathsf{arc}(\mathsf{a}_i, v_i), \mathsf{arc}(v_i, \mathsf{a}_i) \mid v \in V,\, i \in [2] \}\),  
\(\db_5 = \mathit{tw}(\Dom_{\db_3} \cup \Dom_{\unit}, k)\),  
and \(\unit = \{ \langle \mathsf{a}_1 \rangle, \langle \mathsf{a}_2 \rangle \}\).

Again, let us notice that in our context, given a characterization for $\unit$, call it $\varphi$, then we can easily construct a characterization for $\unit^k$  of the form $$x_1,...,x_k\leftarrow \bigwedge_{s\in[2... k]}\top(x_s),\mathsf{twin}_s(x_1,x_s) \wedge\bigwedge_{\alpha \in \mathit{atm}(\varphi)}\alpha.$$ Vice versa, say that you have a characterization $\varphi'$ for $\unit^k$, then by renaming every occurrence of the free variables other than the first one, we will end up with a characterization for $\unit$. Having asserted this, we can equivalently solve the problem by concentrating on the unary case.

Let $K_4=(V_4,E_4)$ denote a representation of the complete graph of order four. Note that $\db_3$ translates $K_4$ into a dataset, whereas $\db_1~\cup~\db_4$ translates two disjoint objects isomorphic to $G^a$, call them $G^{a_1}$ and $G^{a_2}$, into a dataset. Let $\tilde{a}\in\{a_1,a_2\}$. On the one hand, say that $G$ is 3-colorable. From Lemma~\ref{lem:color1}, $G^{\tilde{a}}$ is 4-colorable, so there exists a homomorphism $h$ from $G^{\tilde{a}}$ to $K_4$. This homomorphism will map $\tilde{a}$ to a certain element of $V_4$, and since elements from $V_4$ are equal up to isomorphism, w.l.o.g. $h(\tilde{a})=\mathsf{b}_1$. Now let $\varphi$ be a characterization for $\mathit{ess}(\unit,\kb)$. This means that there exists a homomorphism $h_1: \mathit{atm}(\varphi)\mapsto \Dom(\kb)$ such that $h_1(x)=\mathsf{\tilde{a}}$. But then, by homomorphism composition, $h_2: \mathit{atm}(\varphi)\mapsto \Dom(\kb)$ defined as $h\circ h_1$ is another homomorphism such that $h_2(x)=\mathsf{b}_1$, and so $\langle \mathsf{b}_1\rangle\in \mathit{ess}(\unit,\kb)$.

On the other hand, now say that $\langle \mathsf{b}_1\rangle\in \mathit{ess}(\unit,\kb)$. This means that $\mathit{can}(\mathit{ess}(\unit,\kb))$ explains $V_4$. In particular, this gives rise to a homomorphism from $G^{\tilde{a}}$ to $K_4$. But then, from Lemma~\ref{lem:color1}, $G$ is 3-colorable. As before, what remains to be shown is that the new summary selector can be built without making complexity leaps and that its execution time turns out to be polynomial with respect to the input received. Again, this is straightforward because also in this case, the summary selector has always to return the entire dataset in input.
\end{proof}

%The decision problem \textsc{ess} will in fact be the one we will use for every other reduction present in the rest of the work.

%\paragraph*{Gadgets}

Before closing the section, we introduce two additional yet useful tasks based on the notion of \textsc{ess}. These problems will serve as gadgets in the complexity analysis of some of our newly defined problems \textsc{sim}, \textsc{inc} and \textsc{prec}:

\begin{itemize} \item[\textsc{gad{\scriptsize1}}:] Given as input a quadruple $\kb, \unit, \tau, \tau'$, determine whether $\tau \in \mathit{ess}(\unit \cup \{\tau'\},\kb)$.
\item[\textsc{gad{\scriptsize2}}:] Given as input a quadruple $\kb, \unit, \tau, \tau'$, determine whether $\tau' \in \mathit{ess}(\unit \cup \{\tau\},\kb)$.
\end{itemize}

Both problems reduce to $\textsc{ess}$. Given an input $\iota = (\kb,\unit,\tau,\tau')$, to show \mbox{\textsc{gad{\scriptsize1}} {\scriptsize$ \leq$} $\textsc{ess}$} we can use the reduction \mbox{$\iota \mapsto (\kb, \unit~\cup~\{\tau'\} ,\tau)$} and for \mbox{\textsc{gad{\scriptsize2}} {\scriptsize$ \leq$} $\textsc{ess}$} we can use the reduction  $\iota \mapsto (\kb, \unit~\cup~\{\tau\} ,\tau')$.
%Let us see a formal proof below.

\begin{algorithm}[t!] 
	\DontPrintSemicolon
	\KwInput{$\kb  = (K, \varsigma)$, $\unit = \{\tau_1,...,\tau_m\}$ and a tuple $\tau$.}
	$\varphi := \mathsf{BuildCan}(\unit,\kb)$\;
	{\bf guess} a mapping $h$ from $\Dom_{\mathit{atm}(\varphi)}$ to $\Dom_{\varsigma(\tau)}$\;
	
	\If{$h$ \mbox{{\rm is a }}$\C$\mbox{{\rm -homomorphism from}} $\mathit{atm}(\varphi)$ \mbox{{\rm to}} $\varsigma(\tau)$}
	{
	    \If{$\langle h(x_{{\bf s}_1}),...,h(x_{{\bf s}_n}) \rangle = \tau$}
	    {$\mathsf{accept~this~branch}$ }
	 }
	
	$\mathsf{reject~this~branch}$

	\caption{$\mathsf{InEss}(\unit,\kb,\tau)$}\label{alg:InEss}
\end{algorithm}

\begin{proposition}\label{prop:gad}
Both \textsc{gad{\scriptsize1}} and \textsc{gad{\scriptsize2}} belong to
$\mathsf{NEXP}$, $\mathsf{NP}$ and $\mathsf{P}$ in the broad, medium and handy case, respectively.
\end{proposition}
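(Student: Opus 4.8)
The plan is to derive Proposition~\ref{prop:gad} from Theorem~\ref{thm:essfinal} by way of the two reductions to \textsc{ess} stated immediately before it. Recall that, on input $\iota = (\kb, \unit, \tau, \tau')$, the map $\iota \mapsto (\kb, \unit \cup \{\tau'\}, \tau)$ witnesses $\textsc{gad{\scriptsize1}} \leq \textsc{ess}$ and $\iota \mapsto (\kb, \unit \cup \{\tau\}, \tau')$ witnesses $\textsc{gad{\scriptsize2}} \leq \textsc{ess}$. First I would observe that both maps are computable in $F\mathsf{P}$ — in fact in logarithmic space — since they merely rearrange the input and relocate one tuple into the unit; and that the resulting \textsc{ess} instances are well-formed, because the side conditions $\{\tau\} \cap \unit = \emptyset = \{\tau'\} \cap \unit$ guarantee that the relocated tuple does not already occur in the enlarged unit.

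The next step is to verify that each reduction maps an instance satisfying one of the three assumptions of Section~\ref{sec:preliminaries} to an \textsc{ess} instance satisfying the same assumption. For the broad case this is immediate: neither $K$, nor $\varsigma$, nor the set of predicates is modified, so the bound on the maximal arity of $D$ and the polynomial-time computability of $\varsigma(K,\cdot)$ and $\mathit{ent}(K)$ carry over unchanged. For the medium case, one additionally notes that $|\unit \cup \{\tau'\}| = |\unit| + 1$ (and symmetrically for $\tau$), so a fixed bound on $|\unit|$ yields a fixed bound on the cardinality of the unit in the produced instance. Consequently, the broad and medium claims of Proposition~\ref{prop:gad} follow purely as a black box: the membership of \textsc{ess} in $\mathsf{NEXP}$ (broad) and in $\mathsf{NP}$ (medium), established in Theorem~\ref{thm:essfinal}, transfers along an $F\mathsf{P}$ reduction that stays within the corresponding case.

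For the handy case I would argue slightly more directly, and this is the step I expect to need the most care. Here I would compose the reduction with the enumeration variant of Algorithm~\ref{alg:InEss} — that is, replacing the nondeterministic guess of the mapping $h$ by an exhaustive enumeration of all $\C$-homomorphisms from $\mathit{atm}(\mathit{can}(\unit \cup \{\tau'\}, \kb))$ to $\varsigma(\tau)$, exactly as in the handy part of the proof of Theorem~\ref{thm:essfinal}. The only genuine difference is that the unit being characterized is now $\unit \cup \{\tau'\}$ of size $|\unit| + 1$ rather than $\unit$; so I would re-establish the two estimates used there — namely that both $|\mathit{can}(\unit \cup \{\tau'\}, \kb)|$ and the number of such homomorphisms remain polynomial in $|\Dom_{\mathit{ent}(K)}|$ — now for a unit of size $|\unit| + 1$. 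Since $|\unit| + 1$ is still a fixed constant and $\varsigma$, $\mathit{ent}(K)$ (hence all summary domains and $\Dom_{\mathit{ent}(K)}$) are untouched by the reduction, this is a routine adaptation of the bound already in place, and it yields membership of \textsc{gad{\scriptsize1}} and \textsc{gad{\scriptsize2}} in $\mathsf{P}$ in the handy case, completing the proof.
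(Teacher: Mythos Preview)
Your proposal is correct and takes essentially the same approach as the paper: both reduce \textsc{gad{\scriptsize1}} and \textsc{gad{\scriptsize2}} to \textsc{ess} via the stated maps, argue that the reduction preserves each of the three assumptions, and then invoke the upper bounds of Theorem~\ref{thm:essfinal}. You are in fact considerably more explicit than the paper, which simply asserts in one line that ``the broad case reduces to the broad case, the medium case reduces to the medium case and the handy case reduces to the handy case,'' whereas you spell out well-formedness, the effect on $|\unit|$, and the handy-case enumeration argument for a unit of size $|\unit|+1$.
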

\begin{proof}
      Let $\iota = (\kb,\unit,\tau,\tau')$ be an input for \textsc{gad{\scriptsize1}}. To show \textsc{gad{\scriptsize1}} {\scriptsize$ \leq$} $\textsc{ess}$ we can use the reduction $\iota \mapsto (\kb, \unit~\cup~\{\tau'\} ,\tau)$, please note that this reduction preserves the hypothesys of the problem, in other word the broad case reduces to the broad case,the  medium case reduces to the medium case and the handy case reduces to the handy case. The if and only if holds by definition. This, together with the upper bounds for $\textsc{ess}$, in the broad, medium and handy case respectively, shown in Theorem~\ref{thm:essfinal} ends the first part of what we wanted to show. Let now $\iota = (\kb,\unit,\tau,\tau')$ be an input for \textsc{gad{\scriptsize2}}. To show \textsc{gad{\scriptsize2}} {\scriptsize$ \leq$} $\textsc{ess}$ we can simply use the reduction $\iota \mapsto (\kb, \unit~\cup~\{\tau\} ,\tau')$ and then use a similar argument as before.
\end{proof} 

%We are not concerned with studying the possible completeness of these gadgets with respect to the computational classes they belong to since their only use in the following is aimed at giving upper bounds for the decision problems of our real interest.
%
%We can now complete our computational trip.

\section{Dealing with SIM}\label{sec:SIM}

The idea behind the \textsc{sim} decision problem is to be able to know whether two tuples, both different and not contained in the input unit, share the same nexus of similarity with the existing unit.
\begin{theorem}\label{thm:sim}
\textsc{sim} is $\mathsf{NEXP}$-complete, $\mathsf{NP}$-complete and in $\mathsf{P}$ in the broad, medium and handy case, respectively. In particular, for each $k>0$, LBs hold already for \textsc{sim}$_k$.
\end{theorem}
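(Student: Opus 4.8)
The plan is to route everything through the analysis of \textsc{ess} already developed. By Proposition~\ref{taskViaEssEasy}, $\tau \sim_{^\unit}^{_\kb} \tau'$ holds exactly when $\tau \in \mathit{ess}(\unit \cup \{\tau'\}, \kb)$ and $\tau' \in \mathit{ess}(\unit \cup \{\tau\}, \kb)$, i.e.\ exactly when the \textsc{gad{\scriptsize1}} and \textsc{gad{\scriptsize2}} instances built from $(\kb,\unit,\tau,\tau')$ are both positive. For the upper bounds I would therefore just combine Proposition~\ref{prop:gad} with the fact that $\mathsf{NEXP}$, $\mathsf{NP}$ and $\mathsf{P}$ are closed under intersection: a machine for \textsc{sim} performs (or guesses-and-verifies) the two sub-checks and accepts iff both accept, placing \textsc{sim} in $\mathsf{NEXP}$, $\mathsf{NP}$ and $\mathsf{P}$ in the broad, medium and handy case respectively; in the handy case this is literally a polynomial-time procedure.

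For the lower bounds the idea is to take the hardness constructions already produced for \textsc{ess} in the proof of Theorem~\ref{thm:essfinal}, split the target tuple off as one of the two extra tuples, and pick the second extra tuple so that its \textsc{ess}-membership holds unconditionally; then \textsc{sim} becomes positive iff the original (hard) \textsc{ess} instance is. In the broad case I would keep the SKB $\kb$ obtained from a \textsc{php-sb} input $\mathbf{s} = I_1,\dots,I_{m+1}$, but now take $\unit^{*} = \{\langle\mathsf{a}_1\rangle,\dots,\langle\mathsf{a}_m\rangle\}^{k}$, $\tau = \langle\mathsf{a}_{m+1}\rangle^{k}$ and $\tau' = \langle\mathsf{a}_{m+2}\rangle^{k}$: since $\unit^{*}\cup\{\tau'\}$ is precisely the unit of the original reduction, $\tau \in \mathit{ess}(\unit^{*}\cup\{\tau'\},\kb)$ iff $\mathbf{s}$ is a yes-instance, whereas $\tau' \in \mathit{ess}(\unit^{*}\cup\{\tau\},\kb)$ holds always, because the self-loop $\mathsf{br}(\mathsf{a}_{m+2},\mathsf{a}_{m+2})$ together with the $\top$ and $\mathsf{twin}_s$ atoms attached to $\mathsf{a}_{m+2}$ in $D$ let one collapse every non-free variable of $\mathit{can}(\unit^{*}\cup\{\tau\},\kb)$ onto $\mathsf{a}_{m+2}$ (and the $s$-th free variable onto $\mathsf{a}_{m+2}^{s}$). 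In the medium case I would do the analogue starting from the \textsc{{\footnotesize3}-col} reduction: keep its SKB, set $\unit^{*} = \{\langle\mathsf{a}_1\rangle\}^{k}$, $\tau = \langle\mathsf{b}_1\rangle^{k}$ and $\tau' = \langle\mathsf{a}_2\rangle^{k}$; then $\unit^{*}\cup\{\tau'\} = \{\langle\mathsf{a}_1\rangle,\langle\mathsf{a}_2\rangle\}^{k}$ is again the original \textsc{ess} unit with target $\langle\mathsf{b}_1\rangle^{k}$, while $\langle\mathsf{a}_2\rangle^{k}\in\mathit{ess}(\{\langle\mathsf{a}_1\rangle,\langle\mathsf{b}_1\rangle\}^{k},\kb)$ holds always, since the connected component of $(\mathsf{a}_1,\mathsf{b}_1)$ in the relevant product $G^{a_1}\otimes K_4$ admits a homomorphism onto the isomorphic copy $G^{a_2}$ of $G^{a_1}$ sending $(\mathsf{a}_1,\mathsf{b}_1)$ to $\mathsf{a}_2$ (compose the first projection with the isomorphism $G^{a_1}\simeq G^{a_2}$). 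All the auxiliary gadgets ($I_{m+2}$, $K_4$, the twins) already live in those SKBs, so the summary selectors are unchanged (still polynomial-time) and the reductions stay within the broad and medium regimes; the units keep the prescribed fixed arity $k$, giving the claimed lower bounds for $\textsc{sim}_k$.

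The step I expect to be delicate is the ``unconditional'' direction: one must check carefully, for every input, that $\tau' \in \mathit{ess}(\unit^{*}\cup\{\tau\},\kb)$, i.e.\ that the required $\C$-homomorphism from $\mathit{atm}(\mathit{can}(\unit^{*}\cup\{\tau\},\kb))$ into the summary of $\tau'$ always exists. This is where the self-loop argument (broad) and the projection-plus-isomorphism argument (medium) have to be made precise, including the bookkeeping for the $\mathsf{twin}_s$ and $\top$ atoms created by lifting the unit to arity $k$ and the fact that $\mathit{can}(\unit^{*}\cup\{\tau\},\kb)$ is constant-free in these constructions. Everything else --- that $\unit^{*}\cup\{\tau'\}$ reproduces the earlier \textsc{ess} instance, that membership in $\mathit{ess}$ can be tested by evaluating the canonical characterization over the (fixed) summary, and the closure-under-intersection argument for the upper bounds --- is routine given Theorem~\ref{thm:essfinal} and Propositions~\ref{taskViaEssEasy} and~\ref{prop:gad}.
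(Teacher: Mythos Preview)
Your upper-bound argument coincides with the paper's: an input is accepted for \textsc{sim} iff it is accepted for both \textsc{gad{\scriptsize1}} and \textsc{gad{\scriptsize2}}, and the relevant classes are closed under intersection.

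For the lower bounds your route is correct but genuinely different. The paper gives a \emph{black-box} reduction $\textsc{ess}^{\emptyset}_1 \leq \textsc{sim}_k$: starting from an arbitrary $(\kb,\unit,\tau)$ it manufactures a new SKB $\kb'$ via the $\mathit{double}(D,c)$/$\mathsf{alias}$ construction, so that $\langle\mathsf{alias}\rangle^k$ is always in the essential expansion because its summary is arranged to coincide with that of the existing unit element $\langle c\rangle$. You instead keep the SKBs already built in Theorem~\ref{thm:essfinal} and \emph{peel off} one element of the original \textsc{ess} unit to serve as the ``unconditional'' tuple $\tau'$ (namely $\langle\mathsf{a}_{m+2}\rangle^k$ in the broad case and $\langle\mathsf{a}_2\rangle^k$ in the medium case). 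This white-box approach is more economical---no new SKB, no need to re-check that the modified selector is still polynomial-time and well-defined---but its correctness depends on the specific disjointness built into those constructions: the pairwise disjoint domains of $I_1,\dots,I_{m+2}$ (resp.\ of $G^{a_1}$, $G^{a_2}$, $K_4$) are precisely what forces the connected component of the free variables in $\mathit{can}(\unit^{*}\cup\{\tau\},\kb)$ to avoid all terms $d_{\mathbf s}$ with $|\Dom_{\mathbf s}|=1$, hence to be constant-free, so that the self-loop collapse (resp.\ projection-plus-isomorphism) really is a $\C$-homomorphism. You correctly flag this constant-freeness as the delicate step. The trade-off is modularity: the paper's $\mathit{double}$/$\mathsf{alias}$ gadget is reused verbatim for the \textsc{inc} lower bound, whereas your argument would need a separate analysis there.
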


%\vspace{-3mm}

\begin{proof}
For what concerns the upper bounds, it is easy to notice that an input for $\textsc{sim}$ is $\mathsf{accepted}$ if, and only if, it is $\mathsf{accepted}$ both for \textsc{gad{\scriptsize1}} and \textsc{gad{\scriptsize2}}. Hence, let us face the lower bounds for this problem. The following reduction is constructed in such a way as to preserve our assumptions. We will therefore give a single construction that will vary only for the nature of its input. As is natural, the broad case will be reduced starting from the broad case of the starting problem, while the medium case from the medium case of the starting problem.
 Let $\textsc{ess}^{\emptyset}_1$ be the decision problem \textsc{ess} with the further hypothesis that the input unit $\unit$ is unary, its cardinality is bigger than one, and the input ontology $O$ is equal to the empty set. Then it holds that $\textsc{ess}^{\emptyset}_1 \leq \textsc{sim}_k$ via the map $(\kb, \unit, \tau) \mapsto (\kb', \unit^k, \tau^k, \langle \mathsf{alias}\rangle^k)$, where:
\nop{\begin{itemize}
    \item $\kb$ $=$ $(K,\varsigma)$,
     \item $K=(\db,O)$,
      \item $\unit = \{\tau_1,...,\tau_m\}$,
      \item $m>1$,
      \item $\tau_1$ is of the form $\langle c \rangle$,
     \item $\kb'$ $=$ $(K',\varsigma')$,
      \item $K'=(D',O')$,
      \item $D' = \bar{D} \cup \mathit{tw}(\Dom_{\bar{D}},k)$,
     \item $\bar{D} = \mathit{double}(D,c)$,
      \item $O=\emptyset$,
      \item $O'=\emptyset$,
       \item selector $\varsigma'$ is such that $\varsigma'(K',\tau') = \varsigma(K,\tau_0) \cup \mathit{tw}(\Dom_{\{\tau_0\}},k)$,
      \item $\tau' \in \Dom_{D'}^k$
       \item $\tau_0 = \mathit{off}(\tau',c)$.
\end{itemize}}
          \(\kb = (K, \varsigma)\), with \(K = (\db, O)\), \(\unit = \{\tau_1, \dots, \tau_m\}\) for some \(m > 1\), and \(\tau_1\) of the form \(\langle c \rangle\). Let \(\kb' = (K', \varsigma')\), with \(K' = (D', O')\), where \(D' = \bar{D} \cup \mathit{tw}(\Dom_{\bar{D}}, k)\), \(\bar{D} = \mathit{double}(D, c)\), and \(O = O' = \emptyset\). The selector \(\varsigma'\) satisfies \(\varsigma'(K', \tau') = \varsigma(K, \tau_0) \cup \mathit{tw}(\Dom_{\{\tau_0\}}, k)\), where \(\tau' \in \Dom_{D'}^k\) and \(\tau_0 = \mathit{off}(\tau', c)\).   
First, let us notice that in our context, given a characterization for $\unit$, call it $\varphi$, then we can easily construct a characterization for $\unit^k$ of the form $$x_1,...,x_k\leftarrow \bigwedge_{s\in[2... k]}\top(x_s),\mathsf{twin}_s(x_1,x_s) \wedge\bigwedge_{\alpha \in \mathit{atm}(\varphi)}\alpha.$$ Vice versa, say that you have a characterization for $\unit^k$, then by renaming every occurrence of the free variables other than the first one, we will end up with a characterization for $\unit$. Having asserted this, we can equivalently solve the problem by concentrating on the unary case.

The first thing the reader has to notice is that $\langle\mathsf{alias}\rangle\in\mathit{ess}(\unit,\kb')$ by construction. In fact, say that $h$ is a $\mathcal{C}$-homomorphism from $\mathit{atm}(\mathit{can}(\mathit{ess}(\unit,\kb)))$ to $\varsigma'(\db',\langle c \rangle)$ such that $x_1 \mapsto c$. Please note that this $\mathcal{C}$-homomorphism must exist by hypotheses. Then we can construct another $\mathcal{C}$-homomorphism from $\mathit{atm}(\mathit{can}(\mathit{ess}(\unit,\kb)))$ to $\varsigma'(\db',\langle \mathsf{alias} \rangle)$, call it $\tilde{h}$, such that $x_1 \mapsto \mathsf{alias}$ as follows: $\tilde{h}(y) = h(y)$ if $y \neq x_1$ and $\tilde{h}(x_1) = \mathsf{alias}$. That being said, the problem collapses to whether or not $\tau\in\mathit{ess}(\unit,\kb)$, and if that is the case, then we will end up with a yes answer since the same $\mathcal{C}$-homomorphism that was a witness for $\tau$ to be in $\mathit{ess}(\unit,\kb)$ is still a witness for $\tau$ to be in $\mathit{ess}(\unit,\kb')$ and vice versa. Having a $\mathcal{C}$-homomorphism that is a witness for $\tau$ to be in $\mathit{ess}(\unit,\kb')$ leads us to a witness for $\tau$ to be in $\mathit{ess}(\unit,\kb)$.

What remains to be shown is that the new summary selector can be built without making complexity leaps and that its execution time turns out to be polynomial with respect to the input received. This is evident, as we only have to do a linear scan of the input that allows us to reconstruct the initial dataset and build the new tuple on which to then relaunch the old summary selector. This second part will, by hypothesis, run in polynomial time, and finally, we will add the other elements always doing a linear scan of the input.
\end{proof}

An affirmative answer to this decision problem tells us that, from the point of view of the information contained in the unit, there is no way to distinguish the two tuples. In other words, if we ever encounter one of the two along our path of expansion surely, at the same moment, we will also encounter the other.
\section{Dealing with PREC}\label{sec:PREC}
The idea behind the \textsc{prec} decision problem is to be able to know whether two tuples, both different and not contained in the input unit, share different nexus of similarity with the given unit, with the further hypothesis that in particular the first tuple shares higher nexus of similarity with the unit.
\begin{theorem}\label{thm:prec}
\textsc{prec} is $\mathsf{DEXP}$-complete, $\mathsf{DP}$-complete and in $\mathsf{P}$ in the broad, medium and handy case, respectively. In particular, for each $k>1$, LBs hold already for \textsc{prec}$_k$.
\end{theorem}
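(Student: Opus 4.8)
The plan is to reduce \textsc{prec} to the two gadget problems \textsc{gad{\scriptsize1}} and \textsc{gad{\scriptsize2}} of Section~\ref{sec:ess}. By Proposition~\ref{taskViaEssEasy}, $\tau \prec_{^\unit}^{_\kb} \tau'$ holds precisely when $\tau \in \mathit{ess}(\unit \cup \{\tau'\},\kb)$ and $\tau' \notin \mathit{ess}(\unit \cup \{\tau\},\kb)$; equivalently, a quadruple $(\kb,\unit,\tau,\tau')$ is a yes-instance of \textsc{prec} iff it is a yes-instance of \textsc{gad{\scriptsize1}} and a no-instance of \textsc{gad{\scriptsize2}}. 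Thus \textsc{prec} is, as a language, the intersection of \textsc{gad{\scriptsize1}} with the complement of \textsc{gad{\scriptsize2}}. By Proposition~\ref{prop:gad}, both gadget problems lie in $\mathsf{NEXP}$, $\mathsf{NP}$, and $\mathsf{P}$ in the broad, medium, and handy cases, so the complement of \textsc{gad{\scriptsize2}} lies in $\mathsf{coNEXP}$, $\mathsf{coNP}$, and $\mathsf{P}$, respectively. Since $\mathsf{DEXP}$ (resp.\ $\mathsf{DP}$) is exactly the closure of $\mathsf{NEXP}\cup\mathsf{coNEXP}$ (resp.\ $\mathsf{NP}\cup\mathsf{coNP}$) under intersection, and $\mathsf{P}$ is closed under complement and intersection, this yields $\textsc{prec}\in\mathsf{DEXP}$ in the broad case, $\textsc{prec}\in\mathsf{DP}$ in the medium case, and $\textsc{prec}\in\mathsf{P}$ in the handy case, settling the handy case entirely.

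For the lower bounds I would reduce from the canonical ``conjunctive'' complete problems built from the same hardness sources already used for \textsc{ess} in Theorem~\ref{thm:essfinal}. In the broad case the source asks, given two sequences $\mathbf{s}_1,\mathbf{s}_2$, whether $\mathbf{s}_1$ is a yes-instance and $\mathbf{s}_2$ a no-instance of \textsc{php-sb}; this is $\mathsf{DEXP}$-complete because \textsc{php-sb} is $\mathsf{NEXP}$-complete. In the medium case the source asks, given two graphs $G_1,G_2$, whether $G_1$ is $3$-colorable and $G_2$ is not; this is $\mathsf{DP}$-complete because $3$\textsc{-col} is $\mathsf{NP}$-complete. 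In both settings the construction glues two independent copies of the corresponding \textsc{ess}-hardness gadget into a single SKB $\kb$ over disjoint predicate and constant alphabets --- one ``channel'' per sub-instance --- sets $\unit = \unit_1 \times \unit_2$ (the two domains being disjoint, the Cartesian product of Section~\ref{sec:ess} is well defined), and takes as query tuples $\sigma = \tau_1 \circ \tau_0'$ and $\sigma' = \tau_0 \circ \tau_2$, where $\tau_1,\tau_2$ are the query tuples of the two sub-instances and $\tau_0,\tau_0'$ are $\mathsf{alias}$-valued tuples on the respective channels. As in the proof of Theorem~\ref{thm:sim}, the summary selector is defined via the $\mathit{double}/\mathit{off}/\mathit{tw}$ operators so that on each channel an $\mathsf{alias}$-valued tuple automatically belongs to the relevant essential expansion.

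The correctness argument exploits channel independence: since the two channels use disjoint predicates, they never interact under direct products, so $\mathit{core}(\unit\cup\{\sigma'\},\kb)$ splits into a channel-$1$ component governed entirely by the first sub-instance and a channel-$2$ component whose instances $\sigma$ automatically satisfies through its $\mathsf{alias}$ coordinates. Hence $\sigma\in\mathit{ess}(\unit\cup\{\sigma'\},\kb)$ iff the first sub-instance is a yes-instance, and symmetrically $\sigma'\in\mathit{ess}(\unit\cup\{\sigma\},\kb)$ iff the second is. By Proposition~\ref{taskViaEssEasy}, $\sigma \prec_{^\unit}^{_\kb} \sigma'$ iff the pair is a yes-instance of the source problem, giving $\mathsf{DEXP}$-hardness (broad) and $\mathsf{DP}$-hardness (medium); together with the membership bounds above this yields $\mathsf{DEXP}$- and $\mathsf{DP}$-completeness. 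The combined unit has arity $k_1+k_2\geq 2$, which is the reason the lower bounds are stated for $\textsc{prec}$$_k$ with $k>1$ rather than $k>0$; the usual $\unit\mapsto\unit^k$ lifting, already used for \textsc{ess} and \textsc{sim}, then extends them to every fixed $k>1$.

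The main obstacle will be making the channel-independence claim fully rigorous: one must verify that the direct-product construction underlying $\mathit{can}$ and $\mathit{core}$ genuinely splits along the two disjoint-predicate channels (so that the \textsc{gad{\scriptsize1}} membership test is blind to the second sub-instance and vice versa), that the free variables and the auxiliary $\top$- and $\mathsf{twin}_s$-atoms are routed to the correct channel, and that the ``$\mathsf{alias}$-tuples are always essential'' property survives the two-channel setting. One also has to check that the combined SKB still respects the governing assumption --- in particular that $\varsigma$ stays polynomial-time computable and, in the medium case, that $|\unit| = |\unit_1|\cdot|\unit_2|$ remains bounded --- and that $\sigma\neq\sigma'$ with $\{\sigma\}$ and $\{\sigma'\}$ disjoint from $\unit$, as the task requires. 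Once this channel-decomposition lemma is in place, correctness follows the same template as the \textsc{ess} and \textsc{sim} proofs.
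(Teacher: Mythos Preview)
Your upper-bound argument is identical to the paper's. For the lower bounds, your overall architecture---two disjoint ``channels'' combined via $\unit=\unit_1\times\unit_2$, with separability of characterizations giving the iff---is exactly what the paper does, and the paper makes the channel-independence step rigorous through a notion of \emph{separable} formula (formulas whose body splits into two domain-disjoint $\WCQ$ subformulas) together with a lemma that, under disjoint summaries, a separable $\varphi$ characterizes $\unit$ iff its two components characterize $\unit_1$ and $\unit_2$.

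The one point where you diverge is the choice of ``dummy'' coordinates. You propose alias-valued tuples $\tau_0,\tau_0'$ and invoke the $\mathit{double}/\mathit{off}$ machinery from the \textsc{sim} proof so that the alias component is always essential. The paper instead sets $\tau'=\tau_1\circ\tau_{2,1}$ and $\tau''=\tau_{1,1}\circ\tau_2$, using the \emph{first tuple of each sub-unit} as the dummy. This is strictly simpler: since $\tau_{i,1}\in\unit_i$, the dummy coordinate lands in the relevant essential expansion for free, with no need to modify the databases or selectors beyond taking disjoint unions and prefixing symbols. Your alias-based variant should also work, but it forces you to verify that the $\mathit{double}$ construction interacts correctly with the two-channel product (and with two distinct alias constants), which is exactly the ``main obstacle'' you flag; the paper sidesteps this entirely.
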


%\vspace{-3mm}

\begin{proof}
Before proceeding with the proof, it is helpful to introduce the following definition and establish several key properties that will play a crucial role in the argument. Let ${\bf x}_1$ and ${\bf x}_2$ be the sequences $x_1,\dots,x_{n_1}$ and $x_1',\dots,x_{n_2}'$ respectively. A formula $\varphi\in\WCQ$ is said \emph{separable} if the following two conditions apply: the formula is of the form $ {\bf x}_1, {\bf x}_2 \leftarrow \mathit{conj}(\varphi_1),\mathit{conj}(\varphi_2)$ where $\mathit{dom}(\mathit{atm}(\varphi_1))\cap \mathit{dom}(\mathit{atm}(\varphi_2))=\emptyset$ and both $ {\bf x}_1 \leftarrow \mathit{conj}(\varphi_1)$ and $ {\bf x}_2 \leftarrow \mathit{conj}(\varphi_2)$ are in $\WCQ$.
Consider a separable formula $\varphi$ of the form $ {\bf x}_1, {\bf x}_2 \leftarrow \mathit{conj}(\varphi_1),\mathit{conj}(\varphi_2)$, a $|{\bf x}_1|$-ary tuple $\langle {\bf c}_1 \rangle$, a $|{\bf x}_2|$-ary tuple $\langle {\bf c}_2 \rangle$, and a selective knowledge base $\kb$. If $\varsigma(\langle {\bf c}_1, {\bf c}_2 \rangle)=\varsigma(\langle {\bf c}_1\rangle)~\cup~\varsigma(\langle {\bf c}_2\rangle)$, and $\varsigma(\langle {\bf c}_1 \rangle)\cap\varsigma(\langle {\bf c}_2 \rangle)=\emptyset$, then it holds that $\langle {\bf c}_1, {\bf c}_2 \rangle \in \mathit{inst}(\varphi,\kb)$ if, and only if, $\langle {\bf c}_1 \rangle\in\mathit{inst}(\varphi_1,\kb)$ and $\langle {\bf c}_2\rangle\in\mathit{inst}(\varphi_2,\kb)$.
This observation follows directly from the construction and the assumption that $\varsigma(\langle {\bf c}_1, {\bf c}_2 \rangle)=\varsigma(\langle {\bf c}_1\rangle)~\cup~\varsigma(\langle {\bf c}_2\rangle)$, which enables us to reason independently on the two components. Specifically, we can define two distinct $\C$-homomorphisms—denoted $h_1$ and $h_2$—the first from $\mathit{atm}(\varphi_1)$ to $\varsigma(\langle {\bf c}_1 \rangle)$, and the second from $\mathit{atm}(\varphi_2)$ to $\varsigma(\langle {\bf c}_2 \rangle)$. The union of these mappings naturally induces a third $\C$-homomorphism from $\mathit{atm}(\varphi)$ to $\varsigma(\langle {\bf c}_1, {\bf c}_2 \rangle)$.
Call this next statement $(i)$, we will use it at the end of the proof.

Consider now a separable formula $\varphi$ of the following form $ {\bf x}_1, {\bf x}_2 \leftarrow \mathit{conj}(\varphi_1),\mathit{conj}(\varphi_2)$, a $(|{\bf x}_1|+|{\bf x}_2|)$-ary $\unit$, and a selective knowledge base $\kb$. Let $\unit_1$ be the $|{\bf x}_1|$-ary unit of the form $\{ \tau ~:~ \exists \tau' \ s.t \ \tau\circ\tau' \in \unit \}$ and $\unit_2$ be the $|{\bf x}_2|$-ary unit of the form $\{ \tau ~:~ \exists \tau' \ s.t \ \tau'\circ\tau \in \unit \}$. If $\forall \tau\in\unit$ we have $\varsigma(\tau)=\varsigma(\tau')~\cup~\varsigma(\tau'')$ and $\varsigma(\tau')\cap\varsigma(\tau'')=\emptyset$, where $\tau=\tau'\circ \tau''$ with $\tau'\in\unit_1$ and $\tau''\in\unit_2$, then it holds that $\varphi$ is a characterization for $\unit$ if, and only if, $\varphi_1$ is a characterization for $\unit_1$ and $\varphi_2$ is a characterization for $\unit_2$.

We can now delve into the proof. For what concerns the upper bounds, it is easy to notice that an input for ${\textsc{prec}}$ is {\em $\mathsf{accepted}$} if, and only if, it is {\em $\mathsf{accepted}$} both for \textsc{gad{\scriptsize1}} and $\neg$\textsc{gad{\scriptsize2}}. The above guarantees that \textsc{prec} is in $\mathsf{DEXP}$, $\mathsf{DP}$ and in $\mathsf{P}$ in the broad, medium and handy case, respectively. Let us now talk about lower bounds. The following reduction is constructed in such a way as to preserve our assumptions. We will therefore give a single construction that will vary only for the nature of its input; as is natural, the broad case will be reduced starting from the broad case of the starting problem, while the medium case from the medium case of the starting problem.

Let +\textsc{ess}$^{\emptyset}_1$ be the decision problem \textsc{ess} with the further hypothesis that the input unit $\unit$ is unary, its cardinality is bigger than one, and the input ontology $O$ is equal to the empty set. Let +\textsc{ess}$^{\emptyset}_{k-1}$ be the decision problem \textsc{ess} with the further hypothesis that the input unit $\unit$ is $(k-1)$-ary, its cardinality is bigger than one, and the input ontology $O$ is equal to the empty set.

For each $L\in \mathsf{DEXP}$ (resp., $\mathsf{DP}$), $L \leq$ \textsc{prec}$_k$ in the broad (resp., medium) case, via the mapping $w \mapsto (\kb, \unit, \tau', \tau'')$, where $w$ is any string over the alphabet of $L$, $L = L_1 \cap L_2$, $L_1 \in \mathsf{NEXP}$ (resp., $\mathsf{NP}$), $L_2 \in \mathsf{coNEXP}$ (resp., $\mathsf{coNP}$), $\rho_1$ is a reduction from $L_1$ to +\textsc{ess}$^{\emptyset}_1$ in the broad (resp., medium) case, $\rho_2$ is a reduction from $L_2$ to $\neg$+\textsc{ess}$^{\emptyset}_{k-1}$ in the broad (resp., medium) case, each $\rho_i(w) = (\kb_i, \unit_i, \tau_i)$, each $\kb_i = (K_i,\varsigma_i)$, each $K_i = (\db_i,\emptyset)$, without loss of generality each constant and predicate different from $\top$ in $\db_1$ (resp., $\db_2$) has $a$- (resp., $b$-) as a prefix, each $\unit_i = \{\tau_{i,1},...,\tau_{i,m_i}\}$, $\kb = (K,\varsigma)$, $K = (\db,\emptyset)$, $\db = \db_1 \cup \db_2$, $\varsigma$ is such that $\varsigma(K,\tau)=\varsigma_1(K_1,\tau|_{\Dom(\db_1)})~\cup~ \varsigma_2(K_2,\tau|_{\Dom(\db_2)})$, $\unit = \unit_1 \times \unit_2$, $\tau' = \tau_1 \circ \tau_{2,1}$, $\tau'' = \tau_{1,1} \circ \tau_2$, each $\tau|_{\Dom(\db_i)}$ is obtained from $\tau$ by removing the terms not in $\Dom(\db_i)$.

The core idea behind the construction is that the final knowledge base we build is bipartite in two distinct respects. First, the bipartition is evident at the domain level, but more crucially, the predicate $\top$ is the only symbol that constants coming from the two separate reduction actually shares. This means that constants generated by the first reduction are entirely distinct—not only from those produced by the second reduction, but also in terms of the predicates they occur with. 
This structural separation implies that any explanation derived from the construction will be necessarily separable, in the sense previously defined.
From this observation, the reasoning proceeds directly from point $(i)$. What remains to be shown is that the new summary selector can be implemented efficiently, without increasing computational complexity. More precisely, we need to ensure that its runtime remains polynomial relative to the size of the input. Fortunately, this selector amounts to executing two separate procedures, both known—by assumption—to run in polynomial time. Additionally, we can reconstruct both $\db_1$ and $\db_2$ with a simple linear scan of the input.
\end{proof}

An affirmative answer to this decision problem tells us that somehow the first tuple ``must come before'' the other in every possible expansion that we are going to follow, in other words there is no way to encounter the second tuple in any path of expansion if we have not found the other first and/or at the same time.

\section{Dealing with INC}\label{sec:INCS}

The idea behind the \textsc{inc} decision problem is to be able to know whether two tuples, both different and not contained in the input unit, share incomparable nexus of similarity with the existing unit.
\begin{theorem}\label{thm:inc}
\textsc{inc} is $\mathsf{coNEXP}$-complete, $\mathsf{coNP}$-complete and in $\mathsf{P}$ in the broad, medium and handy case, respectively. In particular, for each $k>0$, LBs hold already for \textsc{inc}$_k$.
\end{theorem}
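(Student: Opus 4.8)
The plan is to mirror the two-step structure used for \textsc{sim} and \textsc{prec}: first obtain the upper bounds by expressing \textsc{inc} through the gadget problems, and then establish matching lower bounds by a reduction from the complement of \textsc{ess}.

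For the upper bounds, observe that by Proposition~\ref{taskViaEssEasy}, on any admissible quadruple $(\kb,\unit,\tau,\tau')$ one has $\tau \parallel_{^\unit}^{_\kb} \tau'$ if, and only if, the quadruple is a no-instance of both \textsc{gad{\scriptsize1}} and \textsc{gad{\scriptsize2}}; in other words, \textsc{inc} coincides with the intersection of the complements of \textsc{gad{\scriptsize1}} and \textsc{gad{\scriptsize2}}. By Proposition~\ref{prop:gad}, \textsc{gad{\scriptsize1}} and \textsc{gad{\scriptsize2}} lie in $\mathsf{NEXP}$, $\mathsf{NP}$, and $\mathsf{P}$ in the broad, medium, and handy case, respectively, hence their complements lie in $\mathsf{coNEXP}$, $\mathsf{coNP}$, and $\mathsf{P}$; since each of $\mathsf{coNEXP}$, $\mathsf{coNP}$, and $\mathsf{P}$ is closed under intersection, \textsc{inc} belongs to the corresponding class in each of the three cases, which already settles the handy case completely.

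For the lower bounds I would reduce from the complement of the restricted problem \textsc{ess}$^{\emptyset}_1$ --- \textsc{ess} with a unary input unit of cardinality greater than one and an empty ontology --- which by Theorem~\ref{thm:essfinal} is $\mathsf{coNEXP}$-hard in the broad case and $\mathsf{coNP}$-hard in the medium case, the restrictions being preserved along the reduction. The construction is a ``mirrored'' variant of the bipartite, separable construction of Theorem~\ref{thm:prec}, in which both ``halves'' encode the \emph{same} \textsc{ess} instance; moreover, the separation between the two halves is realized at the level of the summary selector (through a fresh linking predicate) rather than at the level of the arity, so that, unlike \textsc{prec}, the reduction works uniformly for every $k>0$, exactly as the doubling technique does in the proof of Theorem~\ref{thm:sim}. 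Concretely, from an input $(\kb,\unit,\tau)$ I would take two copies of the instance over pairwise disjoint constants and pairwise disjoint predicates (sharing only $\top$), glue them through a fresh binary predicate $\mathsf{link}$, define a unit $\unit'$ over the combined SKB $\kb'$ whose canonical (hence core) characterization is separable --- with its free variable attached on one side to the first copy and, via $\mathsf{link}$, to the second copy --- and choose two fresh tuples $\tau_1$ and $\tau_2$ whose summaries make each of them ``probe'' the query $\tau$ on one copy while being unconditionally covered on the other (the latter because the essential expansion of any unit always contains that unit, so a component that is itself a unit element is automatically in the essential expansion). Using the separability observations recorded in the proof of Theorem~\ref{thm:prec}, one then shows that $\tau_1 \in \mathit{ess}(\unit'\cup\{\tau_2\},\kb')$ if, and only if, $\tau \in \mathit{ess}(\unit,\kb)$, and symmetrically $\tau_2 \in \mathit{ess}(\unit'\cup\{\tau_1\},\kb')$ if, and only if, $\tau \in \mathit{ess}(\unit,\kb)$; therefore, by Proposition~\ref{taskViaEssEasy}, $\tau_1 \parallel_{^{\unit'}}^{_{\kb'}} \tau_2$ if, and only if, $\tau \notin \mathit{ess}(\unit,\kb)$. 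As in the proofs of Theorems~\ref{thm:sim} and~\ref{thm:prec}, one finally checks that $\varsigma'$ is computable in $F\mathsf{P}$, so no complexity jump occurs; combined with the upper bounds, this yields $\mathsf{coNEXP}$-completeness in the broad case and $\mathsf{coNP}$-completeness in the medium case, with lower bounds holding already for \textsc{inc}$_k$ with $k>0$.

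The main obstacle is the correctness of this reduction --- the separability bookkeeping. One must verify that the canonical characterization of $\unit'$, and, crucially, of $\unit'$ extended by $\tau_1$ or by $\tau_2$, decomposes exactly into the characterizations of the two copies joined through $\mathsf{link}$, so that membership in the essential expansion over $\kb'$ becomes precisely the conjunction of the two per-copy essential-expansion conditions. This is where the requirement that the two copies share no predicate other than $\top$, together with the controlled use of $\mathsf{link}$ and the careful design of $\varsigma'$, does the real work, and it is the same phenomenon that makes the separable lemmas of the \textsc{prec} proof applicable here. A secondary point to get right is that $\tau_1$ and $\tau_2$ are genuinely admissible (distinct from each other and disjoint from $\unit'$) and that every summary produced by $\varsigma'$ is $\top$-closed and contains the constants of its tuple, as demanded by the definition of a summary selector.
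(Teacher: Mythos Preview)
Your upper-bound argument is exactly the paper's: \textsc{inc} is $\neg\textsc{gad{\scriptsize1}}\cap\neg\textsc{gad{\scriptsize2}}$, and Proposition~\ref{prop:gad} together with closure under intersection yields the claimed memberships.

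For the lower bounds you take a genuinely different route. The paper does \emph{not} mirror the separable two-copy construction of Theorem~\ref{thm:prec}; instead it extends the \textsc{sim} reduction by adding the reserved predicate $\mathsf{focus}$. Concretely, from $(\kb,\unit,\tau)$ with $\tau_1=\langle c\rangle$ it builds $\kb'$ by adjoining $\mathit{fc}(\Dom_D)=\{\mathsf{focus}(d):d\in\Dom_D\}$ to every summary, and maps the instance to $(\kb',\unit^k,\tau^k,\langle\mathsf{alias}\rangle^k)$. Since $\mathsf{focus}(\mathsf{alias})$ never occurs, the formula $x\leftarrow\mathsf{focus}(x)$ explains $\unit\cup\{\tau\}$ but not $\langle\mathsf{alias}\rangle$, so $\langle\mathsf{alias}\rangle\notin\mathit{ess}(\unit\cup\{\tau\},\kb')$ \emph{unconditionally}. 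On the other side, because $\mathit{off}(\langle\mathsf{alias}\rangle,c)=\langle c\rangle=\tau_1$, the summary of $\langle\mathsf{alias}\rangle$ coincides (modulo $\mathsf{focus}$) with that of $\tau_1$, so $\tau\in\mathit{ess}(\unit\cup\{\langle\mathsf{alias}\rangle\},\kb')$ iff $\tau\in\mathit{ess}(\unit,\kb)$. Hence $\textsc{ess}^{\emptyset}_1\leq\neg\textsc{inc}_k$ directly, with no separability bookkeeping at all.

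Your plan is plausible, but be aware that the separability statement proved inside Theorem~\ref{thm:prec} is formulated for formulas with \emph{two disjoint blocks of free variables}; in your construction there is a single free variable tied to both halves through $\mathsf{link}$, so that lemma does not apply verbatim. You would need to prove a one-free-variable variant (decomposing homomorphisms along the disjoint predicate signatures once $x$ is fixed), and you would also need to spell out $\tau_1,\tau_2$ and their summaries so that each is ``unconditionally covered'' on one side without accidentally coinciding with an element of $\unit'$. None of this is hard, but it is precisely the obstacle you flag, and the paper's $\mathsf{focus}$ trick sidesteps it entirely while still giving the lower bound for every $k>0$.
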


%\vspace{-3mm}

\begin{proof}
For what concerns the upper bounds, it is easy to notice that an input for ${\textsc{inc}}$ is {\em $\mathsf{accepted}$} if, and only if, it is {\em $\mathsf{accepted}$} both for $\neg$\textsc{gad{\scriptsize1}} and $\neg$\textsc{gad{\scriptsize2}}. Let us then turn to the lower bounds. The following reduction is constructed in such a way as to preserve our assumptions. We will therefore give a single construction that will vary only for the nature of its input, as is natural, the broad case will be reduced starting from the broad case of the starting problem, while the medium case from the medium case of the starting problem.
 Let +\textsc{ess}$^{\emptyset}_1$ be the decision problem \textsc{ess} with the further hypothesis that the input unit $\unit$ is unary, its cardinality is bigger than one, and the input ontology $O$ is equal to the empty set, then it holds that +\textsc{ess}$^{\emptyset}_1$ $\leq$ $\neg$\textsc{inc}$_k$ via the following map: $(\kb, \unit, \tau) \mapsto (\kb', \unit^k, \tau^k, \langle \mathsf{alias}\rangle^k)$, where :
\nop{\begin{itemize}
    \item $\kb$ $=$ $(K,\varsigma)$,
    \item 
    $K=(D,O)$,
       \item 
     $O=\emptyset$,
      \item
    $\unit = \{\tau_1,...,\tau_m\}$,
    \item 
      $m>1$,
       \item
      $\tau_1$ is of the form $\langle c \rangle$, 
      \item $\kb'$ $=$ $(K',\varsigma')$, 
      \item $K'=(D',O')$, \item $\db'$ $=$ $D$ $~\cup~$ $\mathit{tw}(\Dom_{\bar{D}},k)$ $~\cup~$ $\mathit{fc}(\Dom_\db)$, 
      \item $\bar{D} = \mathit{double}(D,c)$, 
      \item $O'=\emptyset$, 
      \item $\varsigma'$ is such that $\varsigma'(D',\tau')$ $=$ $\varsigma(K,\tau_0)$ $~\cup~$ $\mathit{tw}(\Dom_{\{\tau_0\}},c)$ $~\cup~$ $\mathit{fc}(\Dom_\db)$, 
      \item $\tau' \in \Dom_{D'}^k$ 
      \item $\tau_0$ $=$ $\mathit{off}(\tau',c)$.
     
\end{itemize} }
 \(\kb = (K, \varsigma)\) with \(K = (D, O)\), \(O = \emptyset\), and \(\unit = \{\tau_1, \dots, \tau_m\}\) for some \(m > 1\), where \(\tau_1\) is of the form \(\langle c \rangle\). Let \(\kb' = (K', \varsigma')\) with \(K' = (D', O')\), \(O' = \emptyset\), and \(D' = D \cup \mathit{tw}(\Dom_{\bar{D}}, k) \cup \mathit{fc}(\Dom_D)\), where \(\bar{D} = \mathit{double}(D, c)\). The selector \(\varsigma'\) satisfies \(\varsigma'(D', \tau') = \varsigma(K, \tau_0) \cup \mathit{tw}(\Dom_{\{\tau_0\}}, c) \cup \mathit{fc}(\Dom_D)\), where \(\tau' \in \Dom_{D'}^k\) and \(\tau_0 = \mathit{off}(\tau', c)\).

First, let us notice that in our context, given a characterization for $\unit$, call it $\varphi$, then we can easily construct a characterization for $\unit^k$ of the form $$x_1,...,x_k\leftarrow \bigwedge_{s\in[2... k]}\top(x_s),\mathsf{twin}_s(x_1,x_s) \wedge\bigwedge_{\alpha \in \mathit{atm}(\varphi)}\alpha.$$ Vice versa, say that you have a characterization for $\unit^k$, then by renaming every occurrence of the free variables other than the first one, we will end up with a characterization for $\unit$. Having asserted this, we can equivalently solve the problem by concentrating on the unary case.

By construction, we know that $\langle \mathsf{alias} \rangle$ for sure is not in $\mathit{ess}(\unit,\kb')$ and this is because the formula $x\leftarrow\mathsf{focus}(x)$ explains every element in $\unit$ but does not explain $\langle \mathsf{alias} \rangle$. This lets us know that a characterization for $\unit$ will use that predicate and so it will not explain $\langle\mathsf{alias}\rangle$. That being said, the problem collapses to whether or not $\tau\in\mathit{ess}(\unit,\kb)$ or not and if that is the case then we will end up with a no answer since the same $\mathcal{C}$-homomorphism that was a witness for $\tau$ to be in $\mathit{ess}(\unit,\kb)$ can be enlarged to a witness for $\tau$ to be in $\mathit{ess}(\unit,\kb')$ and vice versa. Having a $\mathcal{C}$-homomorphism that is a witness for $\tau$ to be in $\mathit{ess}(\unit,\kb')$ leads us to a witness for $\tau$ to be in $\mathit{ess}(\unit,\kb)$.

What remains to be shown is that the new summary selector can be built without making complexity leaps and that its execution time turns out to be polynomial with respect to the input received. This is evident as we only have to do a linear scan of the input that allows us to reconstruct the initial dataset and build the new tuple on which to then relaunch the old summary selector, this second part will by hypothesis run in polynomial time, and finally, we will add the other elements always doing a linear scan of the input.
\end{proof}

An affirmative answer to this decision problem tells us that, from the point of view of the information contained in the unit, there is no way to find the two tuples at the same time following the same expansion path. In other words, if we ever meet one of the two along our expansion path, the probability of meeting the other decreases, this is because having met one of the two we were only interested in the nexus of similarity that it shared with that tuple, leaving the others behind.

\nop{\section{Dealing with NEXT}\label{sec:NEXT}
The basic idea behind the \textsc{next} decision problem is to find out whether a specific tuple, not contained in the input unit, shares as many nexus of similarity with the given unit as possible even though it is not in the essential expansion of the unit. In other words, a tuple is in the \textsc{next} if and only if it is among those that, excluding all the tuples that are in the essential expansion, share as much information as possible with the initial unit.

\begin{theorem}\label{thm:next}
\textsc{next} is $\mathsf{DEXP}$-complete, $\mathsf{DP}$-complete and in $\mathsf{P}$ in the broad, medium and handy case, respectively. In particular, for each $k>1$, LBs hold already for \textsc{next}$_k$.
\end{theorem}
\begin{proof}
As for upper bounds, it is easy to note that an input to ${\neg\textsc{next}}$, of the form $(\unit,\kb,\tau)$, is {\em $\mathsf{accepted}$} if, and only if, there exists a tuple $\tau'$ such that the instance $(\unit,\kb,\tau',\tau)$ is {\em $\mathsf{accepted}$} for both \textsc{gad{\scriptsize1}} and $\neg$\textsc{gad{\scriptsize2}}, moreover the istance $(\unit,\kb,\tau')$ should be {\em $\mathsf{accepted}$} for ${\neg\textsc{ess}}$, this last check can be done in $\mathsf{coNEXP}$ ($\mathsf{coNP}$ respectively) in the broad case (medium case respectively), according to Theorem~\ref{thm:essfinal}. The following reduction is constructed in such a way as to preserve our assumptions. We will then give a single construction that will vary only in the nature of its input; as is natural, the large case will be reduced from the large case of the starting problem, while the average case from the average case of the starting problem. 

For what concerns the lower bounds. Let $\mathsf{fc}_n$ and $\mathsf{unit}_n$ any two fresh $n$-ary predicates. Let also \textsc{prec}$_k$, where $k>1$ be the decision problem \textsc{prec} with the further hypothesis that the input unit $\unit$ is $k$-ary, its cardinality is bigger than one, and the input ontology $O$ is equal to the empty set, then it holds that \textsc{prec}$_k$ $\leq$ $\neg$\textsc{next}$_k$ via the following map: $(\kb, \unit, \tau, \tau') \mapsto (\kb', \unit, \tau')$, where :
\begin{itemize}
\item $\mathit{Tot}=\unit \cup \{\tau\} \cup \{\tau'\} $,
    \item $\kb$ $=$ $(K,\varsigma)$,
    \item 
    $K=(D,O)$,
       \item 
     $O=\emptyset$,
      \item
    $\unit = \{\tau_1,...,\tau_m\}$,
    \item 
      $m>1$,
      \item $\kb'$ $=$ $(K',\varsigma')$, 
      \item $K'=(D',O')$, \item $\db'$ $=$ $D$ $~\cup~$ $\{\mathsf{fc}_k(e)~:~\langle e \rangle \in \mathit{Tot}\}$ $~\cup~$ $\{\mathsf{unit}_k(e)~:~\langle e \rangle \in \unit\}$,  
      \item $O'=\emptyset$, 
      \item $\varsigma'$ is such that $\varsigma'(D',\tau'')$ $=$ $\begin{cases}
          \varsigma(K,\tau'')  \text{ if } \tau''\not\in\mathit{Tot} \\ 
          \varsigma(K,\tau'')\cup \{\mathsf{unit}_k(e)~:~\langle e \rangle= \tau''\} \cup \{\mathsf{fc}_k(e)~:~\langle e \rangle=  \tau''\}   \text{ if }  \tau''\in\unit \\ 
          \varsigma(K,\tau'')\cup \{\mathsf{fc}_k(e)~:~\langle e \rangle= \tau''\}  \text{ otherwise } 
      \end{cases}$.
     
\end{itemize} 

First, let us notice that in our context, given a characterization for $\unit$ with respect to $\kb$, call it $\varphi$, then we can easily construct a characterization, call it $\varphi'$, for $\unit$ with respect to $\kb'$ of the form $$x_1,...,x_k\leftarrow \mathsf{unit}_k(x_1,...,x_k)\wedge \mathsf{fc}_k(x_1,...,x_k) \wedge\bigwedge_{\alpha \in \mathit{atm}(\varphi)}\alpha.$$ Vice versa, say that you have a characterization for $\unit$ with respect to $\kb'$ , then eliminating all occurrences of the predicates $\mathsf{unit}_k$ and $\mathsf{fc}_k$ from the formula, we will end up with a characterization for $\unit$. 

First, it is worth noting that $\tau \! \prec_{^\unit}^{_\kb} \! \tau'\iff \tau \! \prec_{^\unit}^{_{\kb'}} \! \tau'$. The previous result follows from the simple observation that given the two characterization for $\unit'=\unit\cup\{\tau\}$ and $\unit''=\unit\cup\{\tau'\}$ with respect to $\kb$, let them be $\varphi_{\tau}$ and $\varphi_{\tau'}$ respectively, then we can construct two characterization for $\unit'$ and $\unit''$ with respect to $\kb'$, let them be $\varphi'_{\tau}$ and $\varphi'_{\tau'}$, in the following way
$$\varphi'_{\tau}=x_1,...,x_k\leftarrow  \mathsf{fc}_k(x_1,...,x_k) \wedge\bigwedge_{\alpha \in \mathit{atm}(\varphi_{\tau})}\alpha$$
and
$$\varphi'_{\tau'}=x_1,...,x_k\leftarrow  \mathsf{fc}_k(x_1,...,x_k) \wedge\bigwedge_{\alpha \in \mathit{atm}(\varphi_{\tau'})}\alpha.$$

From the latter both the following are evident:
\begin{itemize}
    \item $\varphi_{\tau}\rightarrow\varphi_{\tau'} \iff \varphi'_{\tau}\rightarrow\varphi'_{\tau'}$
    \item $\varphi_{\tau'}\rightarrow\varphi_{\tau} \iff \varphi'_{\tau'}\rightarrow\varphi'_{\tau}$.
\end{itemize} Hence, it is also evident that $\tau \! \prec_{^\unit}^{_\kb} \! \tau'\iff \tau \! \prec_{^\unit}^{_{\kb'}} \! \tau'$.
The real advantage of the reduction just introduced is that we are easily certain (from a quick comparison of the formula $\varphi'$ with the two formulas $\varphi'_{\tau}$ and $\varphi'_{\tau'}$) that it is impossible for both tuples $\tau$ and $\tau'$ to be part of the essential expansion of $\unit$ with respect to $\kb'$. The later is an easy consequence  of the fact that each element of the unit, in its summary, has access to the $k$-ary predicate $\mathsf{unit}_k$ while both $\tau$ and $\tau'$ don't. It is therefore evident that, since $\tau \! \prec_{^\unit}^{_\kb} \! \tau'\iff \tau \! \prec_{^\unit}^{_{\kb'}} \! \tau'$, and we know that $\tau\not\in\mathit{ess}(\unit,\kb')$, then the instance $(\unit,\kb',\tau')$ is accepted for the problem $\neg$\textsc{next}$_k$. 
On the other hand, if $\tau \! \not\prec_{^\unit}^{_\kb} \! \tau'$, there can be no other tuple $\tau''$ such that $\tau'' \not \in \mathit{ess}(\unit,\kb'), \tau'\not \in \mathit{ess}(\unit \cup \{\tau''\},\kb')$ and $\tau'' \in \mathit{ess}(\unit \cup \{\tau'\} ,\kb')$. 

The last statement made follows from the fact that the only elements that have access to the predicate $\mathsf{fc}_k$ in their summary besides $\tau'$ are the tuples in $\unit$ and $\tau$. It is therefore evident that the only candidate that can be found among the elements in $\unit$ and $\tau'$ is only $\tau$.

What remains to be shown is that the new summary selector can be built without making complexity leaps and that its execution time turns out to be polynomial with respect to the input received. This is evident as we only have to do a linear scan of the input that allows us to reconstruct the initial dataset and build the new tuple on which to then relaunch the old summary selector, this second part will by hypothesis run in polynomial time, and finally, we will add the other elements always doing a linear scan of the input.
Since the three classes $\mathsf{DEXP}$, $\mathsf{DP}$ and $\mathsf{P}$ are closed under complement the theorem follows.
\end{proof}
}

\section{Conclusions and Future Work}
\label{sec:conclusions}

\nop{This work has tackled the challenge of enabling efficient exploration of taxonomic set expansions within the logical framework of Amendola et al.~\cite{AMENDOLA2024120331}, providing formal tools for principled and computationally grounded navigation of expansion graphs. Our results draw a sharp boundary between intractability—present in an evident way in the \emph{broad} case—and full tractability in the \emph{handy} case, where all key tasks, including the most demanding one, $\textsc{prec}$, become polynomial-time computable. This finding clarifies the conditions under which real-time, interactive exploration becomes not only theoretically viable but practically achievable.

In particular, the aforementioned results open several promising directions.  On the practical side, the tractability of the \emph{handy} case paves the way for interactive exploration tools. Ideally, a final implementation will allow a user to dynamically and on the fly navigate the expansion graph, choosing a new generalization direction at each step.
However, this is not a given, which brings us to talk about the other important open questions. From a theoretical point of view, in fact, while $\textsc{prec}$ captures the notion of generalization, it does not yet provide access to what comes immediately \emph{next} in the expansion hierarchy. We are currently studying a refined variant of this problem, $\textsc{next}$, aimed at finding minimal generalizations without intermediate steps – an open challenge whose solution could significantly improve both the theoretical depth and the practical utility of expansion graphs. Moreover, it would be interesting to identify new structural assumptions, going beyond the scenarios studied here, that reflect real-world knowledge and at the same time ensure computational efficiency, we are currently working in this direction as well.
The basic idea behind the \textsc{next} decision problem is to find out whether a specific tuple, not contained in the input unit, shares as many nexus of similarity with the given unit as possible even though it is not in the essential expansion of the unit. In other words, a tuple is in the \textsc{next} if and only if it is among those that, excluding all the tuples that are in the essential expansion, share as much information as possible with the initial unit.}

In this paper, we proposed and studied ---under different computational assumptions--- key reasoning tasks for navigating expansion graphs according to the formal semantics of the logical framework by Amendola et al.~\cite{AMENDOLA2024120331}.
%addressed the challenge of enabling efficient exploration of taxonomic  expansions of entity sets within the logical framework of Amendola et al.~\cite{AMENDOLA2024120331}.
%
%We have introduced formal tools for a principled and computationally grounded navigation of expansion graphs.
%
From our results, we can establish a clear dichotomy between the intractability of the general \emph{broad} case and the full tractability of the more realistic \emph{handy} one, where all tasks become polynomial-time computable. 
The tractability of the \emph{handy} case has significant practical implications and provides a clear road-map for future development. 
%
%A preliminary implementation of our framework is available at~\url{https://prode-2.alviano.net/v2025-05}, and our findings lay the groundwork for enhancing it into a fully-fledged interactive exploration tool. 
%
We envision a real-time system where users can dynamically navigate an expansion graph to compare entities according to the nexus of similarity they share with the group of entities given as input.
%thus transforming the abstract logical framework into a powerful instrument for knowledge discovery.
This effort will also involve systematic experimentation to test and compare different summary selectors and the definition of appropriate metrics to quantitatively evaluate the ``nexus of similarity''. For scenarios where tractability is not guaranteed, such as the \emph{broad} setting, a crucial research avenue is the design of approximation algorithms that can provide heuristically sound answers in reasonable time, trading completeness for scalability.

Beyond these practical applications, our work opens up several promising theoretical directions. A primary goal is to identify new structural properties, beyond the \emph{handy} case, that reflect real-world knowledge domains while preserving computational tractability. Furthermore, we plan to investigate the expressiveness-tractability trade-off by considering alternative explanation languages. While the current query language is effective, exploring the impact of adopting fragments of Description Logics, acyclic conjunctive queries, or queries with comparison operators is essential. Another vital line of research involves enriching summaries with intentional knowledge, in this way the framework could capture more nuanced nexus of similarity, even for entities with incomplete data. These extensions, combined with a broader computational analysis of additional reasoning tasks, will significantly advance the state of the art in structured knowledge exploration.

%\nop{The $\textsc{next}$ problem presents a significant but fascinating challenge. Intuitively, a tuple $\tau$ is an immediate successor of a unit~$U$ if it is a minimal addition that generalizes~$U$. The difficulty lies in verifying this minimality. Formally, a candidate tuple $\tau$ is \textbf{not} a $\textsc{next}$ element if there exists an intermediate tuple~$\tau'$ that acts as a stepping stone. This occurs if $\tau'$ is itself a non-trivial generalization (i.e., $\tau' \notin \textsc{ess}(U)$) but is subsumed when moving to~$\tau$ (i.e., $\tau' \in \textsc{ess}(U \cup \{\tau\})$). Deciding if such a~$\tau'$ exists is non-trivial, as it may require quantifying over all possible intermediate tuples, potentially through multiple calls to an $\textsc{ess}$ oracle. Overcoming this hurdle is a key objective of our current research, as it holds the promise of unlocking a much more powerful and fine-grained exploration of taxonomic knowledge.}

%E = ess(U \cup {t}) -> poly se arity(U) bounded
%F = can(U \cup {t})

\nop{
    not-NEXT (U, t)
    {
        for each  t' \in ess(U \cup {t}) - ess(U) do
            t \not\in ess(U \cup {t'})
    }
}

\nop{\section{Conclusions and Future Work}
\label{sec:conclusions}

In this paper, we have addressed one of the main practical challenges arising from the logical framework for characterizing nexus of similarity introduced by Amendola et al.~\cite{AMENDOLA2024120331}: the management and exploration of \emph{taxonomic set expansions} via expansion graphs. While these graphs provide a semantically rich representation of generalizations of sets of tuples of entities, their sheer size often renders them intractable for exhaustive materialization or analysis. Our research has focused on developing formal tools to enable local and efficient navigation of these complex structures.

The main contributions of this work can be summarized as follows:

\begin{itemize}
    \item \textbf{Definition of Navigational Primitives:} We have introduced and formalized three fundamental decision problems—\textsc{sim}, \textsc{prec}, and \textsc{inc}—that act as primitives for the incremental exploration of the expansion graph. These problems capture the essential semantic relationships between entities during the expansion of a starting set: equivalence (same node), generalization/specialization (path in the graph), and incomparability (distinct branches).

    \item \textbf{Rigorous Computational Complexity Analysis:} We have conducted a thorough complexity analysis of \textsc{sim}, \textsc{prec}, and \textsc{inc}, mapping the tractability frontier across three realistic computational settings (broad, medium, and handy). These results not only extend the existing theoretical landscape but also provide clear guidance on the practical feasibility of navigational strategies in different application contexts.

    \item \textbf{Strengthening of Theoretical Foundations:} In the course of our analysis, we have also strengthened and refined several theoretical results from the original framework. The introduction of new algorithmic techniques and tighter complexity bounds for the core characterization problems (\textsc{can} and \textsc{core}) contributes to a more mature understanding of the framework as a whole.
\end{itemize}

This work paves the way for several promising research directions, on both theoretical and practical fronts.

\paragraph{Theoretical Perspectives}
From a theoretical standpoint, we plan to explore the following areas:
\begin{itemize}
    \item \textbf{Extension of the Explanation Language:} \nop{While $\WCQ$ is expressive, we will investigate the impact of adopting alternative languages (e.g., fragments of Description Logics, acyclic conjunctive queries, or queries with comparison operators) on the complexity of the navigational primitives defined herein.}
    \item \textbf{Approximation Algorithms:} For scenarios where navigation problems are intractable (e.g., in the ``broad'' setting), a crucial direction is the design of approximation algorithms that can provide heuristically sound answers in reasonable time, trading completeness for scalability.
\end{itemize}

\paragraph{Practical and Applicative Perspectives}
On the practical side, our future priorities include:
\begin{itemize}
    \item \textbf{Development of Interactive Navigation Tools:} Our tractability results (particularly in the ``handy'' setting) lay the groundwork for developing prototypical tools. Such tools would allow a user to interactively navigate an expansion graph from a given set of entities, leveraging the \textsc{sim}, \textsc{prec}, and \textsc{inc} primitives to guide their exploration over large-scale knowledge bases such as DBpedia, Wikidata, or YAGO.
    \item \textbf{Methods for Defining Summary Selectors:} The framework's effectiveness critically depends on the definition of the summary selector ($\varsigma$). We intend to investigate data-driven or machine learning-based approaches to automatically derive meaningful summary selectors from the characteristics of the knowledge base or the user's context.
    \item \textbf{Integration of Quantitative Metrics:} Our primitives offer qualitative navigation. A natural extension is to integrate quantitative metrics (e.g., based on graph distance, the size of essential expansions, or information content measures) to enrich the navigation experience, allowing users to rank or weigh incomparable alternatives.
\end{itemize}

In summary, this work bridges the gap between the theoretical potential of the similarity characterization framework and its practical applicability by providing the essential computational building blocks to transform expansion graphs from theoretical constructs into navigable and interactive tools for knowledge exploration.}
\nop{\section*{Acknowledgments}

This work significantly contributed to the basic research activities of WP9.1: ``KRR Frameworks for Green-aware AI'', supported by the FAIR project (PE\_00000013, CUP H23C22000860006) -- Spoke 9, within the NRRP MUR program funded by NextGenerationEU. Additional support was provided by the NRRP MUR project ``Tech4You'' (ECS00000009, CUP H23C22000370006) -- Spoke 6, funded by NextGenerationEU; by the PRIN project PRODE ``Probabilistic Declarative Process Mining'' (Project ID 20224C9HXA, CUP H53D23003420006), funded by the Italian Ministry of University and Research (MUR); by the PN RIC project ASVIN ``Assistente Virtuale Intelligente di Negozio'' (F/360050/02/X75, CUP B29J24000200005), under the Italian National Program for Research, Innovation and Competitiveness; by the project STROKE 5.0 ``Progetto, sviluppo e sperimentazione di una piattaforma tecnologica di servizi di intelligenza artificiale a supporto della gestione clinica integrata di eventi acuti di ictus'' (F/310031/02/X56, CUP B29J23000430005), funded by the Italian Ministry of Enterprises and Made in Italy (MIMIT); by the NRRP project RAISE ``Robotics and AI for Socio-economic Empowerment'' (ECS00000035, CUP H53C24000400006), through the GOLD sub-project ``Management and Optimization of Hospital Resources through Data Analysis, Logic Programming, and Digital Twin''; and by the EI-TWIN project (F/310168/05/X56, CUP B29J24000680005), funded by the former Ministry of Industrial Development (MISE, now MIMIT).}
\section*{Acknowledgments}

\noindent This work was supported by the European Union -- NextGenerationEU under the National Recovery and Resilience Plan (NRRP) through the following projects: 
\textbf{FAIR} (PE\_00000013, CUP H23C22000860006, Spoke 9 -- WP9.1 ``KRR Frameworks for Green-aware AI''); 
\textbf{Tech4You} (ECS00000009, CUP H23C22000370006, Spoke 6); and 
\textbf{RAISE} (ECS00000035, CUP H53C24000400006, sub-project GOLD). 

\noindent We also acknowledge support from the Italian Ministry of University and Research (MUR) via the PRIN project \textbf{PRODE} (ID 20224C9HXA, CUP H53D23003420006). 

\noindent Finally, funding was provided by the Ministry of Enterprises and Made in Italy (MIMIT) under the projects: 
\textbf{ASVIN} (F/360050/02/X75, CUP B29J24000200005); 
\textbf{STROKE 5.0} (F/310031/02/X56, CUP B29J23000430005); and 
\textbf{EI-TWIN} (F/310168/05/X56, CUP B29J24000680005).
\bibliographystyle{alphaurl}
\bibliography{biblio_extended.bib}

\end{document}